\documentclass[sigconf]{acmart}
\def\thisismainpaper{0}   

\usepackage{amsmath}
\usepackage{multirow}
\usepackage{epsfig}
\usepackage{mathrsfs}
\usepackage{hyperref}
\usepackage{dsfont}
\usepackage{array}
\usepackage{amsthm}
\usepackage{booktabs}
\usepackage{blkarray}
\usepackage{enumerate}
\usepackage{url}
\usepackage{chemarrow}
\usepackage{float}
\usepackage{color}
\usepackage{subcaption}
\usepackage[lined,ruled,linesnumbered,noend]{algorithm2e}
\SetKwRepeat{Do}{do}{while}%
\usepackage{epsf,psfrag}
\usepackage{epsfig}
\usepackage{dsfont}
\usepackage{bm}
\usepackage{aligned-overset}
\if\thisismainpaper 1
\fi

\theoremstyle{definition}
\newtheorem{theorem}{Theorem}[section]
\newtheorem{lemma}[theorem]{Lemma}
\newtheorem{corollary}[theorem]{Corollary}

\newtheorem{definition}{Definition}

\newcommand{\E}{{\rm I\kern-.3em E}}

\newcommand{\EE}[1]{\mathbb{E}\!\left[#1\right]}
\def\E{\mathbb{E}}

\def\xbar{\overline{\bm{x}}}
\def\xbart{\overline{\bm{x}}^{(t)}}
\def\xbars{\overline{\bm{x}}^{(s)}}
\def\zit{\bm{z}_i^{(t)}}
\def\zis{\bm{z}_i^{(s)}}

\def\zbf{\bm{z}}
\def\xbf{\bm{x}}

\def\maxx{\max_{m}\norm{\bm{x}_m^{(0)}}}
\def\gis{g_i(\zbf_i^{(s)}; \xi_i^{(s)})}

\def\gjs{g_j(\zbf_j^{(s)}; \xi_j^{(s)})}

\def\nabfit{\nabla f_i(\zbf_i^{(t)})}
\def\nabfixbar{\nabla f_i(\xbart)}
\def\nabfxbar{\nabla f(\xbart)}
\def\maxgs{\max_{j}\norm{g_j(\zbf_j^{(s)}; \xi_j^{(s)}) }}

\def\nabfjs{\nabla f_j(\zbf_j^{(s)})}
\def\nabfjt{\nabla f_j(\zbf_j^{(t)})}

\def\zjt{\zbf_j^{(t)}}
\def\qit{Q_i^{(t)}}
\def\qis{Q_i^{(s)}}
\def\mt{M^{(t)}}
\def\ms{M^{(s)}}

\def\Rd{\mathds{R}^{d}}
\newcommand\norm[1]{\left\lVert#1\right\rVert}

\if\thisismainpaper 1
\fi

\title{
Optimizing Stochastic Gradient Push under Broadcast Communications
}

\author{Tuan Nguyen}
\affiliation{%
  \institution{Pennsylvania State University}
  \city{State College, PA}
  \country{USA}}
\email{tmn5319@psu.edu}

\author{Ting He}
\affiliation{%
  \institution{Pennsylvania State University}
  \city{State College, PA}
  \country{USA}}
\email{tinghe@psu.edu}

\begin{document}
\begin{abstract}
We consider the problem of minimizing the convergence time for decentralized federated learning (DFL) in wireless networks under broadcast communications, with focus on mixing matrix design. The mixing matrix is a critical hyperparameter for DFL that simultaneously controls the convergence rate across iterations and the communication demand per iteration, both strongly influencing the convergence time. Although the problem has been studied previously, existing solutions are mostly designed for decentralized parallel stochastic gradient descent (D-PSGD), which requires the mixing matrix to be symmetric and doubly stochastic. These constraints confine the activated communication graph to undirected (i.e., bidirected) graphs, which limits design flexibility. In contrast, we consider mixing matrix design for stochastic gradient push (SGP), which allows asymmetric mixing matrices and hence directed communication graphs. By analyzing how the convergence rate of SGP depends on the mixing matrices, we extract an objective function that explicitly depends on graph-theoretic parameters of the activated communication graph, based on which we develop an efficient design algorithm with performance guarantees. Our evaluations based on real data show that the proposed solution can notably reduce the convergence time compared to the state of the art without compromising the quality of the trained model. 
 \looseness=-1
\end{abstract}


\begin{CCSXML}
<ccs2012>
   <concept>
       <concept_id>10010147.10010919.10010172</concept_id>
       <concept_desc>Computing methodologies~Distributed algorithms</concept_desc>
       <concept_significance>500</concept_significance>
       </concept>
   <concept>
       <concept_id>10010147.10010257</concept_id>
       <concept_desc>Computing methodologies~Machine learning</concept_desc>
       <concept_significance>500</concept_significance>
       </concept>
   <concept>
       <concept_id>10002950.10003714.10003716.10011138</concept_id>
       <concept_desc>Mathematics of computing~Continuous optimization</concept_desc>
       <concept_significance>300</concept_significance>
       </concept>
 </ccs2012>
\end{CCSXML}

\ccsdesc[500]{Computing methodologies~Distributed algorithms}
\ccsdesc[500]{Computing methodologies~Machine learning}
\ccsdesc[300]{Mathematics of computing~Continuous optimization}

\keywords{Decentralized federated learning, mixing matrix design, stochastic gradient push.}

\settopmatter{printfolios=true}

\maketitle

\section{Introduction}

\emph{Decentralized federated learning (DFL)}~\cite{Lian17NIPS} is an emerging machine learning paradigm that allows distributed learning agents to collaboratively learn a shared model from the union of their local data without directly sharing the data or relying on centralized parameter servers. Compared to centralized {federated learning (FL)}~\cite{McMahan17AISTATS}, DFL agents  directly exchange model updates with their neighbors, which leads to better robustness by avoiding a single point of failure and better load balancing by spreading the communication loads across nodes without increasing the computational complexity~\cite{Lian17NIPS}. 

Meanwhile, DFL still faces significant challenges in bandwidth-limited edge networks. In such networks, communication cost often dominates the total training cost~\cite{chen2022federated}, triggering the need to reduce the cost of repeated parameter sharing between learning agents without compromising convergence. The problem has attracted significant attention from the research community, which proposed solutions that focused on either reducing the amount of data per communication through compression methods (e.g., \cite{Compression1}) or reducing the number of communications through hyperparameter optimization (e.g., \cite{Chiu23JSAC,Herrera25OJCS}) or adaptive communications (e.g., \cite{Singh20CDC}). In this work, we focus on hyperparameter optimization by designing a critical hyperparameter called \emph{mixing matrix}. Our solution can be combined with compression methods for further improvement. \looseness=-1

As a square matrix containing the weights used in local parameter aggregation, the mixing matrix plays a crucial role of controlling both the convergence rate in the number of iterations and the inter-agent communication demand per iteration, where each non-zero off-diagonal entry \emph{activates} an inter-agent communication. While a denser mixing matrix can better approximate the ideal mixing matrix for perfect consensus and hence allows DFL to converge in fewer iterations, it also activates more inter-agent communications that increases the wall-clock time per iteration, causing a nontrivial tradeoff. There is thus an active line of work on designing mixing matrices to minimize the total wall-clock time until convergence. Most existing works assumed simplistic measures of the communication time, e.g., the maximum degree~\cite{hua2022efficient,le2023refined} or the minimum number of matchings~\cite{MATCHA22,Chiu23JSAC} of the activated communication graph. While proportional to the communication time in wired networks under certain assumptions (e.g., all communications take equal time), such simplistic measures cannot capture the communication time in wireless networks with broadcast communications. To our knowledge, this problem only started to be addressed recently in \cite{Herrera25OJCS}, which proposed design algorithms to minimize the total number of \emph{collision-free transmission slots} until convergence. \looseness=-1
In this work, we consider mixing matrix design for minimizing the convergence time, measured by the total number of collision-free transmission slots as in \cite{Herrera25OJCS}, under broadcast communications. However, instead of using decentralized parallel stochastic gradient descent (D-PSGD) \cite{Lian17NIPS} as the learning algorithm as in \cite{Herrera25OJCS}, we adopt stochastic gradient push (SGP)~\cite{Assran19ICML}, which allows asymmetric mixing matrices and hence directed inter-agent communications. Our solution is built upon a new convergence theorem for SGP that extracts the convergence impact of mixing matrices into an explicit function of simple graph-theoretical parameters of the activated communication graph, which then allows tractable algorithm design. To our knowledge, this is the \emph{first work} that addresses the optimization of mixing matrices for SGP. 

\vspace{-0.5em}

\subsection{Related Works}

\textbf{Decentralized federated learning.} 
First explored by \cite{Lian17NIPS} through Decentralized Parallel Stochastic Gradient Descent (D-PSGD), DFL removes the central server in \cite{McMahan17AISTATS} and enables model training over peer-to-peer networks. 
Subsequent studies such as \cite{D2ICML18,ICMLhonor,Xin21} advance DFL in both algorithms and theories, with the main focus on improving the convergence rate measured in \#iterations.

\textbf{Communication cost reduction.} 
There are two general approaches for reducing the communication cost: reducing the cost per communication through compression, e.g., \cite{Compression1,Compression2,Compression3}, 
and reducing the number of communications, e.g.,  \cite{sysml19,Ngu19INFOCOM,Wang19JSAC}. 
The two approaches can be combined for further improvement \cite{Singh20CDC,Singh21JSAIT}. 
Instead of either activating all the links or activating none, it has been observed that better performance can be achieved by activating suitable subsets of links. To choose such subsets, \cite{Singh20CDC,Singh21JSAIT} proposed an event-triggered mechanism and \cite{MATCHA22,Chiu23JSAC} proposed to design randomized activation strategies with optimized probabilities. The latter approach was then extended to wireless networks by activating nodes instead of links under broadcast communications and interference constraints \cite{Herrera25OJCS}. 
In this work, we aim at optimizing the activated communication topology under broadcast communications and interference constraints as in \cite{Herrera25OJCS}, with the objective of minimizing the total wall-clock time until convergence. While minimizing the convergence time has been studied in a number of works \cite{MATCHA22,Chiu23JSAC,hua2022efficient,le2023refined}, doing so in a wireless setting with broadcast communications and interference was only addressed recently in \cite{Herrera25OJCS}, which designed \emph{undirected} (i.e., bidirected) communication graphs to minimize the total number of transmission slots under collision-free constraints. We adopt the same objective as \cite{Herrera25OJCS}, but enlarge the solution space to arbitrary \emph{directed} communication graphs by using SGP as the learning algorithm. As SGP uses the Push-Sum algorithm to correct bias \cite{Assran19ICML}, it allows asymmetric inter-agent parameter sharing, which is particularly beneficial in wireless networks due to effects like hidden terminals. As shown later (Section~\ref{sec:Performance Evaluation}), the enlarged solution space allows our solution to reach the same level of convergence in fewer transmission slots than \cite{Herrera25OJCS}, under the same interference constraints. 
While a few other works have also considered broadcast-based DFL \cite{Chen23SPAWC,Chiu23JSAC,Zhang26INFOCOM}, they had different objectives (e.g., maximizing \#successful links~\cite{Chen23SPAWC} or minimizing the energy consumption~\cite{Chiu23JSAC,Zhang26INFOCOM}). 

\textbf{Mixing matrix design in DFL.} The mixing matrix, i.e., the matrix containing the local aggregation weights, is a critical hyperparameter in DFL that controls the tradeoff between the convergence rate (in \#iterations) and the amount of communications per iteration. 
For D-PSGD~\cite{Lian17NIPS}, the impact of mixing matrices on the convergence rate is usually captured through the spectral gap~\cite{Lian17NIPS,neglia2020decentralized,jiang2023joint} or equivalent parameters~\cite{MATCHA22,Chiu23JSAC,Zhang26INFOCOM}, although recent studies have pointed out additional parameters that can also affect convergence, such as the effective number of neighbors~\cite{vogels2022beyond} and the neighborhood heterogeneity~\cite{le2023refined}. Based on the identified convergence parameters, several mixing matrix designs have been proposed to optimize the tradeoff between the convergence rate and the cost per iteration for D-PSGD~\cite{MATCHA22,Chiu23JSAC,Herrera25OJCS,hua2022efficient,le2023refined}. 
In comparison, the impact of mixing matrices on the convergence rate of SGP is  less understood, where existing works only analyzed the convergence rate for given mixing matrices \cite{Assran19ICML,Spiridonoff20JMLR, Zhou25TCNS}, but did not optimize the mixing matrices. This work fills the gap by addressing \emph{the mixing matrix design for SGP} for the first time, with the objective of minimizing the convergence time measured by the total number of transmission slots under collision-free constraints.    
While there are other training algorithms that allow asymmetric communications (e.g., stochastic push-pull~\cite{You25arXiv} and push-diging~\cite{Liang25JO}), we leave the mixing matrix design problem therein to future work. 



\subsection{Summary of Contributions}

We consider the mixing matrix design for broadcast-based DFL via SGP, with the objective of minimizing the wall-clock time until convergence measured by the total number of collision-free transmission slots. Our contributions are: 

1) We derive a new convergence theorem for SGP that explicitly summarizes the dependency of the number of iterations to reach convergence on the mixing matrices.

2) Based on the theorem, we extract a tractable objective for mixing matrix design as a closed-form function of several graph-theoretic parameters of the activated communication graph, including the maximum in/out-degree and the diameter. 

3) Based on the extracted objective function, we develop an efficient design algorithm that can construct a strongly-connected directed communication graph with guaranteed performance in terms of the objective value. 

4) We evaluate the proposed solution against benchmarks based on real wireless network topology and training data. Our results show that efficiently utilizing asymmetric parameter sharing as in the proposed solution can notably reduce the convergence time (by 11--45\%) compared to only using symmetric parameter exchanges as in existing works, without compromising the accuracy of the trained model.

\textbf{Roadmap.} Section~\ref{sec:Background and Formulation} provides background information and the problem formulation, Section~\ref{sec:Convergence Analysis} presents our convergence theorem, based on which Section~\ref{sec:Design Objective} extracts a tractable design objective, Section~\ref{sec:Algorithm} presents our design algorithm, Section~\ref{sec:Performance Evaluation} provides the performance evaluation, and Section~\ref{sec:Conclusion} concludes the paper.  
\if\thisismainpaper 1
\textbf{All the proofs are provided in Appendix~A.1--A.2 of \cite{Nguyen26arXiv_SGP}}.
\else
\textbf{All the proofs are provided in Appendix~\ref{appendix:Convergence Proof}--\ref{appendix:Proofs}}. 
\fi

\section{Background and Problem Formulation}\label{sec:Background and Formulation}

\subsection{Learning Objective}
Consider a network of $n$ nodes connected via a base topology $G = (V, E)$, where $V$ is the set of nodes ($|V| = n$) and $E$ is the set of node pairs which can communicate directly. We assume that $G$ is bidirected, i.e., $(i,j)\in E$ if and only if $(j,i)\in E$; we also assume each node to have a self-loop to indicate its access to local information.  
Each node $i \in V$ is associated with a (possibly non-convex) local objective function $f_i(\bm{x})$, which is a function of the parameter $\bm{x} \in \mathds{R}^d$ and the local dataset at node $i$. The objective of DFL is to collaboratively minimize the global objective function
\begin{align}\label{eq:F(x)}
f(\boldsymbol{x}) := \frac{1}{n}\sum_{i=1}^{n} f_{i}(\boldsymbol{x}),
\end{align}
which averages the local objectives across all nodes. 

\textbf{Remark:} Bidirected (or equivalently undirected) base topology is a common assumption in DFL. We inherit this assumption in this work (which allows us to orient activated links arbitrarily as in Alg.~\ref{Alg:Graph Design}), and leave the case of arbitrary directed base topology to future work.

\subsection{Learning Algorithms}\label{subsec:Decentralized Learning Algorithm}

\subsubsection{D-PSGD and its Limitation}

Decentralized learning algorithms typically work by combining local stochastic gradient descent with decentralized approximations of cross-node parameter averaging. In a commonly-used algorithm known as \emph{Decentralized Parallel Stochastic Gradient Descent (D-PSGD)}~\cite{Lian17NIPS}, this combination leads to an iterative algorithm which performs the following update in parallel at every node $i$:  
\looseness=-1
\begin{align}\label{eq:DecenSGD}
    \boldsymbol{x}^{(t+1)}_i = \sum_{j=1}^{n}W^{(t)}_{ij}(\boldsymbol{x}^{(t)}_j - \eta g_j(\boldsymbol{x}^{(t)}_j; \xi^{(t)}_j)),
\end{align}
Here $\bm{x}_i^{(t)}$ denotes the model parameter vector at node $i$ at the start of iteration $t$, $g(\bm{x}_i^{(t)}; \xi_i^{(t)})$ denotes the stochastic gradient computed at that node using a minibatch $\xi_i^{(t)}$ drawn from its local data, $\eta > 0$ denotes the learning rate, and  $\bm{W}^{(t)}=(W^{(t)}_{ij})_{i,j=1}^n$ is the $n \times n$ \emph{mixing matrix} used for parameter aggregation at iteration $t$, which should be topology-compliant (i.e., ${W}^{(t)}_{ij}\neq 0$ only if $(j,i)\in E$). 

The mixing matrix plays a crucial role in the learning performance. On one hand, since node $i$ requires communication from node $j$ in iteration $t$ only if ${W}^{(t)}_{ij} \neq 0$, the mixing matrix controls the communication pattern and hence the communication cost. On the other hand, the mixing matrix also controls the convergence rate. According to \cite{Lian17NIPS}, the mixing matrix should be \emph{symmetric and doubly stochastic}, which simplifies its design. For example, if all iterations use the same deterministic mixing matrix $\bm{W}$, then having a positive spectral gap, i.e.,\footnote{Here $\lambda_i(\bm{W})$ denotes the $i$-th largest eigenvalue of $\bm{W}$.}  
\begin{align}
\rho:= \max(|\lambda_2(\bm{W})|, |\lambda_n(\bm{W})|) < 1,
\end{align}
is sufficient to ensure that the cumulative effect of local averaging weighted by $\bm{W}$ converges to the effect of global averaging with uniform weights, i.e., $\lim_{t\to\infty} \bm{W}^t = {1\over n}\bm{1} \bm{1}^\top$, which leads to a guaranteed convergence for D-PSGD~\cite{Lian17NIPS}. Moreover, the smaller the $\rho$-parameter, the faster the convergence. This observation has inspired a line of works on mixing matrix design for D-PSGD \cite{MATCHA22,Chiu23JSAC,Herrera25OJCS,hua2022efficient,le2023refined}, which aim at optimizing the cost-convergence tradeoff by designing mixing matrices to minimize $\rho$ or equivalent parameters under a budget on the cost per communication round.  

However, the requirement of symmetry significantly restricts the flexibility in mixing matrix design. From a communication perspective, requiring $W^{(t)}_{ij}= W^{(t)}_{ji}$ means that node $i$ needs to send its parameter vector to node $j$ whenever it wants to include $j$'s parameter vector in its local parameter aggregation. Referring to setting $W^{(t)}_{ij}\neq 0$ as \emph{activating (directed) link $(j,i)$}, this means symmetric mixing matrix design can only activate links at the granularity of bidirected pairs $\{(i,j),(j,i)\}$, which limits the solution space.  \looseness=-1

\subsubsection{SGP}

To remove the above limitation, we consider a different learning algorithm called \emph{Stochastic Gradient Push (SGP)}~\cite{Assran19ICML}. The difficulty with asymmetric mixing matrices is that it is often impossible to converge towards global averaging through mixing alone. Fortunately, this problem can be addressed by an algorithm called Push-Sum, which only requires the mixing matrices to be \emph{column-stochastic} (i.e., each column sums to one), but \emph{not necessarily symmetric}. For such matrices, it is still possible to ensure convergence $\lim_{K\to\infty} \prod_{t=0}^K \bm{W}^{(t)} = \bm{\pi}\bm{1}^\top$ for some probability vector $\bm{\pi}$ under mild conditions, but $\pi_i$ may not equal $1/n$ ($\forall i\in V$), causing a bias in the averaging. Push-Sum solves this problem by maintaining an additional scalar parameter $w^{(t)}_i$ at each node $i$, with the initial value $w^{(0)}_i=1$. The update equation changes from \eqref{eq:DecenSGD} to 
\begin{align}
\bm{x}^{(t+1)}_i &= \sum_{j=1}^n W^{(t)}_{ij}\left(\bm{x}^{(t)}_j - \eta g_j\Big({\bm{x}^{(t)}_j\over w^{(t)}_j}; \xi^{(t)}_j\Big)\right), \label{eq:SGP - x}\\
w^{(t+1)}_i &= \sum_{j=1}^n W^{(t)}_{ij} w^{(t)}_j, \label{eq:SGP - w}
\end{align}
where $w^{(t)}_i$ tracks the cumulative bias and $\bm{z}^{(t)}_i:=\bm{x}^{(t)}_i/w^{(t)}_i$ is the \emph{de-biased} parameter used in gradient computation. Compared to \eqref{eq:DecenSGD}, the SGP update only requires the communication of one extra scalar $w^{(t)}_j$ for each $W^{(t)}_{ij}\neq 0$, which incurs negligible cost compared to the communication of parameter vector, but it allows mixing matrix design to activate directed links \emph{individually}, i.e., allowing $W^{(t)}_{ji}=0$ while $W^{(t)}_{ij}\neq 0$.  As shown later, such increased flexibility can lead to notably better cost-convergence tradeoff. 

\subsection{Assumptions for Convergence}\label{subsec:Convergence Assumptions}
Let $\overline{\bm{x}}^{(t)}:={1\over n}\sum_{i=1}^n \bm{x}^{(t)}_i$ denote the learned global model at iteration $t$, i.e., a global average of local models at this iteration. As the objective function $f$ is often non-convex for deep learning, the convergence criterion is based on gradient norm: for any \emph{required level of convergence} $\epsilon>0$, we say that SGP achieves $\epsilon$-convergence if\footnote{In this work, we use $\|\bm{a}\|$ to denote $\ell$-2 norm if $\bm{a}$ is a vector, and spectral norm if $\bm{a}$ is a matrix.}\looseness=-1 
\begin{align} 
\frac{1}{T} \sum_{t=0}^{T-1} \E[\|\nabla f(\boldsymbol{\overline{\bm{x}}}^{(t)})\|^2]\leq \epsilon,
\end{align}
which ensures the learned model to be sufficiently close to a local minimum of the global objective function.  

Given mixing matrices $(\bm{W}^{(t)})_{t=0}^{T-1}$, we use $E^{(t)}:=\{(j,i)\in E:\: W^{(t)}_{ij}\neq 0\}$ to denote the set of \emph{links activated at iteration $t$}, and $\delta$ to denote the \emph{minimum nonzero mixing weight}, defined as
    \begin{align}\label{eq:delta definition}
    \delta := \min_{t}\ \min_{(j,i):\, W^{(t)}_{ij} > 0} W^{(t)}_{ij}.
    \end{align}
The convergence of SGP can be guaranteed under the following assumptions:
\begin{enumerate}
    \item ($L$-smooth) Each local objective function $f_i({\xbf})$ is $L$-Lipschitz smooth, i.e.,  $\norm{\nabla f_i(\xbf) - \nabla f_i(\xbf')} \le L \norm{\xbf - \xbf'}$, $\forall \xbf, \xbf' \in \Rd$.
    \item (Bounded variance) There exists a constant  $\sigma^2$ such that \\
    $\E[\norm{g_i(\xbf; \xi) -\nabla f_i(\xbf)}^2] \leq \sigma^2$, $\forall i$ and $\forall \xbf \in \Rd$. 
    \item (Bounded heterogeneity) There exists a constant $\zeta$ such that
    $\frac{1}{n} \sum_{i=1}^n \norm{\nabla f_i(\xbf) - \nabla f(\xbf) }^2 \leq \zeta^2$, $\forall \xbf \in \Rd$.    
    \item (Mixing connectivity) There exist finite positive integers $B$ and $\Delta$, such that the graph $(V, \bigcup_{t = lB}^{(l+1)B - 1} E^{(t)})$ is strongly connected with diameter at most $\Delta$ for every $l\in \mathds{N}$.
\end{enumerate}
The same assumptions were made in \cite{Assran19ICML,Zhou25TCNS}. While SGP can be extended to guarantee convergence under bounded communication delays~\cite{Assran19ICML}, we assume the synchronized version as define in \eqref{eq:SGP - x}--\eqref{eq:SGP - w} in this work to focus on mixing matrix design.

\subsection{Cost Model}\label{subsec:Cost Models}

We focus on communication time as the cost measure, which usually dominates the computation time in decentralized settings and determines the overall convergence time. Specifically, let $G^{(t)}=(V,E^{(t)})$ denote the \emph{activated communication graph} at iteration $t$ and $\tau(G^{(t)})$ denote the required \emph{number of transmission slots} to complete the activated parameter transmissions. 
We use $\tau(G^{(t)})$ to measure the cost of iteration $t$, assuming one transmission slot as the time to complete a set of parallel parameter transmissions under a feasible communication schedule. 

We assume that each node has a half-duplex omnidirectional transceiver capable of broadcasting to all neighbors or receiving from one neighbor in any slot. Two directed links $(i,j)$ and $(k,l)$ can be scheduled in the same slot if and only if they satisfy \begin{enumerate}
    \item (Half-duplex constraint) $i\neq l$ and $j\neq k$, and
    \item (Interference constraint) $(i,l), (k,j)\not\in E$ if $i\neq k$.
\end{enumerate}
This is consistent with the communication model in \cite{Herrera25OJCS} and \cite{xing2021federated}. Under the above assumption, we can construct a \emph{conflict graph} of the activated links, denoted by $G^{(t)}_c=(E^{(t)}, C^{(t)})$, where $\big((i,j),(k,l)\big)$\\$\in C^{(t)}$ if and only if the links $(i,j)$ and $(k,l)$ cannot be scheduled in the same slot. Then $\tau(G^{(t)})$ is given by the minimum number of vertex‑independent sets in $G^{(t)}_c$ whose union covers all its vertices, i.e., the \emph{chromatic number} 
of the conflict graph.

\textbf{Remark:}
While the above model does not explicitly consider runtime dynamics such as transmission failures and retransmissions caused by noise or external interference, it can incorporate the mean effect of such dynamics by interpreting one transmission slot as the \emph{average time} to successfully complete a set of parallel parameter transmissions under constraints (1--2). Moreover, while the above model implicitly assumes the interference graph to be the same as the connectivity graph (i.e., $G$), this assumption can be easily relaxed by defining the interference constraint according to a separate interference graph, and our solution still applies.

\subsection{Design Objective}\label{subsec:Design Objective}

Generally, mixing matrix design faces an inherent tradeoff between (i) communicating more per iteration in order to converge in fewer iterations and (ii) communicating less per iteration at the cost of converging in more iterations.  To accelerate learning, we aim at designing possibly time-varying mixing matrices $(\bm{W}^{(t)})_{t=0}^{T-1}$ to \emph{minimize the total number of transmission slots}, i.e., 
\begin{align}\label{eq:max per-node energy}
\min_{(\bm{W}^{(t)})_{t=0}^{T-1}} \sum_{t=0}^{T-1} \tau(G^{(t)}),
\end{align}
until SGP reaches $\epsilon$-convergence for a given $\epsilon>0$. \looseness=0

\subsection{Motivating Experiment}\label{subsec:Motivating Experiment}

\begin{figure}[ht]
\vspace{-1em}
    \centering
    \begin{subfigure}[b]{0.23\textwidth}
        \centering
        \includegraphics[height = 1.3in,width=1\textwidth]{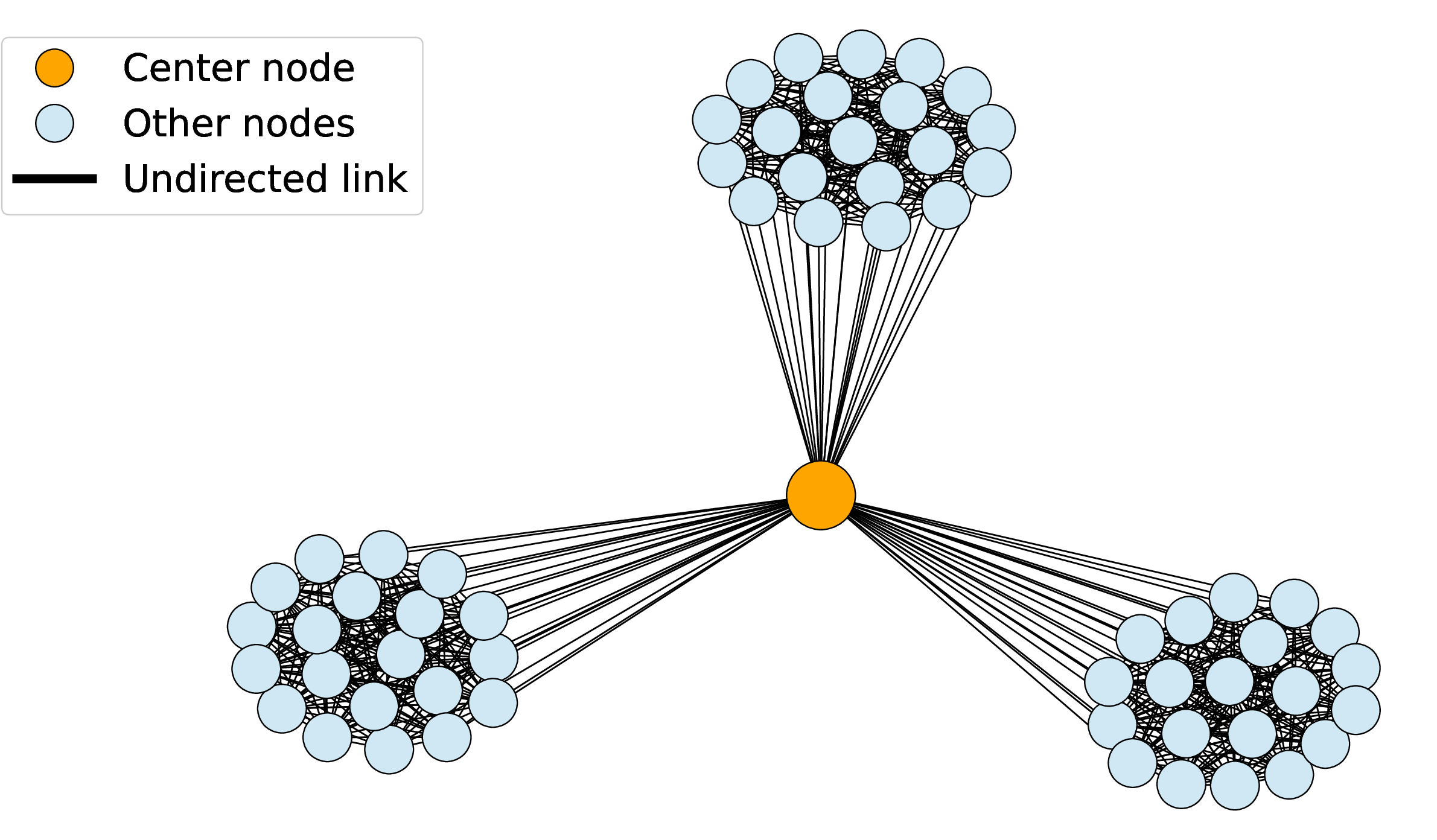}
        \vspace{-2em}
        \caption{Base topology}
        \label{fig:windmill-a}
    \end{subfigure}
    \hfill
    \begin{subfigure}[b]{0.2\textwidth}
        \centering
        \includegraphics[height = 1.3in, width=1.1\textwidth]{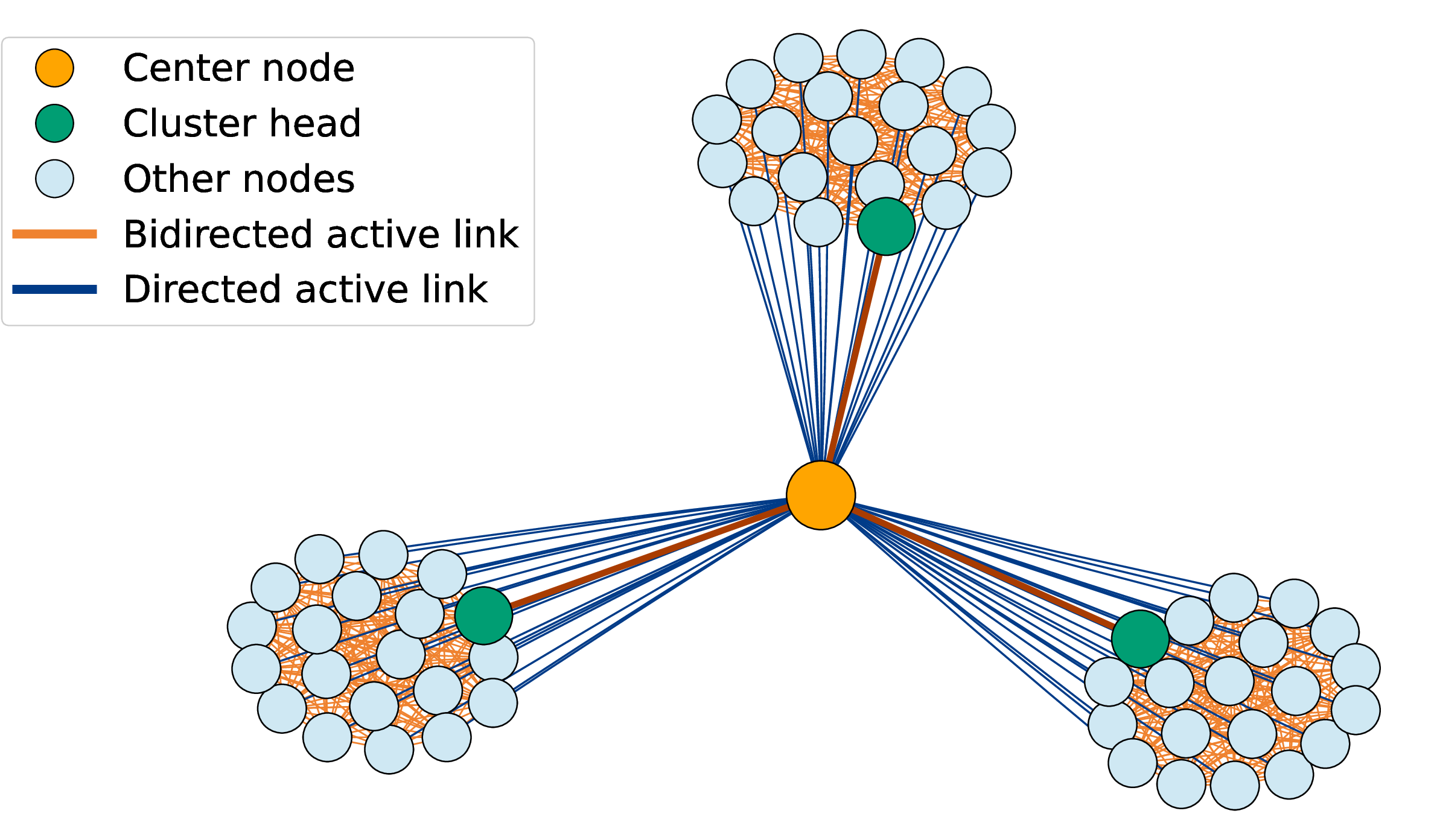}
         \vspace{-2em}
        \caption{Activated graph for SGP}
        \label{fig:windmill-b}
    \end{subfigure}
    \vspace{-1em}
    \caption{Topology for motivating experiment: each undirected link in (a) represents two directed links in opposite directions; (b) contains all the links in (a), except that only one node per cluster (highlighted) communicates to the hub.}
    \label{fig:windmill}
    \vspace{-.75em}
\end{figure}                      

\begin{figure}[ht]
    \centering    \centerline{\mbox{\includegraphics[height = 2.35in,width=1.0\linewidth]{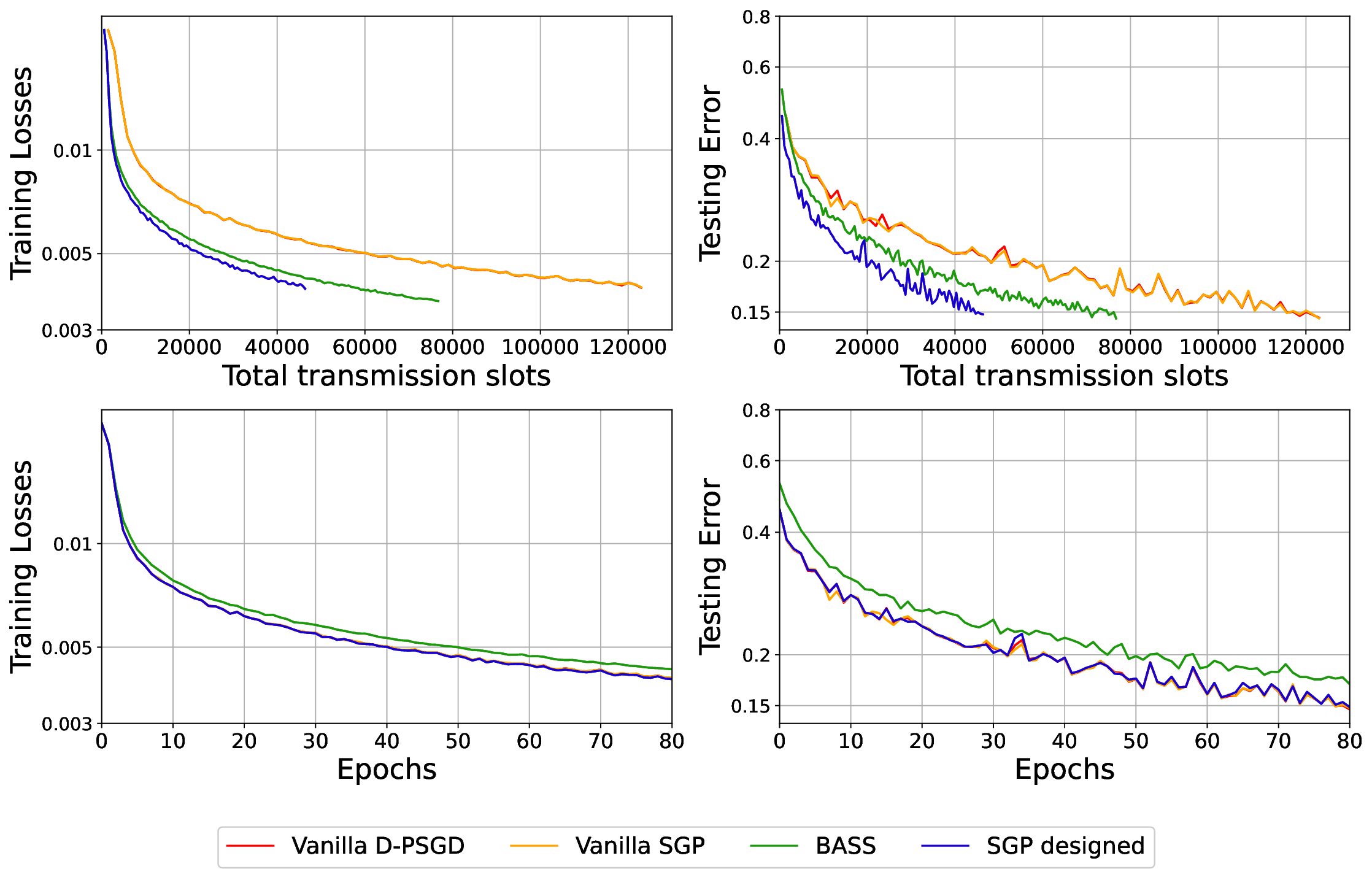}}}
    \vspace{-1em}
     \caption{ Motivating experiment based on FMNIST  over the topology in Fig.~\ref{fig:windmill} (23 slots per iteration for `BASS' and `SGP designed'). 
    }
    \label{fig:results_fmnist_windmill}
    \vspace{-1em}
\end{figure}


As a motivating example, we compare the convergence performance of state-of-the-art solutions and SGP based on a designed mixing matrix. Our experiment is based on the FMNIST dataset~\cite{FMNIST}, randomly distributed across the 61 nodes in a (3,21)-windmill graph  shown in Fig.~\ref{fig:windmill-a}, which contains three 21-node cliques sharing a common node. For example, this can model a wireless network with partitioned dense clusters bridged by a hub with line-of-sight links to all the other nodes. The learning task is to train a 4-layer CNN with 1,663,370 parameters as in \cite{McMahan17AISTATS}, with a batch size of 64 and a learning rate of $0.02$. 
We evaluate four solutions: (1) `Vanilla D-PSGD' using the base topology as the communication graph (i.e., all neighbors communicate) and Metropolis-Hastings weights; (2) `Vanilla SGP' using the base topology as the communication graph and column-wise uniform weights (i.e., equally spitting the unit weight of each column among the out-neighbors); (3) `BASS'~\cite{Herrera25OJCS}, a state-of-the-art mixing matrix design for D-PSGD under broadcast communications (based on `heuristic design'); (4) `SGP designed', which is SGP based on the communication graph in Fig.~\ref{fig:windmill-b} and column-wise uniform weights. This design activates all the directed links in the base topology, except for the incoming links to the hub from all but one node in each cluster (which functions as a ``cluster head'' to share information with other clusters through the hub). This is the design produced by our proposed algorithm (Alg.~\ref{Alg:Graph Design}); see Fig.~\ref{fig:example}e for the detailed explanations based on a simpler topology of the same type.   
The results, shown in Fig.~\ref{fig:results_fmnist_windmill}, suggest that: (i) while communicating according to the base topology as in `Vanilla D-PSGD/SGP' is good for maximizing the convergence rate measured in epochs, using a sparser communication graph like `BASS' or `SGP designed' can achieve faster convergence in  wall-clock time (measured in slots), and (ii) suitably utilizing the asymmetric parameter sharing capability of SGP can substantially reduce the convergence time compared to the state-of-the-art designs based on D-PSGD (specifically, `SGP designed' reduces the convergence time by {40\%} compared to `BASS' and {62\%} compared to `Vanilla D-PSGD/SGP' in achieving $85\%$ testing accuracy).  \looseness=-1


\section{Convergence Analysis}\label{sec:Convergence Analysis}

While the convergence of SGP has been analyzed in \cite{Assran19ICML,Spiridonoff20JMLR, Zhou25TCNS}, the existing convergence bounds have multi-term, highly complicated dependencies on the mixing matrices, which fail to provide a tractable objective for mixing matrix design. To fill this gap, we derive a new convergence bound for SGP based on the assumptions in Section~\ref{subsec:Convergence Assumptions} as follows.

\begin{theorem}\label{thm:SGP convergence}
    Let \(A := 2f(\xbar^{(0)}) - 2f^* + L\sigma^2\) and \( S := (\max_{i\in V} \norm{\bm{x}_i^{(0)}})^2 \)\\\(
    + n^2\sigma^2 + 3n^2\zeta^2.\) SGP with $\eta=\sqrt{n/T}$ under assumptions (1)--(4) in Section~\ref{subsec:Convergence Assumptions} achieves $\epsilon$-convergence (i.e., $\frac{1}{T} \sum_{t=0}^{T-1} \EE{ \norm{\nabla f(\overline{\xbf}^{(t)})}^2} \le \epsilon$) if the number of iterations $T$ satisfies T \( \geq \frac{24L^2C^2}{(1-q)^2\epsilon}S \)  for any \(\epsilon\) satisfying \looseness=-1
    \begin{align}\label{eq:condition on epsilon}
\hspace{-2em}&\frac{2(1-q)^2A^2}{3L^2C^2nS}
\le
\epsilon
\le
\min\left\{
\frac{4S}{3n^2},\;
\frac{24L^2C^2S}{n(1-q)^2}
\right\},
\end{align}
where \(C:=\frac{4}{\delta^{\Delta B}}\) and \(q:=(1-\delta^{\Delta B})^{\frac{1}{\Delta B}}\) are the only parameters depending on mixing matrices,  $\bm{x}^{(0)}_i$ is the initial parameter vector at node $i$, and $f^*$ is the optimal objective value. 
\end{theorem}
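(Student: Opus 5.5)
The plan follows the standard SGP analysis of \cite{Assran19ICML}, with all mixing-matrix dependence collapsed into the two constants $C$ and $q$.

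\textbf{Consensus-error bound.} First I would establish the usual consensus bound for products of column-stochastic matrices. Under assumption (4), and with every nonzero weight at least $\delta$, the product $\bm{P}^{(t:s)}:=\bm{W}^{(t)}\cdots\bm{W}^{(s)}$ over any window of $\Delta B$ iterations has all entries at least $\delta^{\Delta B}$. Hence the standard Push-Sum argument (Nedi\'c--Olshevsky) gives a vector $\bm{\phi}^{(t)}$ with
\begin{align*}
\bigl|P^{(t:s)}_{ij}-\phi^{(t)}_i\bigr|\le C q^{t-s}, \qquad w_i^{(t)}\ge \delta^{\Delta B}.
\end{align*}
Writing $\bm{z}_i^{(t)}-\overline{\bm{x}}^{(t)}$ through the unrolled recursion \eqref{eq:SGP - x}--\eqref{eq:SGP - w} then yields
\begin{align*}
\norm{\zit-\xbart}\le C q^{t}\max_m\norm{\bm{x}_m^{(0)}} + \eta C\sum_{s<t} q^{t-1-s}\sum_j\norm{\gjs}.
\end{align*}
I would take this as a lemma, essentially Lemma 3 of \cite{Assran19ICML}.

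\textbf{Summing the consensus error.} Next I would square this bound, take expectations, and sum over $t$. The geometric sums are controlled by Cauchy--Schwarz, giving factors $1/(1-q)$ and $1/(1-q)^2$. The gradient norms are split as
\begin{align*}
\norm{\gjs}^2\le 3\bigl(\sigma^2\text{-term}+\zeta^2\text{-term}+\norm{\nabla f(\cdot)}^2\text{-term}\bigr),
\end{align*}
where the last term is turned back into $\norm{\nabla f(\xbars)}^2$ plus $L^2\norm{\zbf_j^{(s)}-\xbars}^2$. This produces a self-referential inequality. Assuming $\eta^2 L^2C^2 n^2/(1-q)^2$ is small, the consensus error can be absorbed, leaving
\begin{align*}
\frac1T\sum_t\frac1n\sum_i\E\norm{\zit-\xbart}^2\lesssim \frac{C^2}{(1-q)^2}\Bigl(\frac{\max_m\norm{\bm{x}^{(0)}_m}^2}{T}+\eta^2 n^2(\sigma^2+3\zeta^2)\Bigr)+\text{(gradient term)}.
\end{align*}
This is where $S$ enters.

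\textbf{Descent lemma.} Because the matrices are column stochastic, the average evolves as $\overline{\bm{x}}^{(t+1)}=\xbart-\frac{\eta}{n}\sum_i g_i(\zit;\xi_i^{(t)})$. Applying $L$-smoothness gives the usual inequality
\begin{align*}
\E f(\overline{\bm{x}}^{(t+1)})\le \E f(\xbart)-\frac\eta2\E\norm{\nabla f(\xbart)}^2+\frac{\eta L^2}{2n}\sum_i\E\norm{\zit-\xbart}^2+\frac{L\eta^2\sigma^2}{2n}.
\end{align*}
Telescoping, substituting the consensus bound, and setting $\eta=\sqrt{n/T}$ yields
\begin{align*}
\frac1T\sum_t\E\norm{\nabla f(\xbart)}^2\le \frac{A}{\sqrt{nT}}+\frac{c_1L^2C^2 S}{(1-q)^2 T}+\cdots.
\end{align*}

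\textbf{Main obstacle.} I expect the difficulty to lie in the final bookkeeping rather than the analysis. The remaining work is to verify that, whenever $T\ge 24L^2C^2S/((1-q)^2\epsilon)$, each term is at most a fixed fraction of $\epsilon$. I would argue the three conditions in \eqref{eq:condition on epsilon} are designed for this purpose:
\begin{enumerate}
\item the lower bound on $\epsilon$ makes $A/\sqrt{nT}$ small;
\item the bound $\epsilon\le 4S/(3n^2)$ ensures the step-size smallness condition $\eta^2L^2C^2n^2/(1-q)^2$ needed in the absorption step;
\item the bound $\epsilon\le 24L^2C^2S/(n(1-q)^2)$ ensures $T\ge n$, so that $\eta\le1$.
\end{enumerate}
I would fix the numerical constants by working backwards from these conditions. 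The main risk is that the absorption step requires a slightly different smallness condition than the one stated, in which case I would adjust the splitting constants to match.
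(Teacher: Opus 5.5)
\textbf{Verdict: there is a genuine gap.} Your overall structure matches the paper's proof. You use a consensus lemma, split the gradients into noise, heterogeneity and $\nabla f(\overline{\bm{x}}^{(s)})$ parts, absorb the self-referential consensus term, apply the descent inequality of \cite{Assran19ICML}, and set $\eta=\sqrt{n/T}$. Your reading of the three conditions on $\epsilon$ is also exactly the paper's: they make $24L^2C^2S/((1-q)^2\epsilon)$ dominate $16A^2/(n\epsilon^2)$, $18C^2L^2n^2/(1-q)^2$ and $n$, respectively.

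The problem is the form of your consensus lemma. It costs a factor of $n$ that no re-splitting of constants can recover. With gradient term $\eta C\sum_s q^{t-1-s}\sum_j\|g_j(\bm{z}_j^{(s)};\xi_j^{(s)})\|$, squaring forces $(\sum_j\|g_j\|)^2\le n\sum_j\|g_j\|^2$. This is essentially tight: for example, $\mathbb{E}(\sum_j\|g_j-\nabla f_j\|)^2$ can be of order $n^2\sigma^2$. That is why you arrive at $\eta^2n^2(\sigma^2+3\zeta^2)$ and need $\eta^2L^2C^2n^2/(1-q)^2$ to be small. Two things then fail:
\begin{itemize}
\item With $\eta^2=n/T$, the noise term becomes $n^3(\sigma^2+3\zeta^2)/T$, not the $n^2\sigma^2+3n^2\zeta^2$ inside $S$.
\item The hypotheses $T\ge 24L^2C^2S/((1-q)^2\epsilon)$ and $\epsilon\le 4S/(3n^2)$ only give $T\ge 18L^2C^2n^2/(1-q)^2$. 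That means $\eta^2L^2C^2n^2/(1-q)^2\le n/18$, which is not small. So your claim (2) is false. The same $n^2$ coefficient reappears in front of $\sum_t\mathbb{E}\|\nabla f(\overline{\bm{x}}^{(t)})\|^2$ when the consensus bound is fed back into the descent inequality, so that absorption fails too.
\end{itemize}
Your route would prove an analogue of the theorem with $n^3$ in place of $n^2$, both in $S$ and in the upper bound on $\epsilon$. It does not prove the stated theorem.

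\textbf{The missing ingredient} is a consensus bound in which the gradients enter through a maximum over nodes, not an $\ell_1$ sum. The paper obtains this from an $\ell_2$ aggregate with a $1/\sqrt n$ prefactor. It takes Corollary~4 of \cite{Zhou25TCNS}, whose gradient term is $\frac{\eta C}{\sqrt n}\sum_{s=0}^t q^{t-s}(\sum_i\|g_i\|^2)^{1/2}$. Using $(\sum_i a_i^2)^{1/2}\le\sqrt n\max_i a_i$, this becomes $\eta C\sum_{s=0}^t q^{t-s}\max_j\|g_j\|$. After Cauchy--Schwarz, the paper bounds:
\begin{itemize}
\item $\mathbb{E}\max_j\|g_j-\nabla f_j\|^2\le n\sigma^2$;
\item $\mathbb{E}\max_j\|\nabla f_j(\bm{z}_j^{(s)})\|^2\le 3L^2\sum_i Q_i^{(s)}+3n\zeta^2+3\mathbb{E}\|\nabla f(\overline{\bm{x}}^{(s)})\|^2$, where $Q_i^{(s)}=\mathbb{E}\|\overline{\bm{x}}^{(s)}-\bm{z}_i^{(s)}\|^2$.
\end{itemize}
This gives per-node terms $3\eta^2C^2n\sigma^2/(1-q)^2$ and $9\eta^2C^2n\zeta^2/(1-q)^2$, which become $S$ after $\eta^2=n/T$. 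It also gives an absorption factor $P=1-9\eta^2C^2L^2n/(1-q)^2$ that is only linear in $n$. The condition $T\ge 18C^2L^2n^2/(1-q)^2$ is exactly what makes $P\ge 1/2$. It also gives $9\eta^2L^2C^2/(P(1-q)^2)\le 1/2$, which handles the feedback of $\|\nabla f(\overline{\bm{x}})\|^2$ into the descent inequality.
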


\begin{corollary}\label{cor:SGP convergence}
For $\delta^{\Delta B}\ll 1$ and $\epsilon$ satisfying \eqref{eq:condition on epsilon}, SGP under assumptions (1)--(4) in Section~\ref{subsec:Convergence Assumptions} achieves $\epsilon$-convergence when the number of iterations $T$ reaches
 \begin{align}
\underline{T} := \frac{24L^2C^2}{(1-q)^2\epsilon}S= O\left(\frac{\Delta^2 B^2}{\delta^{4\Delta B}}\right), \label{eq:T bound}
 \end{align}
 where the big-$O$ notation hides input parameters independent of the mixing matrices. 
\end{corollary}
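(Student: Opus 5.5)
The corollary follows from Theorem~\ref{thm:SGP convergence} by substituting the definitions of $C$ and $q$ and expanding for small $\delta^{\Delta B}$. Since $\epsilon$ satisfies \eqref{eq:condition on epsilon}, the theorem applies directly. It gives $\epsilon$-convergence for every $T\ge \frac{24L^2C^2}{(1-q)^2\epsilon}S$, and in particular at $T=\underline{T}$. The only remaining task is the asymptotic order of $\underline{T}$ in the mixing-matrix parameters $\delta,\Delta,B$. All other quantities ($L$, $S$, $\epsilon$, $n$, $\sigma$, $\zeta$) are treated as constants hidden in the $O(\cdot)$.

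\textbf{Step 1: the factor $C^2$.} By definition $C=4/\delta^{\Delta B}$, so $C^2=16/\delta^{2\Delta B}$ exactly.

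\textbf{Step 2: lower-bounding $1-q$.} Write $u:=\delta^{\Delta B}\in(0,1)$ and $m:=\Delta B\ge 1$, so that $q=(1-u)^{1/m}$. I will use the elementary inequality $(1-u)^{1/m}\le 1-u/m$, which is Bernoulli's inequality (equivalently, concavity of $x\mapsto x^{1/m}$ on $[0,1]$). It gives $1-q\ge u/m$, and hence
\begin{align*}
\frac{1}{(1-q)^2}\le \frac{m^2}{u^2}=\frac{\Delta^2B^2}{\delta^{2\Delta B}}.
\end{align*}
When $u\ll 1$ this bound is also tight to first order, since $1-(1-u)^{1/m}=u/m+O(u^2)$. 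This is where the hypothesis $\delta^{\Delta B}\ll 1$ is used: it shows the stated order is the actual order of $\underline{T}$, not merely an upper bound. For the $O(\cdot)$ claim itself, the inequality alone suffices.

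\textbf{Step 3: combine.} Putting the two steps together,
\begin{align*}
\underline{T}\le \frac{24\cdot 16\, L^2 S}{\epsilon}\cdot\frac{\Delta^2B^2}{\delta^{4\Delta B}}=O\!\left(\frac{\Delta^2B^2}{\delta^{4\Delta B}}\right).
\end{align*}

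The only point needing care is Step 2, namely justifying the bound on $1-q$ uniformly in $m$. Bernoulli's inequality handles this cleanly: for $r=1/m\in(0,1]$ and $x=-u\ge -1$, it states $(1+x)^r\le 1+rx$.
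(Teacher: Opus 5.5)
Your proof is correct and has the same skeleton as the paper's: invoke Theorem~\ref{thm:SGP convergence}, use $C^2=16/\delta^{2\Delta B}$ exactly, and control $1/(1-q)^2$. The only real difference is that last step. The paper sets $x=\delta^{\Delta B}$ and uses the first-order Taylor expansion $1-q=x/(\Delta B)+O(x^2)$ to conclude $1/(1-q)^2=O(\Delta^2B^2/x^2)$. That is where it needs $\delta^{\Delta B}\ll 1$. Strictly, the implication also requires the $O(x^2)$ remainder not to eat into the leading term $x/(\Delta B)$, which itself shrinks as $\Delta B$ grows; the paper does not address this. Your reverse Bernoulli inequality $(1-u)^{1/m}\le 1-u/m$ settles exactly this point, since it shows the remainder is nonnegative. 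You thereby get $1/(1-q)^2\le \Delta^2B^2/\delta^{2\Delta B}$ for every $\delta^{\Delta B}\in(0,1)$ and every $\Delta B\ge 1$. The result is the explicit, non-asymptotic bound $\underline{T}\le 384\,L^2S\,\Delta^2B^2/(\epsilon\,\delta^{4\Delta B})$, with no constants depending on $\Delta B$ and no use of the small-$\delta^{\Delta B}$ hypothesis. What the Taylor expansion (and your closing remark) adds is the matching lower order: $\Delta^2B^2/\delta^{4\Delta B}$ is the true growth of $\underline{T}$ when $\delta^{\Delta B}\ll 1$. As you note, that is the only place the hypothesis is genuinely needed.
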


\textbf{Remark:}
In large-scale learning, the minimum mixing weight $\delta$ is usually small and the diameter $\Delta$ of the activated communication graph is usually large, implying that $\delta^{\Delta B}\ll 1$, $C\gg 1$, and $1-q\ll 1$. Thus, the condition \eqref{eq:condition on epsilon} holds easily. In this case, Corollary~\ref{cor:SGP convergence} states that the number of iterations till convergence only depends on mixing matrices through: the \emph{number of iterations $B$} to reach strong connectivity, the \emph{diameter $\Delta$} of the communication graph over $B$ iterations, and the \emph{minimum mixing weight $\delta$}, in the form of \eqref{eq:T bound}.

\section{Simplification of Design Objective}\label{sec:Design Objective}

The clean bound in Corollary~\ref{cor:SGP convergence} provides us an explicit objective function for mixing matrix design. Considering periodic mixing matrix design, where $\bm{W}^{(lB + t)} = \bm{W}^{(t)}$ for all $l\in \mathds{N}$ and $t\in \{0,\ldots,B-1\}$, our goal is to design $B$ mixing matrices $(\bm{W}^{(t)})_{t=0}^{B-1}$ with the corresponding activated graphs $(G^{(t)})_{t=0}^{B-1}$, such that the total number of transmission slots, given by 
\begin{align}
{\underline{T}\over B}  \sum_{t=0}^{B-1}\tau(G^{(t)}) \propto
\frac{\Delta^2 B}{\delta^{4\Delta B}}\sum_{t=0}^{B-1}\tau(G^{(t)}), \label{eq:objective - 1}
\end{align}
is minimized. 
However, directly optimizing \eqref{eq:objective - 1} is intractable as it mixes topology design with weight design and involves an unknown number of topologies. Below, we will simplify this objective into a more tractable objective function to facilitate a solution. 

\subsection{Optimal Weight Assignment}\label{subsec:Uniform Weight}

As the objective \eqref{eq:objective - 1} depends on mixing weights only through the minimum weight $\delta$, it is easy to show that \emph{uniform weight assignment}, i.e., equally splitting the unit weight of each column among the activated out-neighbors, is optimal. 

Specifically, given an activated graph $G^{(t)}$, we define the activated out-neighborhood of node $j$ (excluding the self-loop) as:
    \[
    N_t^{+}(j) := \{\, i \in V\setminus\{j\} : (j, i)\in E^{(t)}\,\},      
    \]
and the activated out-degree as $d_t^{+}(j) := |N_t^{+}(j)|$. We have the following observation.

\begin{lemma}\label{lem: uniform weight}
Under given activated graphs $(G^{(t)})_{t=0}^{B-1}$ and column-stochastic mixing, the conditionally optimal value of $\delta$ is 
\begin{align}\label{eq: uniform weight bound}
   \max \; \delta =  \min_{t\in\{0,\dots,B-1\}} \ \min_{j\in V} \frac{1}{d_t^{+}(j)+1},
\end{align}
achieved when $W^{(t)}_{ij}=W^{(t)}_{jj}=1/ (d_t^{+}(j)+1)$ for all $j\in V$ and $(j,i) \in E^{(t)}$. 
\end{lemma}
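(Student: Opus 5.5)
The plan is a matching upper bound and achievability argument, carried out column by column. The quantity $\delta$ in \eqref{eq:delta definition} is a minimum over iterations $t$ and over nonzero entries of $\bm{W}^{(t)}$. So it suffices to show, for each fixed $t$ and each column $j$, two things: (i) the smallest nonzero entry of column $j$ is at most $1/(d_t^{+}(j)+1)$ under any admissible choice; (ii) the uniform choice attains this value exactly. Taking the minimum over $j$ and $t$ then gives \eqref{eq: uniform weight bound}. This works because the weights in different columns and different iterations are chosen independently: column-stochasticity only couples entries within a single column.

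For the upper bound, fix $t$ and $j$. Under the activated graph $G^{(t)}$, the nonzero entries of column $j$ are $W^{(t)}_{ij}$ for $i\in N_t^{+}(j)$, together with the diagonal entry $W^{(t)}_{jj}$. The diagonal entry is nonzero because every node retains a self-loop, reflecting its access to local information. That makes $d_t^{+}(j)+1$ positive entries, and by column-stochasticity they sum to $1$. A minimum of $k$ positive numbers summing to $1$ is at most their average $1/k$. Hence
\[
\min_{i:\,W^{(t)}_{ij}>0} W^{(t)}_{ij}\le \frac{1}{d_t^{+}(j)+1}.
\]
Taking the minimum over $j$ and $t$ shows that every admissible weight assignment satisfies $\delta\le\min_t\min_j 1/(d_t^{+}(j)+1)$.

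For achievability, set $W^{(t)}_{ij}=W^{(t)}_{jj}=1/(d_t^{+}(j)+1)$ for $(j,i)\in E^{(t)}$, and set all other entries to $0$. Each column then has exactly $d_t^{+}(j)+1$ equal entries summing to $1$, so the matrix is column-stochastic. Its support is exactly $E^{(t)}$, so it is compliant with the topology and activates precisely the given graph. Its smallest nonzero entry in column $j$ is $1/(d_t^{+}(j)+1)$. Minimizing over $j$ and $t$ matches the upper bound.

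The only delicate step is the treatment of the self-loop. The count $d_t^{+}(j)+1$ presumes that $W^{(t)}_{jj}>0$ is required, or at least is part of $E^{(t)}$. If the diagonal entry were allowed to be zero, the bound would become $1/d_t^{+}(j)$, which is larger. I will therefore state explicitly that each node is assumed to keep a self-loop in every activated graph, consistent with the base-topology convention in Section~\ref{sec:Background and Formulation}. I will also note that this assumption is what makes $d_t^{+}(j)$ exclude the self-loop while the denominator includes it.
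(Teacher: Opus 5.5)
Your proposal is correct and follows essentially the same column-by-column argument as the paper: column-stochasticity forces the $d_t^{+}(j)+1$ positive entries of column $j$ to sum to one, so their minimum is at most $1/(d_t^{+}(j)+1)$, with equality exactly under uniform weights, and minimizing over $j$ and $t$ gives the result. Your remark that the diagonal entry must be counted as a positive self-loop entry makes explicit an assumption the paper uses implicitly when it includes $j$ in the set $U_t(j)$.
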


\textbf{Remark:} Lemma~\ref{lem: uniform weight} implies that once the activated graphs are given, it suffices to simply assign mixing weights to the activated links uniformly as in Lemma~\ref{lem: uniform weight}.

\subsection{Optimal Period Length}

We will further show that instead of considering all the possible periods, it suffices to consider the special case of $B=1$, i.e., all the iterations use the same activated graph for parameter sharing. 

Let $G_a:= (V, \bigcup_{t=0}^{B-1}E^{(t)})$ denote the \emph{per-period activated graph}. We will use the following graph-theoretic notions in our derivation:
\begin{itemize}
    \item $d_a^+(j)$ and $d_a^-(j)$ denote the out/in-degree of node $j$ in $G_a$;
    \item $D_a^+$ and $D_a^-$ denote the maximum out/in-degree of $G_a$. 
\end{itemize}  

First, we show that for a given $G_a$ and a given $B\geq 1$, splitting the outgoing links in $G_a$ at the node with the maximum out-degree evenly among the $B$ iterations helps to optimize the minimum weight $\delta$. 

\begin{lemma}\label{lem: equally split}
    Given the period $B$ and the per-period activated graph $G_a$, we have that
     \begin{align}\label{eq: max tau}
            \max \; \delta \leq  \frac{1}{\left\lceil D^+_a/B\right\rceil+1}.
    \end{align}       
\end{lemma}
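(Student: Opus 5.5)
The plan is to reduce the claim to Lemma~\ref{lem: uniform weight} and then apply a pigeonhole argument at a node of maximum out-degree in $G_a$.

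First, for any activated graphs $(G^{(t)})_{t=0}^{B-1}$ whose union is $G_a$, Lemma~\ref{lem: uniform weight} gives
\[
\max\delta=\min_{t}\min_{j}\frac{1}{d_t^{+}(j)+1}=\frac{1}{\max_{t}\max_{j} d_t^{+}(j)+1}.
\]
So an upper bound on $\max \delta$ follows from a lower bound on $\max_t\max_j d_t^+(j)$ that holds for every decomposition of $G_a$ into $B$ per-iteration graphs. The outer maximization over all such decompositions then preserves the bound.

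Next, fix a node $j^*$ with $d_a^+(j^*)=D_a^+$. Here out-degrees exclude self-loops, which matches the definition of $N_t^+(j)$. By definition of $G_a$, every out-neighbor $i$ of $j^*$ in $G_a$ has $(j^*,i)\in E^{(t)}$ for at least one $t\in\{0,\dots,B-1\}$. Hence
\[
N_a^+(j^*)\subseteq\bigcup_{t=0}^{B-1}N_t^+(j^*),
\qquad\text{so}\qquad
\sum_{t=0}^{B-1} d_t^+(j^*)\ \ge\ D_a^+ .
\]
By pigeonhole, some $t$ has $d_t^+(j^*)\ge D_a^+/B$. Since out-degrees are integers, this gives $d_t^+(j^*)\ge\lceil D_a^+/B\rceil$. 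Substituting into the first display yields $\max\delta\le 1/(\lceil D_a^+/B\rceil+1)$, as claimed.

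The only step needing care is the reduction. The bound must hold for every choice of $(G^{(t)})$ consistent with $G_a$ and for every column-stochastic weight assignment, not just uniform weights. Lemma~\ref{lem: uniform weight} already covers the weight choice. For robustness, I would also note the direct argument: column $j^*$ at iteration $t$ has $d_t^+(j^*)+1$ positive entries summing to one, so its minimum positive entry is at most $1/(d_t^+(j^*)+1)$. Everything else is routine counting.
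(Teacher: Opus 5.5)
Your proof is correct and follows the paper's argument: reduce to Lemma~\ref{lem: uniform weight} so that $\max\delta = 1/(\max_t\max_j d_t^+(j)+1)$, then average the per-iteration out-degrees of a maximum-out-degree node of $G_a$ over the $B$ iterations and round up by integrality. The only difference is in how you get the degree sum. The paper first assumes without loss of generality that the sets $E^{(t)}$ are pairwise disjoint, which gives the equality $d_a^+(j)=\sum_t d_t^+(j)$. Your covering inequality $\sum_t d_t^+(j^*)\ge D_a^+$ makes that reduction unnecessary.
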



Plugging the bound in \eqref{eq: max tau} into \eqref{eq:objective - 1} yields a lower bound on the objective in \eqref{eq:objective - 1} under period length $B$, denoted by 
\begin{align}\label{eq: obj equal}
F(B):= \Big(\sum_{t=0}^{B-1}\tau(G^{(t)})\Big)\Delta^2 B
\left(\left\lceil \frac{D^+_a}{B}\right\rceil + 1\right)^{4\Delta B}.
\end{align}

Next, we show that this lower bound is minimized at $B=1$. 




\begin{lemma}\label{lem: B=1}
 Given a graph $G_a=(V, E_a)$ to be activated over a $B$-iteration period, $F(B)$ defined in \eqref{eq: obj equal} satisfies
 \begin{align}
F(B) \geq F(1) = \tau(G_a)\Delta^2 (1+D_a^+)^{4\Delta} \label{eq:F(1)}
 \end{align}
for any integer $B\geq 1$ and any $G^{(t)}=(V, E^{(t)})$ ($t\in\{0,\ldots,B-1\}$) such that $\bigcup_{t=0}^{B-1}E^{(t)}=E_a$.
\end{lemma}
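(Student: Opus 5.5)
The plan is to compare the three factors of $F(B)$ in \eqref{eq: obj equal} separately against the corresponding factors of $F(1)$, with $\Delta$ fixed. Here $\Delta$ is the diameter of the per-period graph $G_a$, which is the same for every $B$ because assumption (4) is imposed on $G_a$ itself. The pieces are the slot sum $\sum_{t=0}^{B-1}\tau(G^{(t)})$, the linear factor $B$, and the exponential factor $\left(\lceil D_a^+/B\rceil+1\right)^{4\Delta B}$. It suffices to show
\begin{align*}
\sum_{t=0}^{B-1}\tau(G^{(t)})\cdot B \ge \tau(G_a)
\quad\text{and}\quad
\Big(\Big\lceil \tfrac{D_a^+}{B}\Big\rceil+1\Big)^{B}\ge 1+D_a^+ .
\end{align*}
Raising the second inequality to the power $4\Delta$ and multiplying by $\Delta^2$ then gives $F(B)\ge F(1)$.

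\textbf{Slot sum.} I would prove $\sum_{t}\tau(G^{(t)})\ge \tau(G_a)$, which is stronger than needed since $B\ge1$. Concatenating optimal schedules works. Take a minimum coloring of each conflict graph $G_c^{(t)}$ into independent sets and collect all these sets over $t$. Their union covers $E_a=\bigcup_t E^{(t)}$.

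Each such set is still independent in the conflict graph of $G_a$. This holds because the half-duplex and interference constraints for a pair of links depend only on the two links and the base topology $E$, not on which other links are activated. The only subtlety is links appearing in several $E^{(t)}$; these are simply covered more than once, and restricting to a subfamily or deleting duplicates only helps. Hence $\tau(G_a)\le\sum_t\tau(G^{(t)})$.

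\textbf{Exponential factor.} Let $k=\lceil D_a^+/B\rceil\ge D_a^+/B$. Bernoulli's inequality gives $(1+k)^B\ge 1+Bk\ge 1+D_a^+$. This is valid since $k\ge0$ and $B\ge1$ is an integer.

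All factors are nonnegative, so the product inequality follows. At $B=1$ we have $G^{(0)}=G_a$ and $\lceil D_a^+\rceil=D_a^+$, so $F(1)=\tau(G_a)\Delta^2(1+D_a^+)^{4\Delta}$, matching \eqref{eq:F(1)}.

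The step I expect to need the most care is the subadditivity of $\tau$. It requires checking that conflicts are pairwise and determined by $E$ alone, i.e., that the conflict graph of a union of link sets is the induced subgraph of one global conflict graph on $E$. Once that is noted, it is just the observation that the chromatic number of a union of vertex sets of an induced subgraph family is at most the sum of the individual chromatic numbers.
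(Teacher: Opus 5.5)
Your proposal is correct and follows essentially the same route as the paper: you prove subadditivity $\tau(G_a)\le\sum_{t}\tau(G^{(t)})$ by concatenating optimal per-iteration schedules, you use Bernoulli's inequality to get $(\lceil D_a^+/B\rceil+1)^{B}\ge 1+D_a^+$, and you absorb the factor $B\ge 1$ into one side. Your explicit remark that conflicts are pairwise and determined by $E$ alone, so per-iteration independent sets stay independent in the conflict graph of $G_a$ and links repeated across iterations can be deduplicated, makes precise a point that the paper's concatenation argument leaves implicit.
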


Given a per-period activated graph $G_a=(V,E_a)$, activating all the links in $G_a$ in the same iteration and assigning weights uniformly achieve the lower bound on the objective function \eqref{eq:objective - 1} given by \eqref{eq:F(1)}. This observation greatly simplifies our mixing matrix design problem from designing $B$ $n\times n$ matrices to designing a single graph $G_a$ that connects all the nodes in $V$, with the objective of\looseness=-1
\begin{align}\label{eq: clean obj}
    \min_{G_a} \;\tau(G_a) \Delta^2(1+D^+_a)^{4\Delta}, 
\end{align}
after which we can construct a mixing matrix $\bm{W}$ via uniform weight assignment as in Section~\ref{subsec:Uniform Weight} to use in each iteration.

\subsection{Computable Cost Bound}\label{subsec:Cost Bounds}

The main difficulty in optimizing the new objective \eqref{eq: clean obj} is that the per-iteration communication cost $\tau(G_a)$ is not an explicit function of $G_a$ and is difficult to evaluate. In fact, for the cost model considered in Section~\ref{subsec:Cost Models}, even evaluating $\tau(G_a)$ for a given $G_a$ is NP-hard: 
by definition, $\tau(G_a)$ is the chromatic number of the conflict graph of $G_a$, which is NP-hard to compute~\cite{GareyJohnson1979}. 
This motivates us to relax $\tau(G_a)$ into a computable bound that explicitly depends on the structure of $G_a$. 

Let $G_c=(E_a, C_a)$ denote the \emph{conflict graph} of $G_a$, where each vertex denotes an activated link and each edge \(\big((i,j),(k,l) \big) \in C_a\) denotes a scheduling conflict\footnote{We use ``edge'' to refer to an undirected link in the graph-theoretic sense.}. Let $\chi(G_c)$ denote the \emph{chromatic number of $G_c$}, which by definition equals $\tau(G_a)$.  
We will use a few more graph-theoretic notions in the following derivation:
\begin{itemize}
    \item $D_c$ denotes the maximum degree of the conflict graph $G_c$ (note that $G_c$ is undirected);
    \item $D$ denotes the maximum in/out-degree of the base topology $G$ (note that the maximum in-degree equals the maximum out-degree as $G$ is bidirected).
\end{itemize}  

Base on Brooks' Theorem \cite{Diestel2017GraphTheory}, the conflict graph $G_c$ can be colored with at most $D_c$ colors except for two cases, complete graphs and cycle graphs of odd length, which require  $D_c+1$ colors. 
Hence, we always have 
\begin{align}
\tau(G_a) = \chi(G_c) \le D_c+1. \label{eq:tau <= Dc+1}
\end{align}
What remains is to connect $D_c$ to the structure of $G_a$, which will depend on the specific communication model.


Under the broadcast communication model assumed in Section~\ref{subsec:Cost Models}, two links $(i,j)$ and $(k,l)$ conflict if and only if they violate the half-duplex constraint or the interference constraint, which leads to the following bound on $D_c$.

\begin{lemma}\label{lem:Dc bound - broadcast}
Under the communication model assumed in Section~\ref{subsec:Cost Models}, the maximum degree of the conflict graph is bounded by\looseness=-1
\begin{align}
D_c \le\;
(D+1) \bigl(D_a^+ + D_a^-\bigr).\label{eq: Dc bound - broadcast}
\end{align}
\end{lemma}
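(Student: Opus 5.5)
The plan is a direct counting argument. Fix an arbitrary vertex of $G_c$, i.e., an activated link $(i,j)\in E_a$, and bound the number of activated links $(k,l)\in E_a\setminus\{(i,j)\}$ that conflict with it. Self-loops are not transmissions, so all activated links have distinct endpoints. By the model in Section~\ref{subsec:Cost Models}, $(k,l)$ conflicts with $(i,j)$ only if at least one of the following holds:
\begin{enumerate}
    \item[(a)] $l=i$;
    \item[(b)] $k=j$;
    \item[(c)] $i\neq k$ and $(i,l)\in E$;
    \item[(d)] $i\neq k$ and $(k,j)\in E$.
\end{enumerate}
Conditions (a)--(b) are the half-duplex violations and (c)--(d) the interference violations. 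Links with $k=i$ and $l\neq i$ (the same broadcast) are not conflicts, which is what makes a bound with only linear dependence on $D$ possible.

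I will group the four conditions by which endpoint of $(k,l)$ they constrain, and then apply a union bound.

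\textbf{Receiver side.} Conditions (a) and (c) both say that the receiver $l$ lies in $\{i\}\cup\{l:(i,l)\in E\}$. Since $G$ is bidirected with maximum out-degree $D$, this set has at most $D+1$ elements. Each such $l$ has in-degree at most $D_a^-$ in $G_a$, so at most $(D+1)D_a^-$ activated links fall under (a) or (c).

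\textbf{Transmitter side.} Conditions (b) and (d) both say that the transmitter $k$ lies in $\{j\}\cup\{k:(k,j)\in E\}$, which has at most $D+1$ elements. Each such $k$ has out-degree at most $D_a^+$ in $G_a$, so at most $(D+1)D_a^+$ activated links fall under (b) or (d).

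Adding the two counts, the degree of $(i,j)$ in $G_c$ is at most $(D+1)(D_a^++D_a^-)$. Because $(i,j)$ was arbitrary, taking the maximum gives \eqref{eq: Dc bound - broadcast}.

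The only delicate step is the bookkeeping of the neighborhood sizes. Since the paper says every node has a self-loop in $G$, I must check whether $D$ counts it. If it does, then $i$ itself may already be among its $G$-neighbors, and $\{i\}\cup N_G(i)$ still has at most $D+1$ elements. If it does not, the bound $1+D$ is immediate. Either way the union bound holds. The link $(i,j)$ itself may be counted among these candidates, but that only loosens the bound. Overlap between the receiver-side and transmitter-side counts is likewise harmless, since we only need an upper bound.
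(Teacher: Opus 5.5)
Your proposal is correct and is essentially the paper's argument: you fix an activated link $(i,j)$, then bound its degree in $G_c$ by summing activated in-degrees over the candidate receivers $\{i\}\cup N_G(i)$ and activated out-degrees over the candidate transmitters $\{j\}\cup N_G(j)$, where each candidate set has at most $D+1$ nodes. The only difference is bookkeeping: the paper groups terms by constraint type (half-duplex versus interference) and adds a redundant $d_a^-(j)-1$ term, arriving at $D\,D_a^+ + (D+1)D_a^- \le (D+1)(D_a^+ + D_a^-)$, whereas your grouping by endpoint gives the stated bound directly.
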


\subsection{Closed-form Design Objective}

Combining \eqref{eq:tau <= Dc+1} and \eqref{eq: Dc bound - broadcast} yields a per-iteration cost bound of
\[\tau(G_a)\leq (D+1)(D_a^+ + D_a^-)+1 \leq 2(D+1)(D_a^+ + D_a^-).\]
Replacing $\tau(G_a)$ in \eqref{eq: clean obj} by this upper bound leads to a new design objective: 
\begin{align} 
\min_{G_a} \quad  \bigl(D_a^+ + D_a^-\bigr) \Delta^2 (1+D_a^+)^{4\Delta}, \label{eq: G_a obj}
\end{align}
which is an explicit, easily computable function of the activated communication graph $G_a$ that only depends on $G_a$ through three of its graph-theoretic parameters: the maximum in-degree $D_a^-$, the maximum out-degree $D_a^+$, and the diameter $\Delta$. 

\section{Communication Graph Design}\label{sec:Algorithm}

The simplified objective \eqref{eq: G_a obj} reduces our mixing matrix design problem to a graph-theoretic problem of constructing a directed subgraph of $G$ to minimize \eqref{eq: G_a obj}, subject to the strong connectivity constraint imposed by assumption~(4) in Section~\ref{subsec:Convergence Assumptions}. 

\subsection{Design Algorithm}\label{subsec:Design Algorithm}

While our problem has an application-specific objective function \eqref{eq: G_a obj} different from the objectives of classical graph algorithms, the need of optimizing the graph-theoretic parameters $D_a^+$, $D_a^-$, and $\Delta$ motivates a 4-step algorithm as shown in Alg.~\ref{Alg:Graph Design}.  
We will leverage the following notions from graph theory. We use ``edge'' to refer to an undirected link and ``link'' to refer to a directed link. 

\begin{definition}
Given an undirected graph $H=(V(H), E(H))$,  
\begin{itemize}
    \item a \emph{minimum-degree spanning tree} $T$ of $H$ is a spanning tree of $H$ that minimizes the maximum node degree;
    \item a \emph{bridge} (a.k.a. cut edge) of $H$ is an edge in $E(H)$ whose removal will increase the number of connected components, and a \emph{bridge-connected component} of $H$ is a maximal subgraph not containing any bridge; 
    \item the \emph{preorder number $pre(v)$} of a vertex $v$ in $H$ is the order in $\{1,\ldots,|V(H)|\}$ that $v$ is first visited by a depth-first search (DFS) of $H$ from some root vertex.
\end{itemize}

\end{definition}

\textbf{Step~1: Construct a minimum-degree spanning tree.}
Motivated by the need to achieve connectivity with the minimum degrees, we start with the minimum-degree spanning tree of the base topology. However, since existing algorithms for constructing (approximate) minimum-degree spanning trees are designed for undirected graphs, we initially construct a spanning tree for the undirected version of the base topology (denoted by $G_u$), which will be converted into a directed graph later (Step~3).  

\textbf{Step~2: Add $K$ extra edges based on distance.}
Motivated by the need to minimize the diameter (i.e., the maximum distance between any two nodes), we add extra edges to connect nodes with the maximum distance, under connectivity constraints specified by the base topology. Since adding edges may increase the maximum degree, we use a design parameter $K$ to control this tradeoff, which can be optimized according to our design objective \eqref{eq: clean obj} or \eqref{eq: G_a obj}. 

\textbf{Step~3: Orient the edges.}
Using the connected undirected graph from Steps~1--2 as a basis, we construct a directed communication graph $G_a$ by orienting the edges.  Edges inside each bridge-connected component are oriented according to preorder numbers (lines~\ref{Graph Design:12}--\ref{Graph Design:16}): edges in the DFS tree are oriented in ascending preorder numbers (parent-to-child edges); edges not in the DFS tree are oriented in descending preorder numbers (back edges). We will show that such orientation turns each bridge-connected component into a strongly connected subgraph. Then we connect these components with bidirected links (line~\ref{Graph Design:17}) to form a strongly connected graph $G_a$. 

\textbf{Step~4: Cost-preserving link augmentation.}
It is possible that there are directed links not selected in Steps~1--3 that can also be activated without increasing the per-iteration communication cost. That is, given a feasible communication schedule for the $G_a$ constructed so far as a slot assignment $\mathcal{S}=\{S_1,\ldots,S_{\tau}\}$ (where each $S_t$ is a subset of the activated links that can be scheduled in the same slot without conflict), there may be other links that can be added to the schedule without causing conflict or increasing the schedule length. Intuitively, adding such links to $G_a$ will further accelerate convergence without costing more time per iteration. Thus, we augment $G_a$ with such non-conflicting links, prioritizing the link that minimizes the factor $\Delta^2 (1+D_a^+)^{4\Delta}$ in our design objective  that represents the iteration count (line~\ref{Graph Design:22}). However, since adding links to $G_a$ may degrade $\Delta^2 (1+D_a^+)^{4\Delta}$ due to the potential increase in $D_a^+$, we record the objective value achieved by Step~3 (line~\ref{Graph Design:record gamma}) and only add a non-conflicting link if it will not make the objective value worse (lines~\ref{Graph Design:begin If}--\ref{Graph Design:end If}).

\begin{algorithm}[tb]
\small
\SetKwInOut{Input}{input}\SetKwInOut{Output}{output}
\Input{A bidirected base topology $G=(V,E)$ and its undirected version $G_u=(V,E_u)$, \#added edges $K$}
\Output{A strongly connected communication graph $G_a$ for SGP}

\BlankLine
\textbf{Step 1: Construct a minimum-degree spanning tree}\;
\Indp
Compute a (possibly approximate) minimum-degree spanning tree $T$ of $G_u$, with edges $E(T)$\;
\Indm

\BlankLine
\textbf{Step 2: Add $K$ extra edges based on distance}\;
\Indp
\For{$k=1$ \KwTo $K$}{
   Select $(u^\star,v^\star)\in E_u\setminus E(T)$ with the maximum distance between $u^\star$ and $v^\star$ in $T$\;
   Add edge $(u^\star,v^\star)$ to $T$\;
}
\Indm

\BlankLine
\textbf{Step 3: Orient the edges}\; 
\Indp
Decompose $T$ into bridge-connected components, denoted by $\mathcal{C}$, connected by bridges $E_b$\; \label{Graph Design:8}
Initialize an empty digraph $G_a$ on the vertex set $V$\;
\ForEach{bridge-connected component $C \in \mathcal{C}$}{
    Perform DFS on $C$ from any vertex to compute a DFS tree $T_C$ and a preorder number $pre(\cdot)$ for each vertex in $C$\;
\ForEach{edge $(u,v)$ in $C$ with $pre(u)<pre(v)$ \label{Graph Design:12}}
{\If{$(u,v)$ is in $T_C$}
{Add directed link $(u,v)$ to $G_a$\;} 
\Else 
{Add directed link $(v,u)$ to $G_a$\;}
}\label{Graph Design:16}  
}
Add the directed links in both directions for each bridge in $E_b$ to $G_a$\; \label{Graph Design:17}
\Indm

\BlankLine
\textbf{Step 4: Cost-preserving link augmentation}\;
\Indp
Compute a feasible communication schedule for $G_a$ in terms of slot assignment $\mathcal{S}=\{S_1,\dots,S_\tau\}$\;
Compute $\gamma\leftarrow \Delta^2 (1+D_a^+)^{4\Delta}$\; \label{Graph Design:record gamma}
\While{$\exists$a candidate link $e$ in $E\setminus E(G_a)$ that can be added to the schedule $\mathcal{S}$ without conflict \label{Graph Design:20}}{
Select the non-conflicting candidate link $e^*$ that will minimize $\Delta^2 (1+D_a^+)^{4\Delta}$ if added to $G_a$\; \label{Graph Design:22}
\If{adding $e^*$ to $G_a$ will make $\Delta^2 (1+D_a^+)^{4\Delta} \leq \gamma$\label{Graph Design:begin If}}
{Add $e^*$ to $G_a$ and assign it to a non-conflicting slot\; \label{Graph Design:23}}
\Else{Break\;}\label{Graph Design:end If}
}
\Indm
\Return{$G_a$}\;
\caption{Communication Graph Design for SGP 
}
\label{Alg:Graph Design}
\vspace{-.05em}
\end{algorithm}



\subsection{Algorithm Analysis}

\subsubsection{Performance Analysis}

Alg.~\ref{Alg:Graph Design} is guaranteed to provide a feasible solution with guaranteed performance in terms of our design objective, as stated below. 

\begin{theorem}\label{thm:graph design guarantee}
Let 
$D^*$/$\Delta^*$ denote the maximum degree/diameter of the minimum-degree spanning tree of the undirected version of the base topology. Suppose that the optimal minimum-degree spanning tree is computed in Step~1. 
Then under an input parameter $K$ that minimizes \eqref{eq: G_a obj}, a simplified version of Alg.~\ref{Alg:Graph Design} that skips Step~4 returns a strongly connected graph $G_a$, for which 
\begin{align}
\mbox{the value of \eqref{eq: G_a obj}} \leq 2D^* (\Delta^*)^2  (1+D^*)^{4\Delta^*}.  \label{eq:obj bound}
\end{align}
\end{theorem}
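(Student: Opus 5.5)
Since $K$ is chosen to minimize \eqref{eq: G_a obj}, the value attained under the optimal $K$ is at most the value attained under $K=0$. It therefore suffices to analyze $K=0$, where Step~1 returns the optimal minimum-degree spanning tree $T$ of $G_u$, with maximum degree $D^*$ and diameter $\Delta^*$, and Step~3 orients it. The proof then has two parts: (i) for every $K$, Step~3 yields a strongly connected $G_a$, which gives feasibility; and (ii) for $K=0$, the three graph parameters of $G_a$ satisfy $D_a^+\le D^*$, $D_a^-\le D^*$ and $\Delta\le\Delta^*$. Plugging these into \eqref{eq: G_a obj} gives $(D_a^++D_a^-)\Delta^2(1+D_a^+)^{4\Delta}\le 2D^*(\Delta^*)^2(1+D^*)^{4\Delta^*}$, since the objective is monotone nondecreasing in each of $D_a^+$, $D_a^-$ and $\Delta$.

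For (i), I would use the classical Robbins-type argument. Within a bridge-connected (2-edge-connected) component $C$, tree edges of the DFS tree $T_C$ are oriented parent-to-child and non-tree edges are oriented from descendant to ancestor. In an undirected DFS every non-tree edge joins an ancestor–descendant pair, so this matches the descending-preorder rule in Alg.~\ref{Alg:Graph Design}. The root reaches every vertex along tree links, so it remains to show that every vertex $v$ reaches the root. Proceed by induction on the depth of $v$. Because the tree edge from $v$ to its parent is not a bridge, some back edge leaves the subtree of $v$ and lands at a proper ancestor of $v$. Within the subtree, $v$ reaches the lower endpoint of that back edge via tree links, then climbs to the strictly shallower ancestor, and the inductive hypothesis finishes the argument. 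Components are then joined by bidirected bridges; the bridge tree is connected because $T$ spans $V$, so $G_a$ is strongly connected.

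For (ii) with $K=0$, the graph $T$ is a tree, so every edge is a bridge and every bridge-connected component is a singleton. Step~3 therefore makes $G_a$ the bidirected version of $T$. Each node's in- and out-degree (excluding the self-loop, as in the definition of $d_t^+$) equals its tree degree, so $D_a^+=D_a^-=D^*$. Directed distances equal undirected tree distances, so $\Delta=\Delta^*$.

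The main obstacle is only the careful strong-connectivity argument in (i), needed because the theorem asserts feasibility for the returned graph under whatever $K$ minimizes the objective. The degree and diameter bookkeeping is trivial once the reduction to $K=0$ is made. I would also note that the factor $2$ in \eqref{eq:obj bound} arises exactly from $D_a^++D_a^-=2D^*$.
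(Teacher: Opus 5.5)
Your proposal is correct and follows the paper's proof essentially step for step. Strong connectivity is shown for every $K$ by orienting each bridge-connected component along its DFS tree (tree links downward, back links upward) and joining components with bidirected bridges; then $K=0$ gives the bidirected minimum-degree spanning tree with $D_a^+=D_a^-=D^*$ and $\Delta=\Delta^*$, and optimality of $K$ carries the bound over, while your induction on depth (the parent tree edge is not a bridge, so some back link leaves the subtree of $v$ for a proper ancestor) is just a tighter version of the paper's ``repeatedly applying'' step.
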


\textbf{Remark~1:} Since finding the optimal minimum-degree spanning tree is NP-hard~\cite{Furer92SODA}, in practice we can only compute an approximation to the minimum-degree spanning tree in Step~1. In this case, the performance guarantee in Theorem~\ref{thm:graph design guarantee} still holds, except that $D^*$ and $\Delta^*$ denote the maximum degree and diameter of the spanning tree constructed in Step~1. 

\textbf{Remark~2:} While the link augmentation in Step~4 may degrade our simplified design objective \eqref{eq: G_a obj} due to the potential increase in $D_a^+$ and $D_a^-$, it is guaranteed to maintain or improve the original design objective \eqref{eq: clean obj}. More importantly, our ablation study confirms that this step can significantly improve the actual convergence rate; 
\if\thisismainpaper 1
see Appendix~A.3 in \cite{Nguyen26arXiv_SGP}. 
\else
see Appendix~\ref{appendix:Additional Evaluation}. 
\fi

\subsubsection{Complexity Analysis}

Let $n = |V|$ and $m = |E|$ for the base topology. Step~1 runs in time $O(nm^2)$ using the heuristic minimum-degree spanning tree algorithm from \cite{Furer92SODA}. Step~2 can be completed in time $O\bigl(Kn(n\log{n}+m)\bigr)$ by computing all-pair shortest paths for each $k$ before selecting $(u^\star,v^\star)$. Step~3 runs in $O(n+m)$ because all its operations run in linear time. In Step~4, computing a feasible communication schedule can be completed in $O(m^2)$ time as the conflict graph construction and the vertex coloring  of the conflict graph (by a greedy algorithm) both take $O(m^2)$ time. During each loop in line~\ref{Graph Design:20},  each of the $O(m)$ candidate links is checked against $O(m)$ activated links for conflict, and if a non-conflicting slot is found, we need to recompute $\Delta$ and $D^+_a$ after adding this link, which takes $O(n(m+n\log{n}))$ time. This means each loop can complete in $O(mn(m+n\log{n}))$ time. As at most $m$ links can be added, the running time for Step~4 is $O(m^2n(m+n\log{n}))$. 
The overall time complexity of Alg.~\ref{Alg:Graph Design} is 
$O(m^2 n(m+n\log{n}))$, dominated by the \textbf{while} loop in Step~4. Thus, the complexity of Alg.~\ref{Alg:Graph Design} is polynomial in the size of the base topology.

\subsection{Illustrative Example}
Fig.~\ref{fig:example} illustrates the main steps of Alg.~\ref{Alg:Graph Design} on a \((2,6)\)-windmill graph. 
Starting from the base topology in Fig.~\ref{fig:ex-a}, the algorithm first constructs a spanning tree that  minimizes the maximum node degree, as shown in Fig.~\ref{fig:ex-b}. 
Next, in Fig.~\ref{fig:ex-c}, we add \(K=2\) edges to reduce the diameter while preserving a sparse structure. 
The resulting undirected graph is then oriented in Fig.~\ref{fig:ex-d} to produce a strongly connected directed graph. 
Finally, Fig.~\ref{fig:ex-e} augments this directed graph with non-conflicting links while preserving the communication cost in terms of the required number of transmission slots, which yields the designed communication graph \(G_a\). 

\begin{figure}[ht]
\vspace{-.5em}
    \centering
    \begin{subfigure}[b]{0.25\textwidth}
    \centering
      \includegraphics[height=1.1in,width=1.5\textwidth]{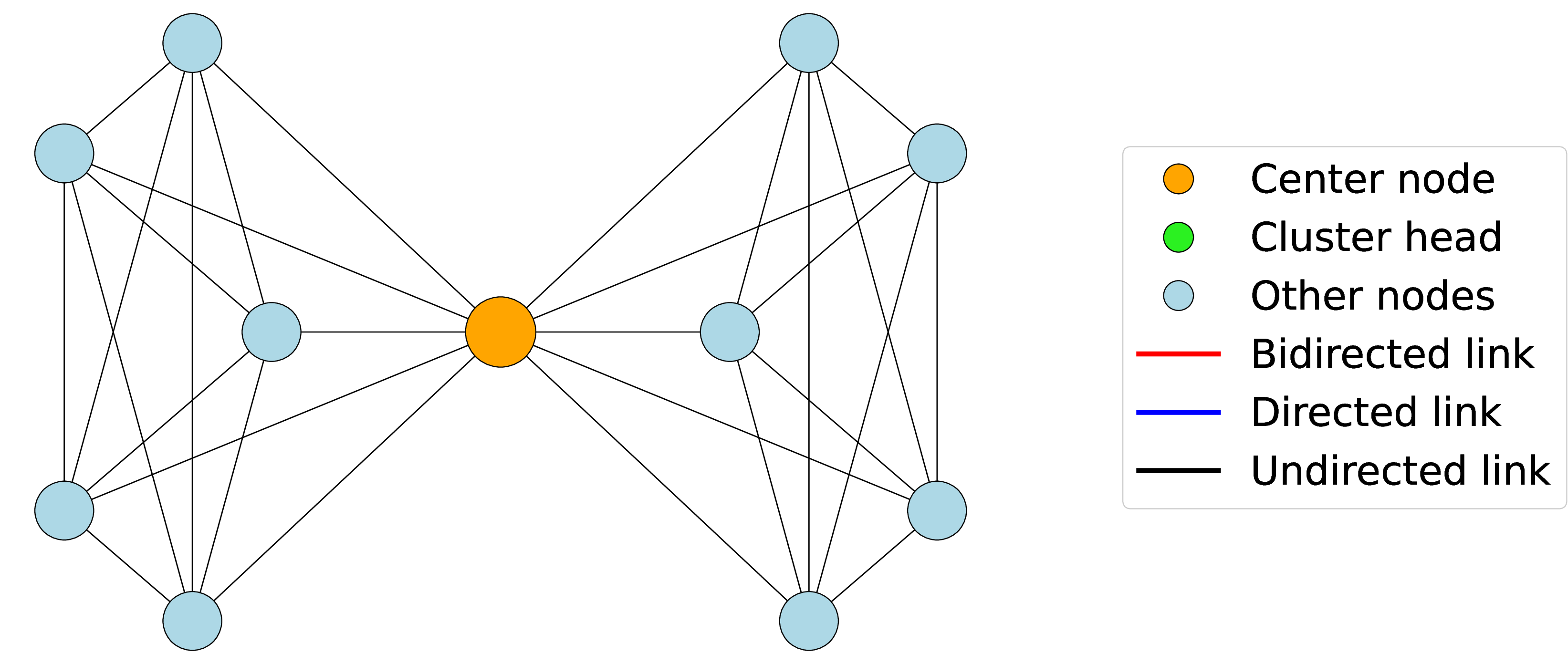}
        \vspace{-1.8em}
        \caption{Base topology}
        \label{fig:ex-a}
    \end{subfigure}
    \hfill
    \begin{subfigure}[b]{0.23\textwidth}
        \centering
        \includegraphics[width=\textwidth]{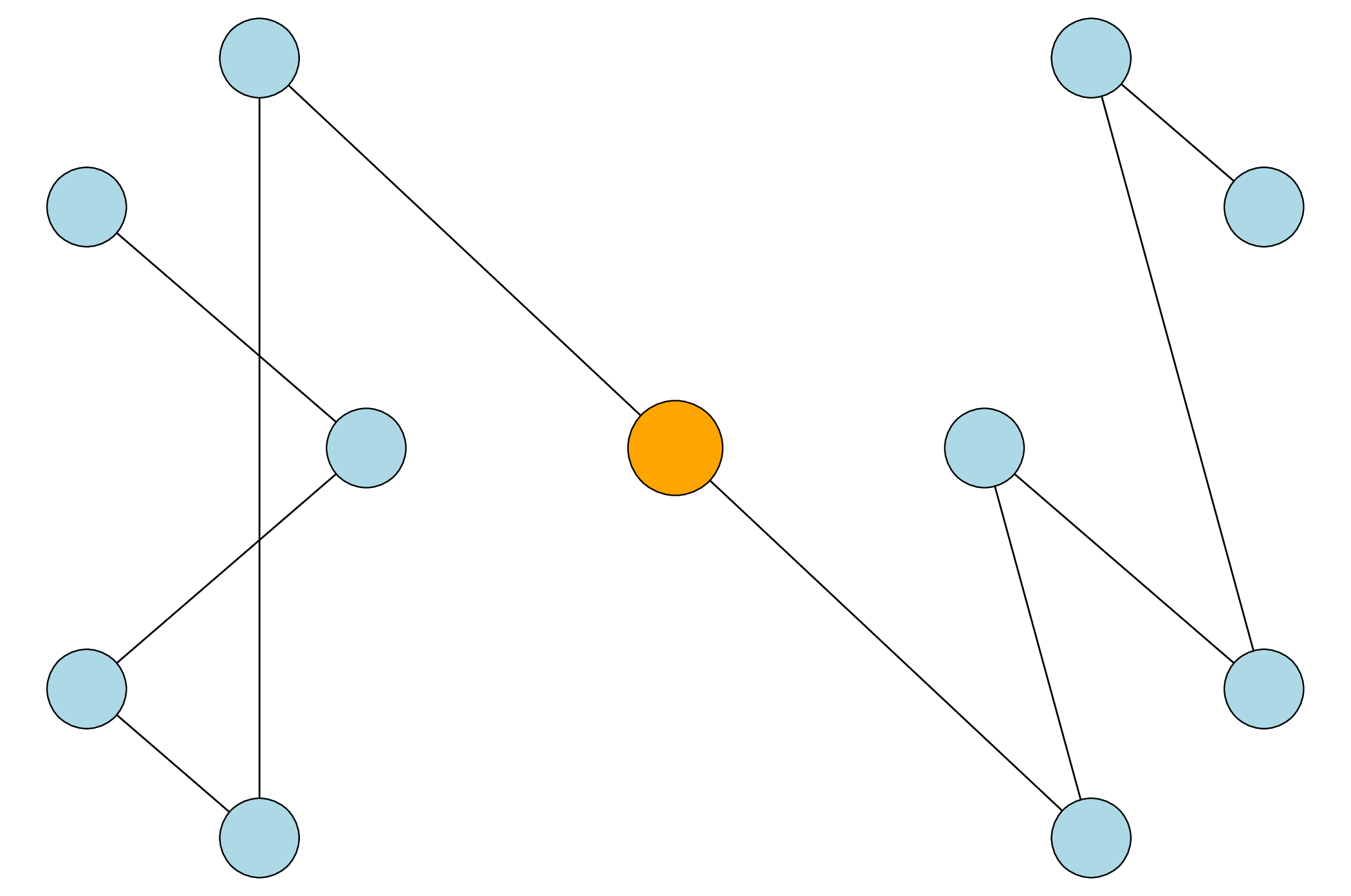}
         \vspace{-2em}
        \caption{Step 1}
        \label{fig:ex-b}
    \end{subfigure}
    \hfill
    \begin{subfigure}[b]{0.23\textwidth}
        \centering
        \includegraphics[width=\textwidth]{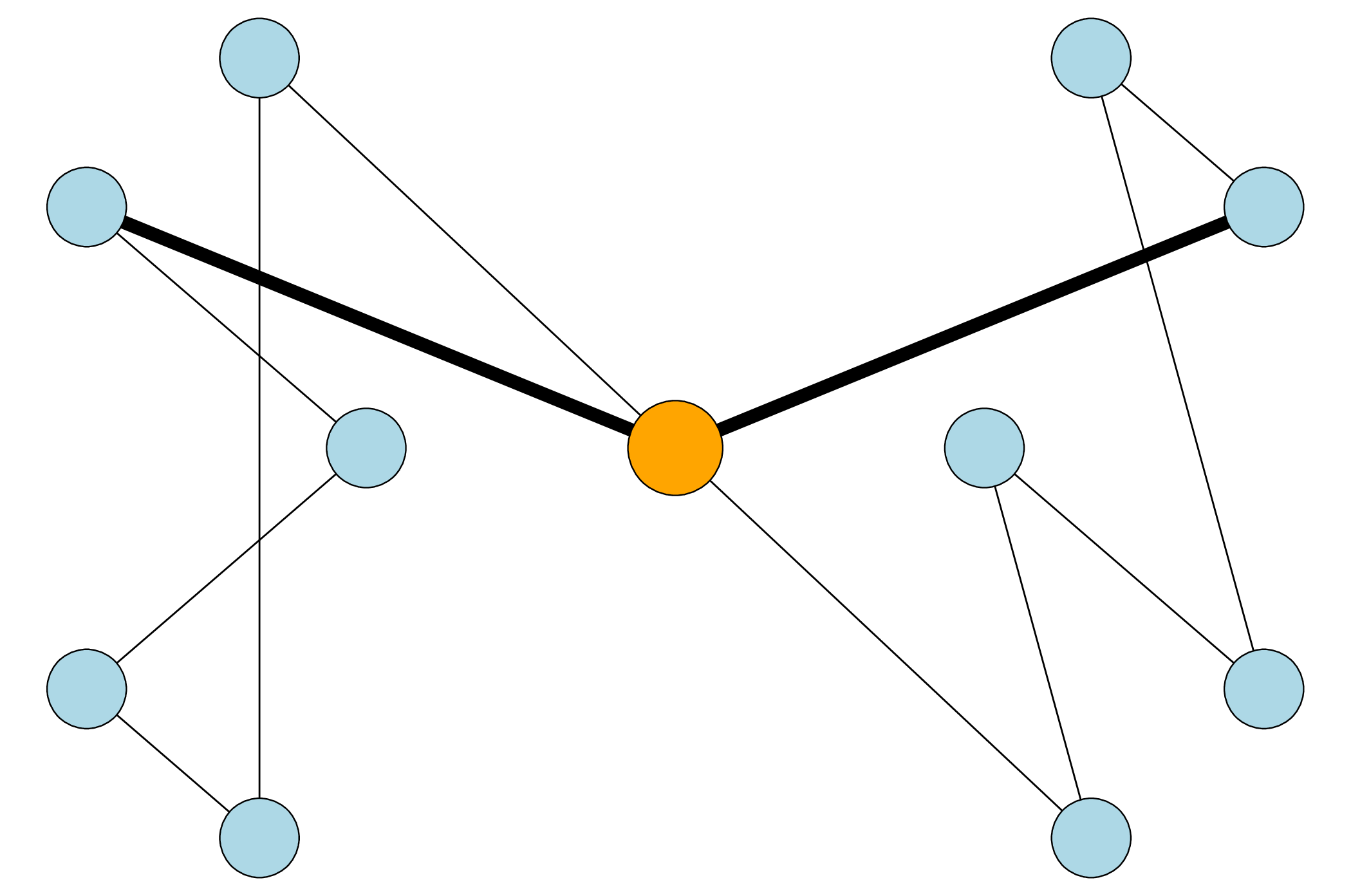}
         \vspace{-2em}
        \caption{Step 2: K=2}
        \label{fig:ex-c}
    \end{subfigure}
    \hfill
    \begin{subfigure}[b]{0.23\textwidth}
        \centering
        \includegraphics[width=\textwidth]{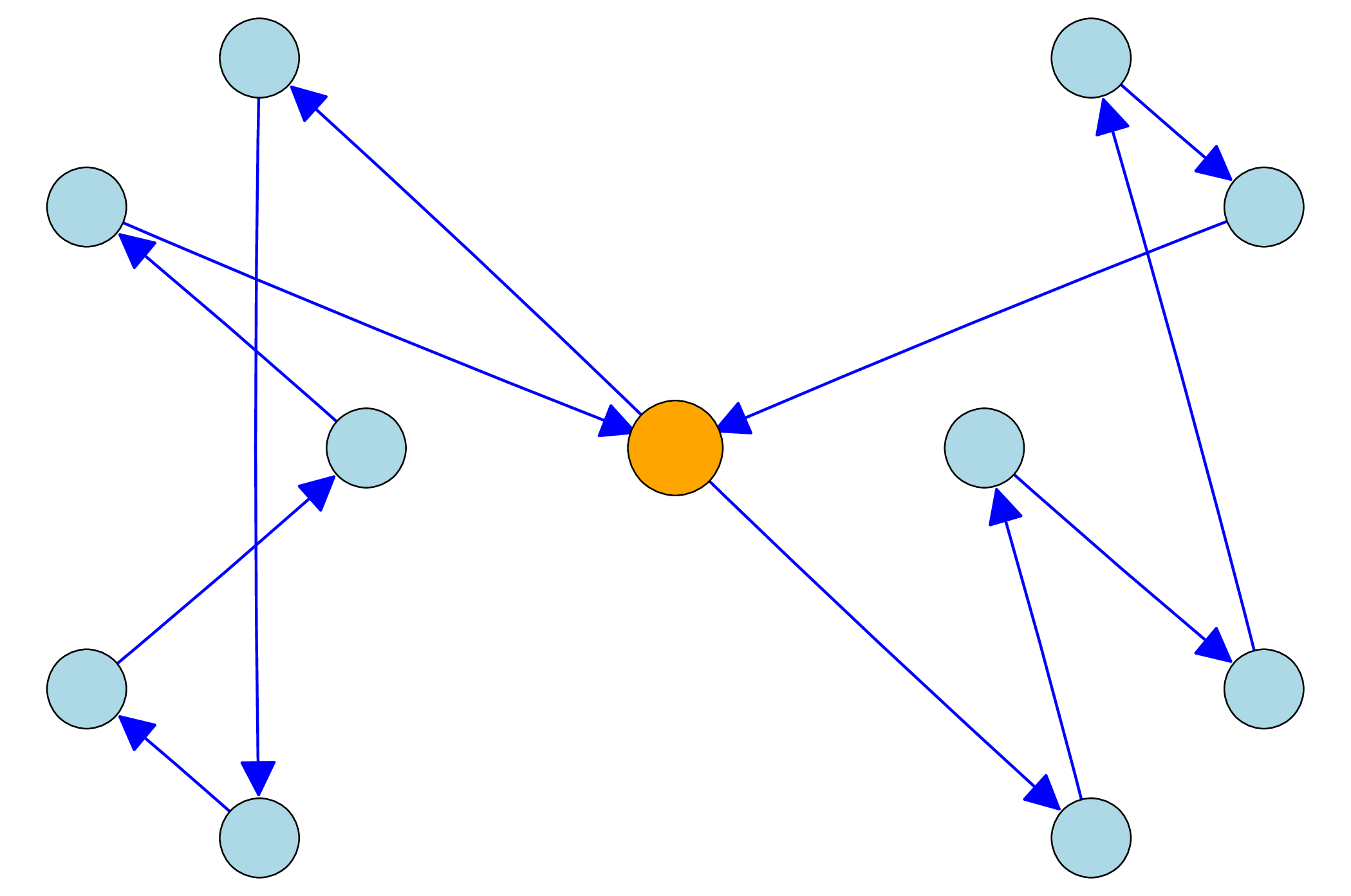}
         \vspace{-2em}
        \caption{Step 3}
        \label{fig:ex-d}
    \end{subfigure}
    \hfill
    \begin{subfigure}[b]{0.23\textwidth}
        \centering
        \includegraphics[width=\textwidth]{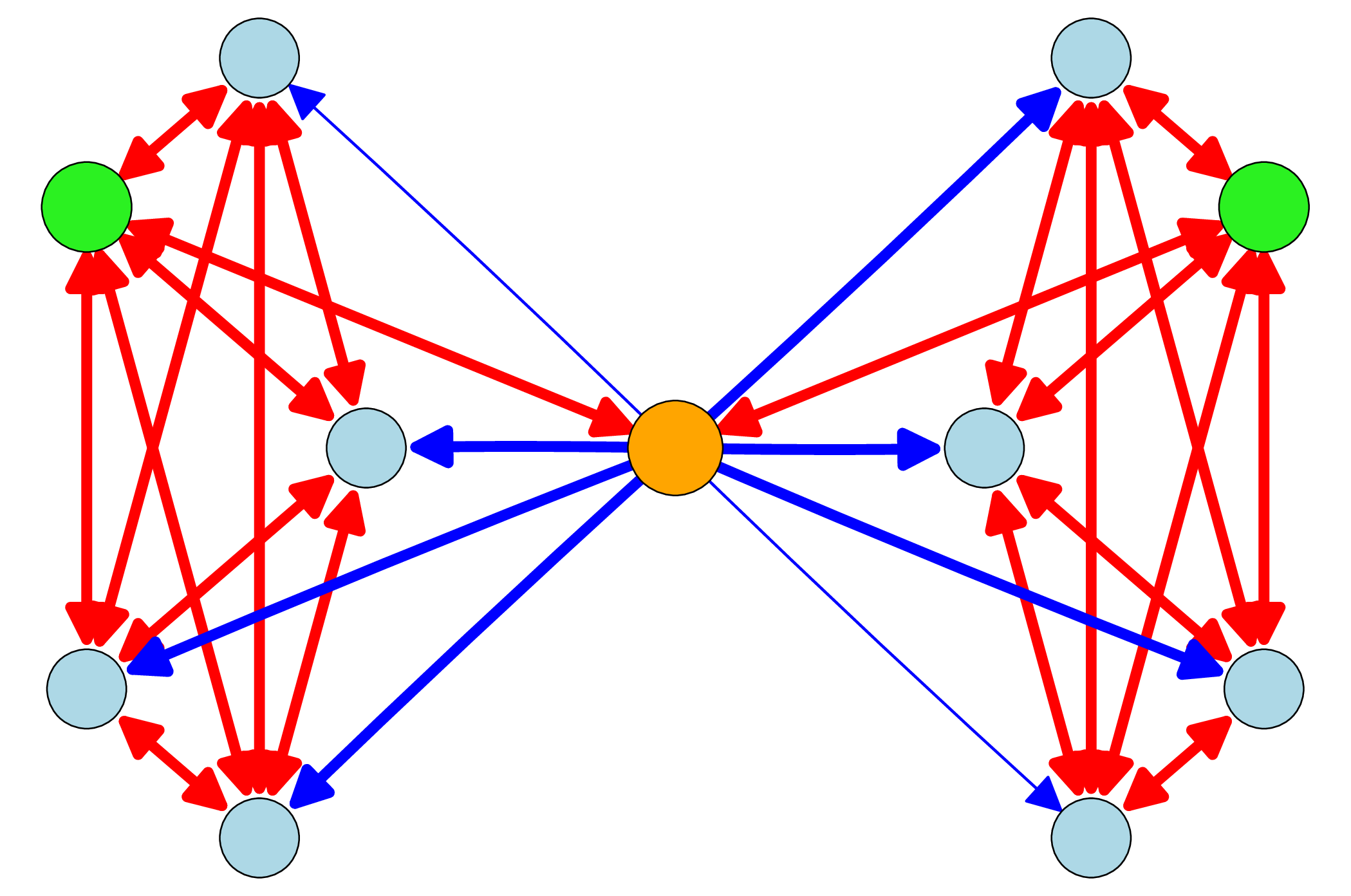}
         \vspace{-2em}
        \caption{Step 4}
        \label{fig:ex-e}
    \end{subfigure}
    \vspace{-1em}
    \caption{Illustration of Alg.~\ref{Alg:Graph Design} based on (2,6)-windmill graph.}
    \label{fig:example}
    \vspace{-2em}   
\end{figure}

\section{Performance Evaluation}\label{sec:Performance Evaluation}

We evaluate the proposed solution against benchmarks on a real dataset under realistic settings. 

\subsection{Evaluation Setting}

\subsubsection{Problem Setting}
We consider the standard task of image classification based on CIFAR-10, which consists of 60,000 color images in 10 classes. We train a lightweight version of ResNet-50 with {1.5M} parameters over its training dataset with 50,000 images, and then test the trained model on the testing dataset with 10,000 images. We set the learning rate to 0.01, and the batch size to 64. {We stop training when the average testing accuracy over 5 consecutive epochs reaches 80\%.} 
We randomly distribute the training data over nodes in two base topologies: (i) a random geometric (RG) graph as shown in {Fig.~\ref{fig: base random}}, generated by uniformly distributing 33 nodes in a unit area and forming links according to a communication radius of 0.5 (resulting in 267 undirected links), which models a free‑space wireless network with identical nodes; (ii) the topology of Roofnet~\cite{Roofnet} as shown in {Fig.~\ref{fig: base roofnet}}, which is a wireless mesh network with 33 nodes and 187 undirected links. 
 \begin{figure}[ht]
\vspace{-1em}
    \centering
    \begin{subfigure}[b]{0.24\textwidth}
        \centering
        \includegraphics[height = 1.2in,width=1\textwidth]{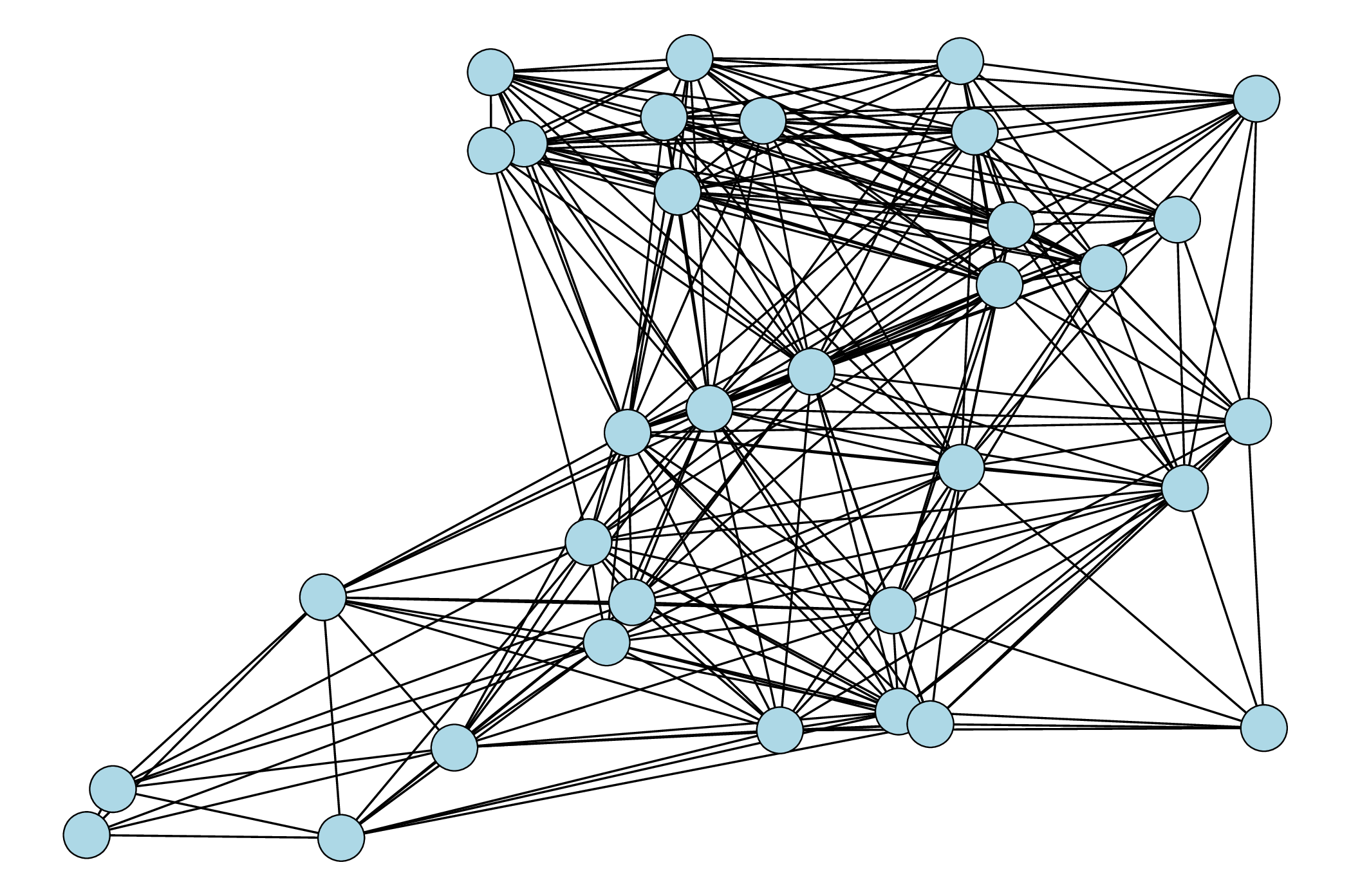}
        \vspace{-2em}
        \caption{RG}
        \label{fig: base random}
    \end{subfigure}
    \hfill
    \begin{subfigure}[b]{0.23\textwidth}
        \centering
        \includegraphics[height = 1.2in, width=1\textwidth]{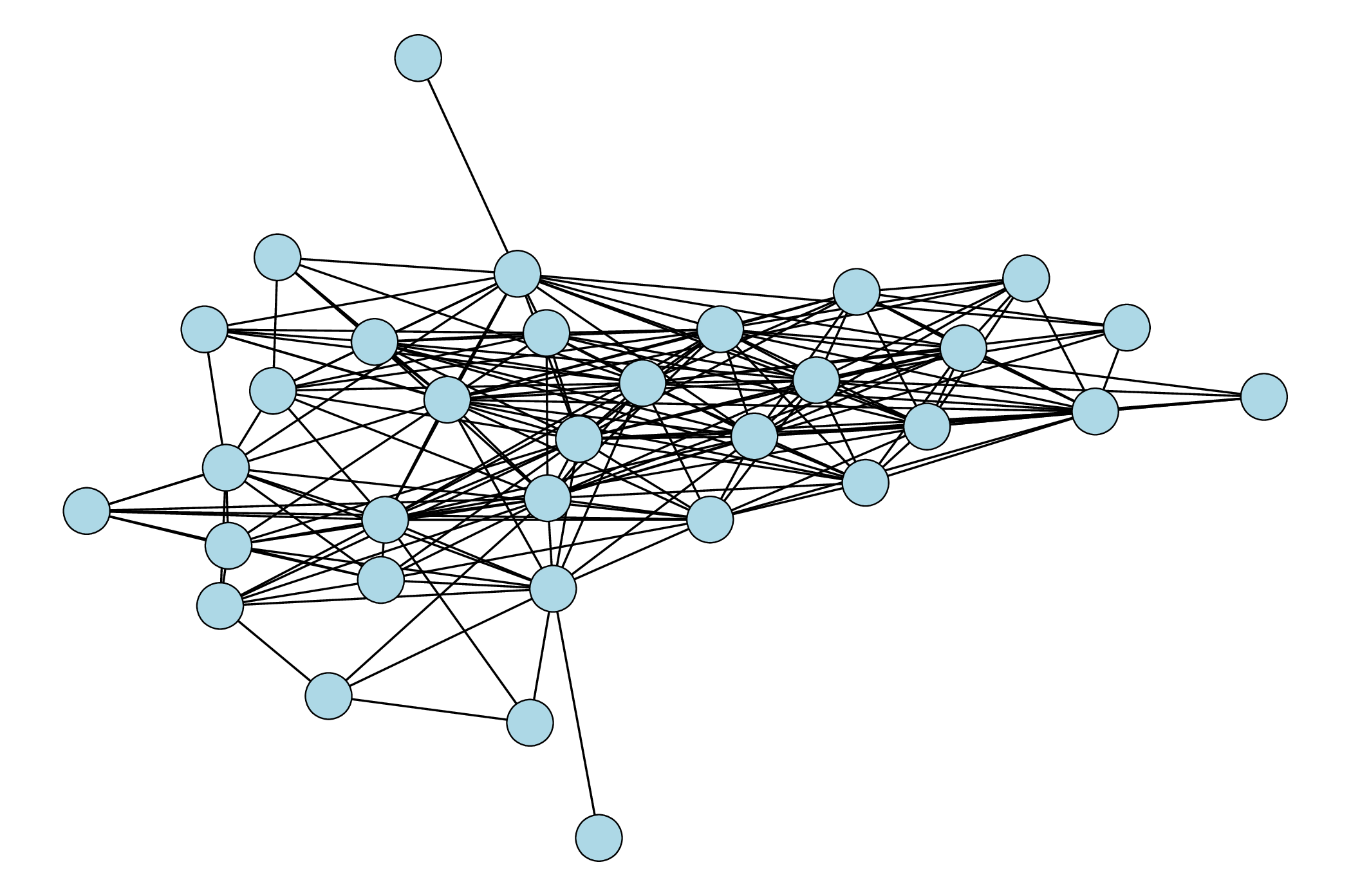}
         \vspace{-2em}
        \caption{Roofnet}
        \label{fig: base roofnet}
    \end{subfigure}
    \vspace{-1.75em}
    \caption{Base topology used in evaluation.}
    \label{fig:base topology}
    \vspace{-1em}
\end{figure}      

\subsubsection{Benchmarks}

\begin{figure}[t!]
    \centering    \centerline{\mbox{\includegraphics[height = 1.95in,width=.95\linewidth]{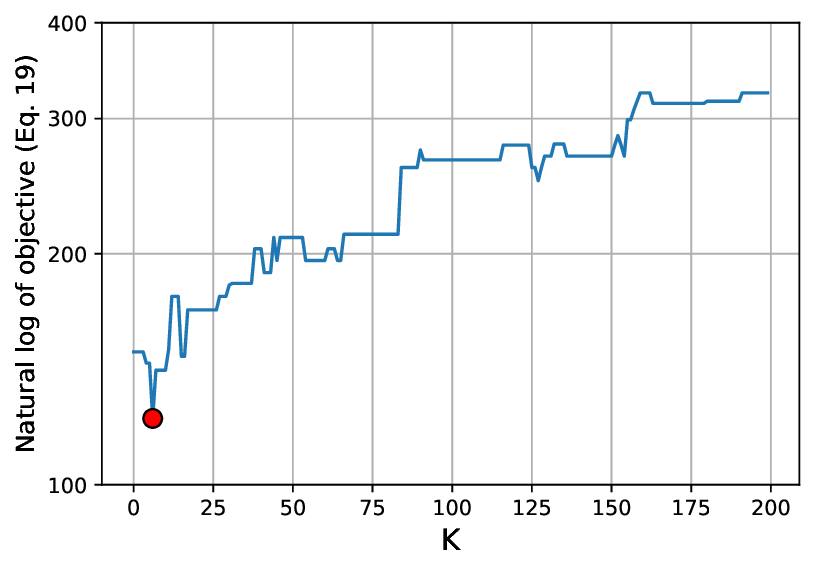}}}
    \vspace{-1.5em}
    \caption{Design objective vs. design parameter $K$ for RG ($K^*=6$). 
    }
    \label{fig:design_RG}
    \vspace{-1em}
\end{figure}

\begin{figure}[t!]
    \centering    \centerline{\mbox{\includegraphics[height = 2in,width=.95\linewidth]{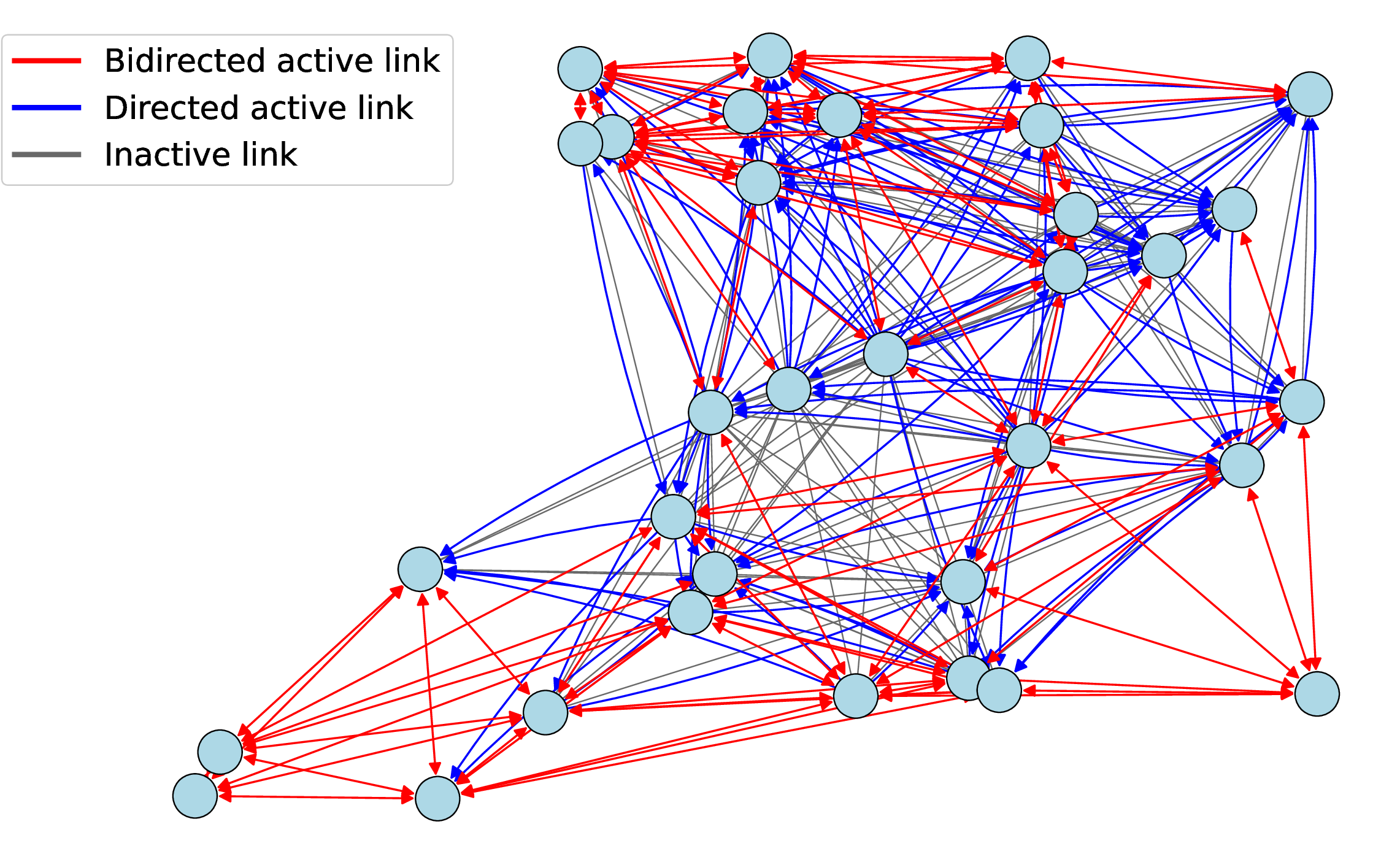}}}
    \vspace{-1.5em}
    \caption{Designed communication graph based on RG.}
    \label{fig:graph_random}
    \vspace{-1em}
\end{figure}

\begin{figure}[t!]
    \centering
    \centerline{\mbox{\includegraphics[height = 2.35in,width=1\linewidth]{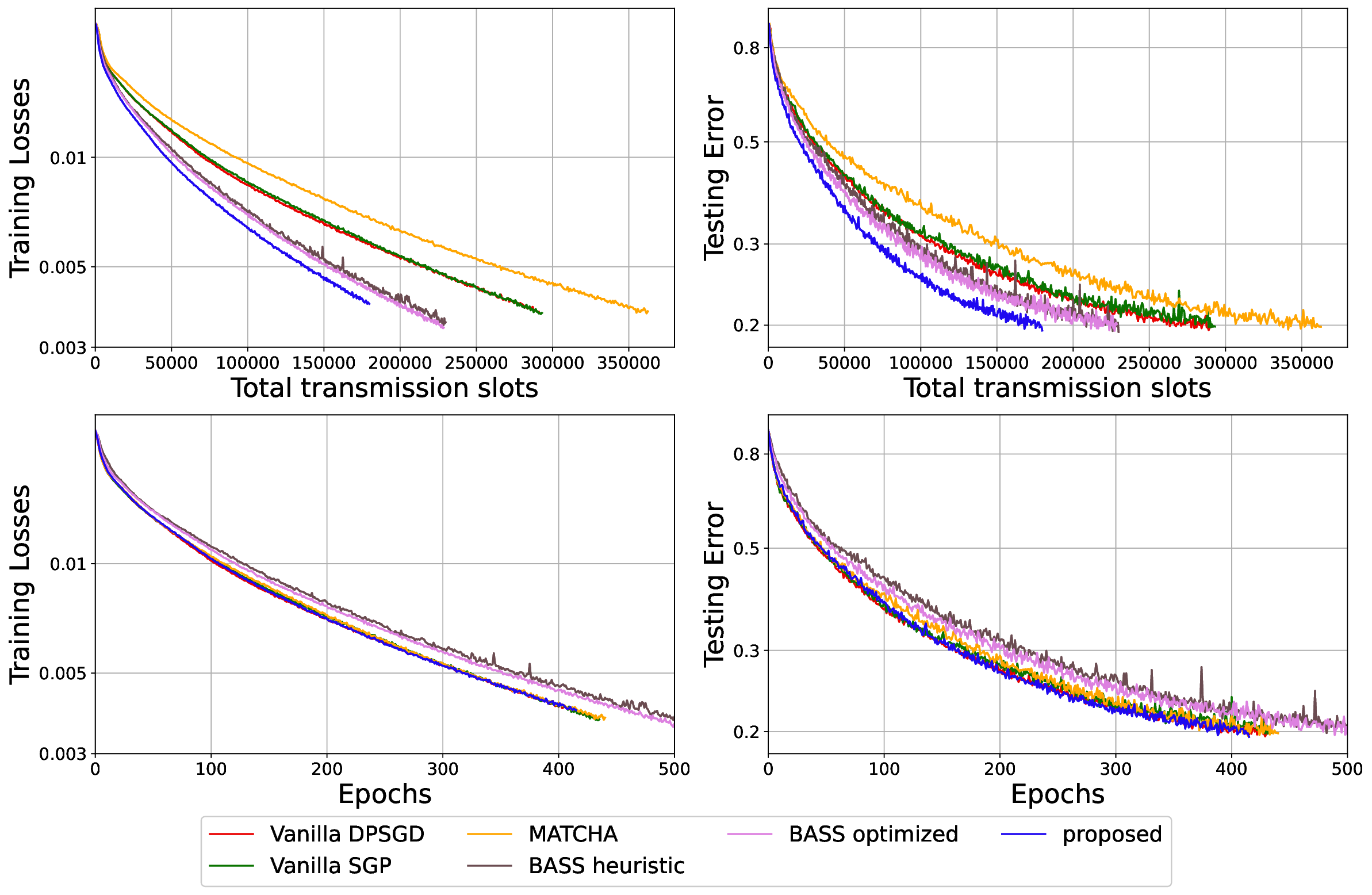}}}
    \vspace{-1em}
    \caption{Training performance on RG. 
    }
    \label{fig:results_cifar10_rg}
    \vspace{-1em}
\end{figure}

\begin{figure}[t!]
    \centering    \centerline{\mbox{\includegraphics[height = 1.95in,width=.95\linewidth]{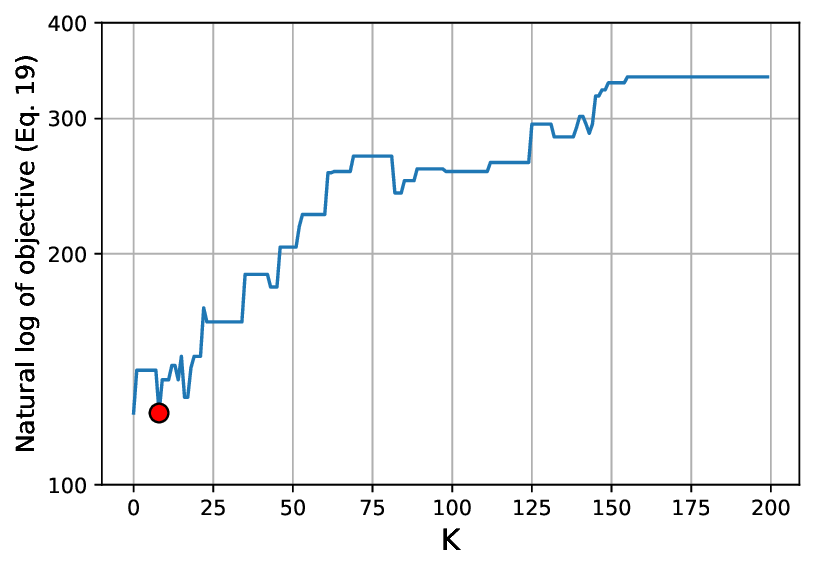}}}
    \vspace{-1.5em}
    \caption{Design objective vs. design parameter $K$ for Roofnet ($K^*=8$). 
    }
    \label{fig:design_roofnet}
    \vspace{-1em}
\end{figure}

\begin{figure}[t!]
    \centering    \centerline{\mbox{\includegraphics[height = 2in,width=.95\linewidth]{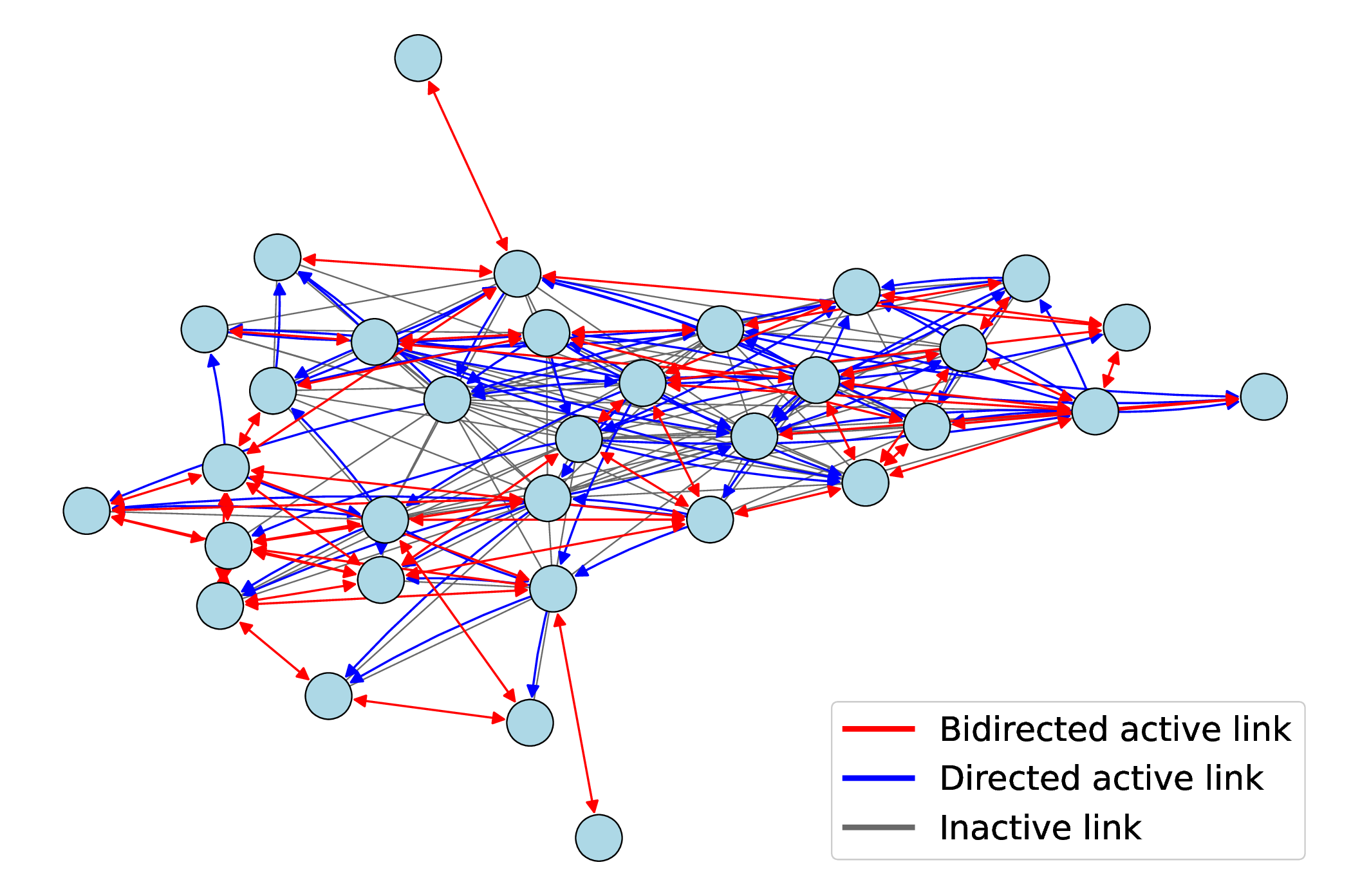}}}
    \vspace{-1.5em}
    \caption{Designed communication graph based on Roofnet. }
    \label{fig:graph_roofnet}
    \vspace{-1em}
\end{figure}

\begin{figure}[t!]
    \centering    \centerline{\mbox{\includegraphics[height = 2.35in,width=1\linewidth]{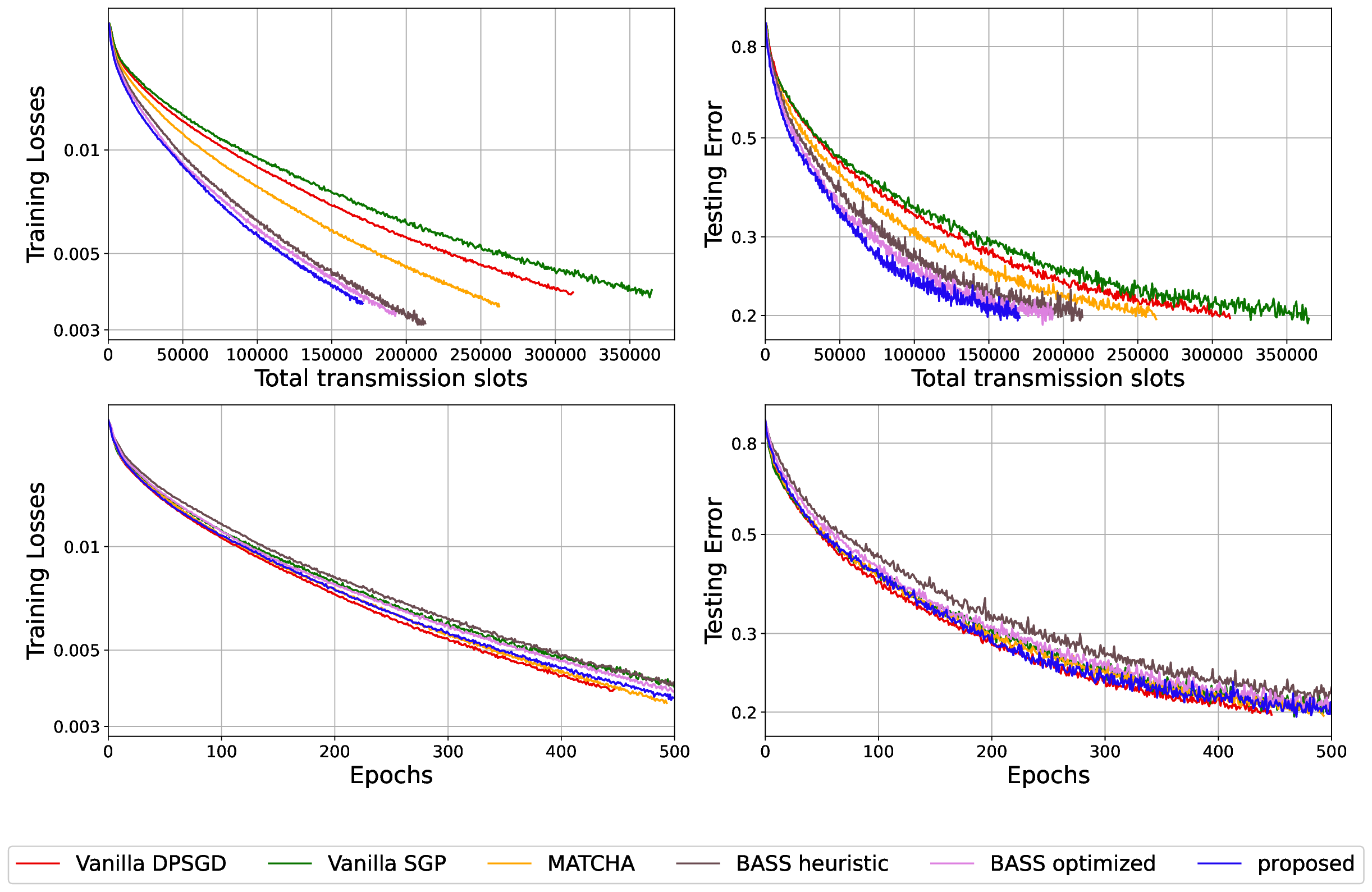}}}
    \vspace{-1em}
    \caption{Training performance on Roofnet.
        }
    \label{fig:results_cifar10_roofnet_error}
    \vspace{-1.5em}
\end{figure}

We compare the proposed solution with the following benchmarks: \begin{itemize}
    \item `Vanilla D-PSGD' \cite{Lian17NIPS}, which is a baseline that uses the base topology as the communication graph and Metropolis–Hastings weights; 
    \item `Vanilla SGP' \cite{Assran19ICML}, which uses the base topology as the communication graph but assigns weights according to Lemma~\ref{lem: uniform weight};  
    \item `MATCHA' \cite{MATCHA22}, which aims at minimizing the communication time but assumes communications on disjoint links can occur in parallel (ignoring interference); 
    \item `BASS' \cite{Herrera25OJCS} a state-of-the-art mixing matrix design for minimizing the communication time under broadcast communications by activating collision-free subsets of nodes with designed probabilities, which proposed a lightweight solution called `BASS heuristic' and a computationally heavy solution called `BASS optimized'. 
\end{itemize}
We note that all but the proposed solution and `Vanilla SGP' are based on D-PSGD and hence must use bidirected communication graphs and symmetric mixing matrices. For the solutions \cite{MATCHA22,Chiu23JSAC,Herrera25OJCS} with configurable communication cost per iteration, we align their per-iteration cost with the proposed solution\footnote{As `MATCHA' \cite{MATCHA22} are unaware of the interference constraint, we configure their per-iteration cost as a percentage of the maximum cost under their assumption and align the percentage with our solution.}. We set the number of candidate mixing matrices for `BASS optimized' to 100, comparable to the setting in \cite{Herrera25OJCS}.

\subsection{Evaluation Results}

\subsubsection{Results on Random Geometric (RG) Graph}

We first use the random geometric graph in Fig.~\ref{fig: base random} as the base topology. Fig.~\ref{fig:design_RG} evaluates the impact of the design parameter $K$ in Alg.~\ref{Alg:Graph Design} on our design objective \eqref{eq: G_a obj}, which guides the selection of an optimal parameter value $K^*$. Under this parameter, Alg.~\ref{Alg:Graph Design} designs an activated communication graph as shown in Fig.~\ref{fig:graph_random}, which contains both bidirectional and unidirectional links. 

We then evaluate the actual learning performance in terms of training loss and testing error in Fig.~\ref{fig:results_cifar10_rg}. The results show that: (i) while all the designs except for `BASS heuristic/optimized' have similar logical convergence rates measured in epochs, they differ substantially in the actual convergence rates measured in slots; (ii) our proposed solution notably outperforms the state-of-the-art solution `BASS optimized' in terms of the convergence time in slots, which in turn outperforms the other benchmarks.

\subsubsection{Results on Roofnet}

We repeat the above experiments on the Roofnet topology shown in Fig.~\ref{fig: base roofnet}. The results in Fig.~\ref{fig:design_roofnet}--\ref{fig:results_cifar10_roofnet_error} show qualitatively similar observations as before, indicating the generalizability of our previous observations. In this case, $K=0$ and $K=8$ yield similar results in both the abstract design objective \eqref{eq: G_a obj} and the actual learning performance (the curve for $K=0$ is omitted in Fig.~\ref{fig:results_cifar10_roofnet_error} for better visibility). A quantitative difference from the results on RG is that the performance gap between our solution and the state of the art (`BASS optimized') in terms of convergence rate in slots appears smaller for Roofnet. Intuitively, this is because the Roofnet topology is sparser than the RG graph used in our evaluation, which reduces the degree of freedom in the design. 
Nevertheless, in both cases, our design notably improves upon the state of the art by strategically selecting the communication links based on a theory-backed objective function.  

\subsubsection{Summary}

\begin{table}[t]
\small
    \centering
    \begin{tabular}{c|ccccc}
  Base   &   D-PSGD & MATCHA & \hspace{-.15em}BASS heu \hspace{-.15em}& BASS opt & Proposed  \\
         \hline
RG & 290,976 &   362,502 & 229,824 & 228,528   & 179,712 \\
Roofnet & 311,808 &   262,314 & 212,688 & 192,528  & 170,688 
    \end{tabular}
     \vspace{-0.25em}
    \caption{Convergence Time for 80\% Accuracy (in slots)
    }
    \label{tab:convergence_time}
    \vspace{-3em}
\end{table} 

Table~\ref{tab:convergence_time} summarizes the convergence times from these experiments, measured by the total number of transmission slots until the average testing accuracy over 5 consecutive epochs reaches 80\%. The results show that the proposed solution reduces the convergence time by 11--21\% compared to the state-of-the-art solution `BASS optimized', and 38--45\% compared to the baseline solution `Vanilla D-PSGD'. Notably, `Vanilla SGP' performs worse than `Vanilla D-PSGD' (omitted in Table~\ref{tab:convergence_time} due to space limitation), indicating that the above improvement is not from simply switching the learning algorithm from D-PSGD to SGP, but from suitably utilizing the additional communication flexibility enabled by SGP in communication graph design.

\section{Conclusion}\label{sec:Conclusion}

We considered the mixing matrix design for minimizing the convergence time in decentralized federated learning (DFL) under broadcast communications and interference constraints. Motivated by the ability of stochastic gradient push (SGP) in utilizing asymmetric parameter sharing, we developed a solution to optimize its mixing matrix by analyzing the dependency of its convergence rate on the mixing matrix and extracting a tractable design objective based on graph-theoretic parameters of the activated communication graph. Our solution not only has guaranteed performance in terms of the theoretical design objective, but also demonstrates notably faster convergence than the state of the art in evaluations based on real data, 
signaling the value in efficiently utilizing asymmetric communications for learning in wireless networks.    
 \looseness=0



\bibliographystyle{ACM-Reference-Format}
\bibliography{references}


\begin{thebibliography}{35}


\ifx \showCODEN    \undefined \def \showCODEN     #1{\unskip}     \fi
\ifx \showISBNx    \undefined \def \showISBNx     #1{\unskip}     \fi
\ifx \showISBNxiii \undefined \def \showISBNxiii  #1{\unskip}     \fi
\ifx \showISSN     \undefined \def \showISSN      #1{\unskip}     \fi
\ifx \showLCCN     \undefined \def \showLCCN      #1{\unskip}     \fi
\ifx \shownote     \undefined \def \shownote      #1{#1}          \fi
\ifx \showarticletitle \undefined \def \showarticletitle #1{#1}   \fi
\ifx \showURL      \undefined \def \showURL       {\relax}        \fi
\providecommand\bibfield[2]{#2}
\providecommand\bibinfo[2]{#2}
\providecommand\natexlab[1]{#1}
\providecommand\showeprint[2][]{arXiv:#2}

\bibitem[Aguayo et~al\mbox{.}(2004)]%
        {Roofnet}
\bibfield{author}{\bibinfo{person}{Daniel Aguayo}, \bibinfo{person}{John Bicket}, \bibinfo{person}{Sanjit Biswas}, \bibinfo{person}{Glenn Judd}, {and} \bibinfo{person}{Robert Morris}.} \bibinfo{year}{2004}\natexlab{}.
\newblock \showarticletitle{Link-level Measurements from an 802.11b Mesh Network}. In \bibinfo{booktitle}{\emph{SIGCOMM}}.
\newblock


\bibitem[Assran et~al\mbox{.}(2019)]%
        {Assran19ICML}
\bibfield{author}{\bibinfo{person}{Mahmoud Assran}, \bibinfo{person}{Nicolas Loizou}, \bibinfo{person}{Nicolas Ballas}, {and} \bibinfo{person}{Mike Rabbat}.} \bibinfo{year}{2019}\natexlab{}.
\newblock \showarticletitle{Stochastic Gradient Push for Distributed Deep Learning}. In \bibinfo{booktitle}{\emph{Proceedings of the 36th International Conference on Machine Learning}}, Vol.~\bibinfo{volume}{97}. \bibinfo{pages}{344--353}.
\newblock


\bibitem[Chen et~al\mbox{.}(2022)]%
        {chen2022federated}
\bibfield{author}{\bibinfo{person}{Xianhao Chen}, \bibinfo{person}{Guangyu Zhu}, \bibinfo{person}{Yiqin Deng}, {and} \bibinfo{person}{Yuguang Fang}.} \bibinfo{year}{2022}\natexlab{}.
\newblock \showarticletitle{Federated learning over multihop wireless networks with in-network aggregation}.
\newblock \bibinfo{journal}{\emph{IEEE Transactions on Wireless Communications}} \bibinfo{volume}{21}, \bibinfo{number}{6} (\bibinfo{year}{2022}), \bibinfo{pages}{4622--4634}.
\newblock


\bibitem[Chen et~al\mbox{.}(2023)]%
        {Chen23SPAWC}
\bibfield{author}{\bibinfo{person}{Zheng Chen}, \bibinfo{person}{Martin Dahl}, {and} \bibinfo{person}{Erik~G. Larsson}.} \bibinfo{year}{2023}\natexlab{}.
\newblock \showarticletitle{Decentralized Learning over Wireless Networks: The Effect of Broadcast with Random Access}. In \bibinfo{booktitle}{\emph{2023 IEEE 24th International Workshop on Signal Processing Advances in Wireless Communications (SPAWC)}}. \bibinfo{pages}{316--320}.
\newblock
\href{https://doi.org/10.1109/SPAWC53906.2023.10304514}{doi:\nolinkurl{10.1109/SPAWC53906.2023.10304514}}


\bibitem[Chiu et~al\mbox{.}(2023)]%
        {Chiu23JSAC}
\bibfield{author}{\bibinfo{person}{Cho-Chun Chiu}, \bibinfo{person}{Xusheng Zhang}, \bibinfo{person}{Ting He}, \bibinfo{person}{Shiqiang Wang}, {and} \bibinfo{person}{Ananthram Swami}.} \bibinfo{year}{2023}\natexlab{}.
\newblock \showarticletitle{Laplacian Matrix Sampling for Communication- Efficient Decentralized Learning}.
\newblock \bibinfo{journal}{\emph{IEEE Journal on Selected Areas in Communications}} \bibinfo{volume}{41}, \bibinfo{number}{4} (\bibinfo{year}{2023}), \bibinfo{pages}{887--901}.
\newblock
\href{https://doi.org/10.1109/JSAC.2023.3242735}{doi:\nolinkurl{10.1109/JSAC.2023.3242735}}


\bibitem[Diestel(2017)]%
        {Diestel2017GraphTheory}
\bibfield{author}{\bibinfo{person}{Reinhard Diestel}.} \bibinfo{year}{2017}\natexlab{}.
\newblock \bibinfo{booktitle}{\emph{Graph Theory} (\bibinfo{edition}{5} ed.)}.
\newblock \bibinfo{publisher}{Springer}.
\newblock


\bibitem[Fürer and Raghavachari(1992)]%
        {Furer92SODA}
\bibfield{author}{\bibinfo{person}{M. Fürer} {and} \bibinfo{person}{B. Raghavachari}.} \bibinfo{year}{1992}\natexlab{}.
\newblock \showarticletitle{Approximating the Minimum Degree Spanning Tree to Within One from the Optimal Degree}. In \bibinfo{booktitle}{\emph{Proceedings of the 3rd Annual ACM–SIAM Symposium on Discrete Algorithms (SODA)}}. \bibinfo{pages}{317--324}.
\newblock


\bibitem[Garey and Johnson(1979)]%
        {GareyJohnson1979}
\bibfield{author}{\bibinfo{person}{Michael~R. Garey} {and} \bibinfo{person}{David~S. Johnson}.} \bibinfo{year}{1979}\natexlab{}.
\newblock \bibinfo{booktitle}{\emph{Computers and Intractability: A Guide to the Theory of NP-Completeness}}.
\newblock \bibinfo{publisher}{W. H. Freeman}.
\newblock


\bibitem[Herrera et~al\mbox{.}(2025)]%
        {Herrera25OJCS}
\bibfield{author}{\bibinfo{person}{Daniel~PÉRez Herrera}, \bibinfo{person}{Zheng Chen}, {and} \bibinfo{person}{Erik~G. Larsson}.} \bibinfo{year}{2025}\natexlab{}.
\newblock \showarticletitle{Faster Convergence With Less Communication: Broadcast-Based Subgraph Sampling for Decentralized Learning Over Wireless Networks}.
\newblock \bibinfo{journal}{\emph{IEEE Open Journal of the Communications Society}} (\bibinfo{year}{2025}), \bibinfo{pages}{1--1}.
\newblock
\href{https://doi.org/10.1109/OJCOMS.2025.3540133}{doi:\nolinkurl{10.1109/OJCOMS.2025.3540133}}


\bibitem[Hua et~al\mbox{.}(2022)]%
        {hua2022efficient}
\bibfield{author}{\bibinfo{person}{Yifan Hua}, \bibinfo{person}{Kevin Miller}, \bibinfo{person}{Andrea~L Bertozzi}, \bibinfo{person}{Chen Qian}, {and} \bibinfo{person}{Bao Wang}.} \bibinfo{year}{2022}\natexlab{}.
\newblock \showarticletitle{Efficient and reliable overlay networks for decentralized federated learning}.
\newblock \bibinfo{journal}{\emph{SIAM J. Appl. Math.}} \bibinfo{volume}{82}, \bibinfo{number}{4} (\bibinfo{year}{2022}), \bibinfo{pages}{1558--1586}.
\newblock


\bibitem[Jiang et~al\mbox{.}(2023)]%
        {jiang2023joint}
\bibfield{author}{\bibinfo{person}{Zhida Jiang}, \bibinfo{person}{Yang Xu}, \bibinfo{person}{Hongli Xu}, \bibinfo{person}{Lun Wang}, \bibinfo{person}{Chunming Qiao}, {and} \bibinfo{person}{Liusheng Huang}.} \bibinfo{year}{2023}\natexlab{}.
\newblock \showarticletitle{Joint Model Pruning and Topology Construction for Accelerating Decentralized Machine Learning}.
\newblock \bibinfo{journal}{\emph{IEEE Transactions on Parallel and Distributed Systems}} (\bibinfo{year}{2023}).
\newblock


\bibitem[Koloskova et~al\mbox{.}(2020)]%
        {Compression1}
\bibfield{author}{\bibinfo{person}{Anastasia Koloskova}, \bibinfo{person}{Tao Lin}, \bibinfo{person}{Sebastian~U Stich}, {and} \bibinfo{person}{Martin Jagg}.} \bibinfo{year}{2020}\natexlab{}.
\newblock \showarticletitle{Decentralized Deep Learning with Arbitrary Communication Compression}. In \bibinfo{booktitle}{\emph{The International Conference on Learning Representations (ICLR)}}.
\newblock


\bibitem[Le~Bars et~al\mbox{.}(2023)]%
        {le2023refined}
\bibfield{author}{\bibinfo{person}{Batiste Le~Bars}, \bibinfo{person}{Aur{\'e}lien Bellet}, \bibinfo{person}{Marc Tommasi}, \bibinfo{person}{Erick Lavoie}, {and} \bibinfo{person}{Anne-Marie Kermarrec}.} \bibinfo{year}{2023}\natexlab{}.
\newblock \showarticletitle{Refined convergence and topology learning for decentralized {SGD} with heterogeneous data}. In \bibinfo{booktitle}{\emph{International Conference on Artificial Intelligence and Statistics}}. PMLR, \bibinfo{pages}{1672--1702}.
\newblock


\bibitem[Lian et~al\mbox{.}(2017)]%
        {Lian17NIPS}
\bibfield{author}{\bibinfo{person}{Xiangru Lian}, \bibinfo{person}{Ce Zhang}, \bibinfo{person}{Huan Zhang}, \bibinfo{person}{Cho-Jui Hsieh}, \bibinfo{person}{Wei Zhang}, {and} \bibinfo{person}{Ji Liu}.} \bibinfo{year}{2017}\natexlab{}.
\newblock \showarticletitle{Can Decentralized Algorithms Outperform Centralized Algorithms? A Case Study for Decentralized Parallel Stochastic Gradient Descent}. In \bibinfo{booktitle}{\emph{Proceedings of the 31st International Conference on Neural Information Processing Systems}}. \bibinfo{pages}{5336–5346}.
\newblock


\bibitem[Liang et~al\mbox{.}(2025)]%
        {Liang25JO}
\bibfield{author}{\bibinfo{person}{Liyuan Liang}, \bibinfo{person}{Xinmeng Huang}, \bibinfo{person}{Ran Xin}, {and} \bibinfo{person}{Kun Yuan}.} \bibinfo{year}{2025}\natexlab{}.
\newblock \showarticletitle{Understanding the Influence of Digraphs on Decentralized Optimization: Effective Metrics, Lower Bound, and Optimal Algorithm}.
\newblock \bibinfo{journal}{\emph{SIAM Journal on Optimization}} \bibinfo{volume}{35}, \bibinfo{number}{3} (\bibinfo{year}{2025}), \bibinfo{pages}{1570--1600}.
\newblock


\bibitem[Lu and Sa(2020)]%
        {Compression2}
\bibfield{author}{\bibinfo{person}{Yucheng Lu} {and} \bibinfo{person}{Christopher~De Sa}.} \bibinfo{year}{2020}\natexlab{}.
\newblock \showarticletitle{{Moniqua}: Modulo Quantized Communication in Decentralized {SGD}}. In \bibinfo{booktitle}{\emph{International Conference on Machine Learning (ICML)}}.
\newblock


\bibitem[Lu and Sa(2021)]%
        {ICMLhonor}
\bibfield{author}{\bibinfo{person}{Yucheng Lu} {and} \bibinfo{person}{Christopher~De Sa}.} \bibinfo{year}{2021}\natexlab{}.
\newblock \showarticletitle{Optimal Complexity in Decentralized Training}. In \bibinfo{booktitle}{\emph{International Conference on Machine Learning (ICML)}}.
\newblock


\bibitem[McMahan et~al\mbox{.}(2017)]%
        {McMahan17AISTATS}
\bibfield{author}{\bibinfo{person}{H. McMahan}, \bibinfo{person}{Eider Moore}, \bibinfo{person}{D. Ramage}, \bibinfo{person}{S. Hampson}, {and} \bibinfo{person}{Blaise~Ag{\"u}era y Arcas}.} \bibinfo{year}{2017}\natexlab{}.
\newblock \showarticletitle{Communication-Efficient Learning of Deep Networks from Decentralized Data}. In \bibinfo{booktitle}{\emph{AISTATS}}.
\newblock


\bibitem[Neglia et~al\mbox{.}(2020)]%
        {neglia2020decentralized}
\bibfield{author}{\bibinfo{person}{Giovanni Neglia}, \bibinfo{person}{Chuan Xu}, \bibinfo{person}{Don Towsley}, {and} \bibinfo{person}{Gianmarco Calbi}.} \bibinfo{year}{2020}\natexlab{}.
\newblock \showarticletitle{Decentralized gradient methods: does topology matter?}. In \bibinfo{booktitle}{\emph{International Conference on Artificial Intelligence and Statistics}}. PMLR, \bibinfo{pages}{2348--2358}.
\newblock


\bibitem[Singh et~al\mbox{.}(2020)]%
        {Singh20CDC}
\bibfield{author}{\bibinfo{person}{Navjot Singh}, \bibinfo{person}{Deepesh Data}, \bibinfo{person}{Jemin George}, {and} \bibinfo{person}{Suhas Diggavi}.} \bibinfo{year}{2020}\natexlab{}.
\newblock \showarticletitle{{SPARQ-SGD}: Event-Triggered and Compressed Communication in Decentralized Optimization}. In \bibinfo{booktitle}{\emph{IEEE CDC}}.
\newblock


\bibitem[Singh et~al\mbox{.}(2021)]%
        {Singh21JSAIT}
\bibfield{author}{\bibinfo{person}{Navjot Singh}, \bibinfo{person}{Deepesh Data}, \bibinfo{person}{Jemin George}, {and} \bibinfo{person}{Suhas Diggavi}.} \bibinfo{year}{2021}\natexlab{}.
\newblock \showarticletitle{{SQuARM-SGD}: Communication-Efficient Momentum {SGD} for Decentralized Optimization}.
\newblock \bibinfo{journal}{\emph{IEEE Journal on Selected Areas in Information Theory}} \bibinfo{volume}{2}, \bibinfo{number}{3} (\bibinfo{year}{2021}), \bibinfo{pages}{954--969}.
\newblock


\bibitem[Spiridonoff et~al\mbox{.}(2020)]%
        {Spiridonoff20JMLR}
\bibfield{author}{\bibinfo{person}{Artin Spiridonoff}, \bibinfo{person}{Alex Olshevsky}, {and} \bibinfo{person}{Ioannis~Ch. Paschalidis}.} \bibinfo{year}{2020}\natexlab{}.
\newblock \showarticletitle{Robust Asynchronous Stochastic Gradient-Push: Asymptotically Optimal and Network-Independent Performance for Strongly Convex Functions}.
\newblock \bibinfo{journal}{\emph{Journal of Machine Learning Research}} \bibinfo{volume}{21}, \bibinfo{number}{58} (\bibinfo{year}{2020}), \bibinfo{pages}{1--47}.
\newblock


\bibitem[Tang et~al\mbox{.}(2018a)]%
        {Compression3}
\bibfield{author}{\bibinfo{person}{Hanlin Tang}, \bibinfo{person}{Shaoduo Gan}, \bibinfo{person}{Ce Zhang}, \bibinfo{person}{Tong Zhang}, {and} \bibinfo{person}{Ji Liu}.} \bibinfo{year}{2018}\natexlab{a}.
\newblock \showarticletitle{Communication compression for decentralized training}. In \bibinfo{booktitle}{\emph{Advances in Neural Information Processing Systems (NeurIPS)}}.
\newblock


\bibitem[Tang et~al\mbox{.}(2018b)]%
        {D2ICML18}
\bibfield{author}{\bibinfo{person}{Hanlin Tang}, \bibinfo{person}{Xiangru Lian}, \bibinfo{person}{Ming Yan}, \bibinfo{person}{Ce Zhang}, {and} \bibinfo{person}{Ji Liu}.} \bibinfo{year}{2018}\natexlab{b}.
\newblock \showarticletitle{$D^2$: Decentralized Training over Decentralized Data}. In \bibinfo{booktitle}{\emph{Proceedings of the 35th International Conference on Machine Learning, {ICML}}}.
\newblock


\bibitem[Tran et~al\mbox{.}(2019)]%
        {Ngu19INFOCOM}
\bibfield{author}{\bibinfo{person}{Nguyen~H. Tran}, \bibinfo{person}{Wei Bao}, \bibinfo{person}{Albert Zomaya}, \bibinfo{person}{Minh~N.H. Nguyen}, {and} \bibinfo{person}{Choong~Seon Hong}.} \bibinfo{year}{2019}\natexlab{}.
\newblock \showarticletitle{Federated Learning over Wireless Networks: Optimization Model Design and Analysis}. In \bibinfo{booktitle}{\emph{IEEE INFOCOM}}.
\newblock


\bibitem[Vogels et~al\mbox{.}(2022)]%
        {vogels2022beyond}
\bibfield{author}{\bibinfo{person}{Thijs Vogels}, \bibinfo{person}{Hadrien Hendrikx}, {and} \bibinfo{person}{Martin Jaggi}.} \bibinfo{year}{2022}\natexlab{}.
\newblock \showarticletitle{Beyond spectral gap: The role of the topology in decentralized learning}.
\newblock \bibinfo{journal}{\emph{Advances in Neural Information Processing Systems}}  \bibinfo{volume}{35} (\bibinfo{year}{2022}), \bibinfo{pages}{15039--15050}.
\newblock


\bibitem[Wang and Joshi(2019)]%
        {sysml19}
\bibfield{author}{\bibinfo{person}{Jianyu Wang} {and} \bibinfo{person}{Gauri Joshi}.} \bibinfo{year}{2019}\natexlab{}.
\newblock \showarticletitle{Adaptive Communication Strategies to Achieve the Best Error-Runtime Trade-off in Local-Update {SGD}}. In \bibinfo{booktitle}{\emph{Systems for ML}}.
\newblock


\bibitem[Wang et~al\mbox{.}(2022)]%
        {MATCHA22}
\bibfield{author}{\bibinfo{person}{Jianyu Wang}, \bibinfo{person}{Anit~Kumar Sahu}, \bibinfo{person}{Gauri Joshi}, {and} \bibinfo{person}{Soummya Kar}.} \bibinfo{year}{2022}\natexlab{}.
\newblock \showarticletitle{{MATCHA}: A Matching-Based Link Scheduling Strategy to Speed up Distributed Optimization}.
\newblock \bibinfo{journal}{\emph{IEEE Transactions on Signal Processing}}  \bibinfo{volume}{70} (\bibinfo{year}{2022}), \bibinfo{pages}{5208--5221}.
\newblock


\bibitem[Wang et~al\mbox{.}(2019)]%
        {Wang19JSAC}
\bibfield{author}{\bibinfo{person}{Shiqiang Wang}, \bibinfo{person}{Tiffany Tuor}, \bibinfo{person}{Theodoros Salonidis}, \bibinfo{person}{Kin~K. Leung}, \bibinfo{person}{Christian Makaya}, \bibinfo{person}{Ting He}, {and} \bibinfo{person}{Kevin Chan}.} \bibinfo{year}{2019}\natexlab{}.
\newblock \showarticletitle{Adaptive Federated Learning in Resource Constrained Edge Computing Systems}.
\newblock \bibinfo{journal}{\emph{IEEE Journal on Selected Areas in Communications}} \bibinfo{volume}{37}, \bibinfo{number}{6} (\bibinfo{year}{2019}), \bibinfo{pages}{1205--1221}.
\newblock
\href{https://doi.org/10.1109/JSAC.2019.2904348}{doi:\nolinkurl{10.1109/JSAC.2019.2904348}}


\bibitem[Xiao et~al\mbox{.}(2017)]%
        {FMNIST}
\bibfield{author}{\bibinfo{person}{Han Xiao}, \bibinfo{person}{Kashif Rasul}, {and} \bibinfo{person}{Roland Vollgraf}.} \bibinfo{year}{2017}\natexlab{}.
\newblock \showarticletitle{Fashion-MNIST: a Novel Image Dataset for Benchmarking Machine Learning Algorithms}.
\newblock \bibinfo{journal}{\emph{arXiv preprint arXiv:1708.07747}} (\bibinfo{year}{2017}).
\newblock


\bibitem[Xin et~al\mbox{.}(2021)]%
        {Xin21}
\bibfield{author}{\bibinfo{person}{Ran Xin}, \bibinfo{person}{Usman Khan}, {and} \bibinfo{person}{Soummya Kar}.} \bibinfo{year}{2021}\natexlab{}.
\newblock \showarticletitle{A Hybrid Variance-Reduced Method for Decentralized Stochastic Non-Convex Optimization}. In \bibinfo{booktitle}{\emph{Proceedings of the 38th International Conference on Machine Learning}} \emph{(\bibinfo{series}{Proceedings of Machine Learning Research}, Vol.~\bibinfo{volume}{139})}, \bibfield{editor}{\bibinfo{person}{Marina Meila} {and} \bibinfo{person}{Tong Zhang}} (Eds.). \bibinfo{publisher}{PMLR}, \bibinfo{pages}{11459--11469}.
\newblock
\urldef\tempurl%
\url{https://proceedings.mlr.press/v139/xin21a.html}
\showURL{%
\tempurl}


\bibitem[Xing et~al\mbox{.}(2021)]%
        {xing2021federated}
\bibfield{author}{\bibinfo{person}{Hong Xing}, \bibinfo{person}{Osvaldo Simeone}, {and} \bibinfo{person}{Suzhi Bi}.} \bibinfo{year}{2021}\natexlab{}.
\newblock \showarticletitle{Federated learning over wireless device-to-device networks: Algorithms and convergence analysis}.
\newblock \bibinfo{journal}{\emph{IEEE Journal on Selected Areas in Communications}} \bibinfo{volume}{39}, \bibinfo{number}{12} (\bibinfo{year}{2021}), \bibinfo{pages}{3723--3741}.
\newblock


\bibitem[You and Pu(2025)]%
        {You25arXiv}
\bibfield{author}{\bibinfo{person}{Runze You} {and} \bibinfo{person}{Shi Pu}.} \bibinfo{year}{2025}\natexlab{}.
\newblock \bibinfo{title}{Stochastic Push-Pull for Decentralized Nonconvex Optimization}.
\newblock
\showeprint[arxiv]{2506.07021}~[math.OC]
\urldef\tempurl%
\url{https://arxiv.org/abs/2506.07021}
\showURL{%
\tempurl}


\bibitem[Zhang et~al\mbox{.}(2026)]%
        {Zhang26INFOCOM}
\bibfield{author}{\bibinfo{person}{Xusheng Zhang}, \bibinfo{person}{Tuan Nguyen}, {and} \bibinfo{person}{Ting He}.} \bibinfo{year}{2026}\natexlab{}.
\newblock \showarticletitle{Mixing Matrix Design for Energy-efficient Decentralized Federated Learning}. In \bibinfo{booktitle}{\emph{IEEE INFOCOM}}.
\newblock


\bibitem[Zhou et~al\mbox{.}(2025)]%
        {Zhou25TCNS}
\bibfield{author}{\bibinfo{person}{Yiming Zhou}, \bibinfo{person}{Yifei Cheng}, \bibinfo{person}{Linli Xu}, {and} \bibinfo{person}{Enhong Chen}.} \bibinfo{year}{2025}\natexlab{}.
\newblock \showarticletitle{Adaptive Weighting Push-SUM for Decentralized Optimization With Statistical Diversity}.
\newblock \bibinfo{journal}{\emph{IEEE Transactions on Control of Network Systems}} \bibinfo{volume}{12}, \bibinfo{number}{3} (\bibinfo{year}{2025}), \bibinfo{pages}{2337--2349}.
\newblock
\href{https://doi.org/10.1109/TCNS.2025.3566329}{doi:\nolinkurl{10.1109/TCNS.2025.3566329}}


\end{thebibliography}

\if\thisismainpaper1

\else
\appendix
\section{Appendix}\label{appendix:A}

\subsection{Proof of Theorem~\ref{thm:SGP convergence}}\label{appendix:Convergence Proof}
Let $\xbart := \frac{1}{n}\sum_{j=1}^{N}\xbf_j^{(t)}$. 

\subsubsection{Main Theorem Proof}
\begin{proof}    
Assume
\[
\eta \le \min\left\{
\frac{1-q}{3\sqrt{2}\,CL\sqrt n},\,1
\right\}. 
\]
Then
\[
P
:=
1-\frac{9\eta^2 C^2 L^2 n}{(1-q)^2}
\geq
\frac12.
\]
Moreover, since \(n\ge 2\),
\[
\eta\leq \frac{1-q}{3\sqrt{2}\,CL\sqrt n} \leq
\frac{1-q}{6LC},
\]
which implies 
\[
\frac{9\eta^2 L^2 C^2}{P(1-q)^2}
\le
\frac{18\eta^2 L^2 C^2}{(1-q)^2}
\le
\frac12.
\]
Hence,
\[
1-\frac{9\eta^2 L^2 C^2}{P(1-q)^2}\ge \frac12.
\]

By Lemma~\ref{lem: final avg bound},
\begin{align*}
\left(1-\frac{9\eta^2 L^2 C^2}{P(1-q)^2}\right)
\sum_{t=0}^{T-1}\EE{\norm{\nabfxbar}^2}
\le
\frac{2(f(\xbar^{(0)})-f^*)}{\eta}
+
\frac{L\eta\sigma^2T}{n}\\
\quad
+
\frac{3L^2 C^2(\maxx)^2}{P(1-q)^2}
+
\frac{3\eta^2 L^2 C^2 n\sigma^2T}{P(1-q)^2}
+
\frac{9\eta^2 L^2 C^2 n\zeta^2 T}{P(1-q)^2}.
\end{align*}
Substituting the bound
\(P\geq \frac{1}{2}\) and \(
1-\frac{9\eta^2 L^2 C^2}{P(1-q)^2}\ge \frac12
\)
into the above inequality, we obtain
\begin{align*}
&\sum_{t=0}^{T-1}\EE{\norm{\nabfxbar}^2}
\leq
\frac{4(f(\xbar^{(0)})-f^*)}{\eta}
+
\frac{2L\eta\sigma^2T}{n}\\
& +
\frac{12L^2 C^2(\maxx)^2}{(1-q)^2}
+
\frac{12\eta^2 L^2 C^2 n\sigma^2T}{(1-q)^2}
+
\frac{36\eta^2 L^2 C^2 n\zeta^2 T}{(1-q)^2}.
\end{align*}
Dividing both sides by \(T\), we obtain
\begin{align*}
&\frac{1}{T}\sum_{t=0}^{T-1}\EE{\norm{\nabfxbar}^2}
\leq
\frac{4(f(\xbar^{(0)})-f^*)}{\eta T}
+
\frac{2L\eta\sigma^2}{n}\\
&\quad
+
\frac{12L^2 C^2(\maxx)^2}{(1-q)^2T}
+
\frac{12\eta^2 L^2 C^2 n\sigma^2}{(1-q)^2}
+
\frac{36\eta^2 L^2 C^2 n\zeta^2}{(1-q)^2}.
\end{align*}
Now let \(\eta = \sqrt{\frac{n}{T}}\) and assume that \(\sqrt{\frac{n}{T}} \leq \min\left\{
\frac{1-q}{3\sqrt{2}\,CL\sqrt n},\,1
\right\}.\) We have
\begin{align}
& \frac{1}{T}\sum_{t=0}^{T-1}\EE{\norm{\nabfxbar}^2}
\le
\frac{4(f(\xbar^{(0)})-f^*)}{\sqrt{nT}}
+
\frac{2L\sigma^2}{\sqrt{nT}} \nonumber\\
&\qquad +
\frac{12L^2 C^2(\maxx)^2}{(1-q)^2T} 
+
\frac{12L^2 C^2 n^2\sigma^2}{(1-q)^2T}
+
\frac{36L^2 C^2 n^2\zeta^2}{(1-q)^2T} \nonumber \\
&=
\frac{2}{\sqrt{nT}}\left(2f(\xbar^{(0)})-2f^*
+ L\sigma^2\right) \nonumber\\
&\qquad\qquad +\frac{12L^2C^2}{(1-q)^2T}
\left((\maxx)^2+n^2\sigma^2+3n^2\zeta^2\right). \label{eq:convergence proof}
\end{align}

Let \(A:=2f(\xbar^{(0)})-2f^*+L\sigma^2\) and \(S:=(\maxx)^2+n^2\sigma^2+3n^2\zeta^2\).
To achieve $\epsilon$-convergence, i.e., $\frac{1}{T}\sum_{t=0}^{T-1}\EE{\norm{\nabfxbar}^2}\leq \epsilon$, 
it suffices to choose
\begin{align}
T \ge \max\Biggl\{
n,
\frac{18C^2L^2n^2}{(1-q)^2},
\frac{16}{n\epsilon^2}A^2,
\frac{24L^2C^2}{(1-q)^2\epsilon}S
\Biggr\}, \label{eq:convergence proof - T bound}
\end{align}
where the first two bounds ensure \(\sqrt{\frac{n}{T}} \leq \min\left\{
\frac{1-q}{3\sqrt{2}\,CL\sqrt n},\,1
\right\}\), and the last two bounds ensure that each of the terms in \eqref{eq:convergence proof} is bounded by $\epsilon/2$. 
If
\begin{align*}
\frac{2(1-q)^2A^2}{3L^2C^2nS}
\le
\epsilon
\le
\min\left\{
\frac{4}{3n^2}S,\;
\frac{24L^2C^2}{(1-q)^2}\frac{S}{n}
\right\},
\end{align*}
then the fourth term in \eqref{eq:convergence proof - T bound} dominates the other three terms. Hence, it suffices to satisfy
\[
T \ge
\frac{24L^2C^2}{(1-q)^2\epsilon}S.
\]
\end{proof}

\vspace{-1em}
\subsubsection{Supporting Lemmas}

\begin{lemma}[Consensus bound; Corollary 4 in \cite{Zhou25TCNS}]\label{lem: consensus bound} Under column-stochastic and Assumption (4). Then for \(t \ge \Delta B\),
\begin{equation}
\begin{aligned}
\norm{\xbart-\zit}
\le
&{\sqrt{n}}q^{t}
\left(\sum_{i=1}^n \|\bm{x}_i^{(0)}\|^2\right)^{1/2}\\
&+
\frac{\eta C}{\sqrt n}\sum_{s=0}^t q^{t-s}
\left(\sum_{i=1}^n \norm{\gis}^2\right)^{1/2}.
\end{aligned}
\end{equation}
where
\[C := \frac{4}{\delta^{\Delta B}},
\qquad
q := (1-\delta^{\Delta B})^{\frac{1}{\Delta B}}\in (0,1)\]
\end{lemma}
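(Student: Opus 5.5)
The plan is the standard push-sum perturbation argument. It combines a geometric-ergodicity estimate for products of column-stochastic matrices with a lower bound on the push-sum weights. Stack the iterates row-wise and write $\Phi(t,s):=\bm{W}^{(t-1)}\cdots\bm{W}^{(s)}$, with $\Phi(s,s)=\bm{I}$. Unrolling \eqref{eq:SGP - x}--\eqref{eq:SGP - w} then gives
\begin{align*}
\bm{x}_i^{(t)} = \sum_{j}\Phi(t,0)_{ij}\bm{x}_j^{(0)} - \eta\sum_{s=0}^{t-1}\sum_j \Phi(t,s)_{ij}\, g_j(\zbf_j^{(s)};\xi_j^{(s)}),\qquad w_i^{(t)}=\sum_j \Phi(t,0)_{ij}.
\end{align*}
Column-stochasticity gives $\bm{1}^\top\Phi(t,s)=\bm{1}^\top$. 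Hence $\xbart=\frac1n\sum_j\bm{x}_j^{(0)}-\frac{\eta}{n}\sum_{s<t}\sum_j g_j(\zbf_j^{(s)};\xi_j^{(s)})$, i.e., the average evolves exactly as centralized SGD with the stale gradients.

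\textbf{Step 1 (ergodicity).} Invoke the Nedić--Olshevsky type result for column-stochastic sequences. Under assumption~(4) with positive entries at least $\delta$, there exist stochastic vectors $\bm{\phi}(t)$ such that $|\Phi(t,s)_{ij}-\phi_i(t)|\le c\,q^{t-s}$ for all $i,j$. Here $q=(1-\delta^{\Delta B})^{1/(\Delta B)}$ and $c$ is an absolute constant (e.g.\ $2$). The reason is that every window of $\Delta B$ consecutive iterations yields a product matrix with a positive row bounded below by $\delta^{\Delta B}$, so each window contracts the relevant coefficient of ergodicity by a factor $1-\delta^{\Delta B}$.

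\textbf{Step 2 (weights bounded away from zero).} For $t\ge \Delta B$ the same positivity argument gives $w_i^{(t)}\ge \delta^{\Delta B}$, since $w^{(t)}$ has unit-weight mass spread over all nodes. This is where the restriction $t\ge\Delta B$ in the lemma enters.

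\textbf{Step 3 (decomposition).} Write
\[
\zit-\xbart=\frac{\bm{x}_i^{(t)}-w_i^{(t)}\xbart}{w_i^{(t)}},
\]
and express both the numerator and $w_i^{(t)}$ through the deviations $\Phi(t,s)_{ij}-\phi_i(t)$ plus the corresponding $\phi_i(t)$-terms. The $\phi_i(t)$ parts cancel exactly against $w_i^{(t)}\xbart$, because the consensus direction reproduces the average. What remains is a sum of terms of the form
\[
\sum_j\bigl(\Phi(t,s)_{ij}-\phi_i(t)\bigr)\bigl(\cdot\bigr)_j \quad\text{and}\quad \bigl(w_i^{(t)}-n\phi_i(t)\bigr)\,\xbart\text{-type terms}.
\]
Each is controlled by $q^{t-s}$ from Step~1. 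Dividing by $w_i^{(t)}\ge\delta^{\Delta B}$ and collecting constants yields the factor $C=4/\delta^{\Delta B}$.

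\textbf{Step 4 (Cauchy--Schwarz).} Finally apply $\sum_j\|\bm{v}_j\|\le\sqrt n(\sum_j\|\bm{v}_j\|^2)^{1/2}$ to convert the per-node sums into the root-sum-square forms in the statement, with the $\frac{1}{n}$ normalizations producing the $\sqrt n$ and $1/\sqrt n$ prefactors. Extending the gradient sum to $s=t$ only loosens the bound.

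The main obstacle is Steps~1--2 with these exact constants. That requires bookkeeping of the product structure over windows of length $\Delta B$, where the diameter ensures that every pair of nodes is joined within $\Delta$ periods. Rather than rederive this, I would import it from the push-sum analysis underlying Corollary~4 of \cite{Zhou25TCNS} and only verify that its hypotheses coincide with assumption~(4) and the definition \eqref{eq:delta definition} of $\delta$.
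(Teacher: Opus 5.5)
Deferring Steps~1--2 to \cite{Zhou25TCNS} matches what the paper does, since it gives no derivation beyond the citation. The bookkeeping you do spell out in Steps~2--4, however, does not produce the stated prefactors: it loses a factor of $n$.

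In Step~3 the numerator is $\bm{x}_i^{(t)}-w_i^{(t)}\xbart=E_i+\bigl(n\phi_i(t)-w_i^{(t)}\bigr)\xbart$, where $E_i$ collects the deviation terms $\sum_j(\Phi(t,s)_{ij}-\phi_i(t))(\cdot)_j$. The only $1/n$ available is the one inside $\xbart$. It is cancelled by $|w_i^{(t)}-n\phi_i(t)|=\bigl|\sum_k(\Phi(t,0)_{ik}-\phi_i(t))\bigr|\le n\,c\,q^t$. So the numerator is bounded by $2c\bigl(q^t\sum_j\|\bm{x}_j^{(0)}\|+\eta\sum_{s\le t}q^{t-s}\sum_j\|g_j(\zbf_j^{(s)};\xi_j^{(s)})\|\bigr)$, with no surviving $1/n$. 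Dividing by your bound $w_i^{(t)}\ge\delta^{\Delta B}$ and applying Cauchy--Schwarz then gives $C\sqrt n$ in front of both terms. The gradient term is therefore $n$ times larger than the claimed $\eta C/\sqrt n$. The ``$\frac1n$ normalizations'' you invoke in Step~4 do not exist at that point.

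The missing ingredient is the sharper weight bound $w_i^{(t)}\ge n\,\delta^{\Delta B}$ for $t\ge\Delta B$. Column-stochasticity conserves total push-sum mass, $\bm{1}^\top w^{(t)}=\bm{1}^\top w^{(0)}=n$; each node starts with weight one, so the mass is $n$, not one. Assumption~(4), via paths of length at most $\Delta$ padded with self-loops, makes the $\Delta B$-step product entrywise at least $\delta^{\Delta B}$. This holds in every row, not just in the single positive row of Step~1. One positive row suffices for the contraction, but not for lower-bounding every $w_i^{(t)}$. Hence $w_i^{(t)}=\sum_k\Phi(t,t-\Delta B)_{ik}w_k^{(t-\Delta B)}\ge n\delta^{\Delta B}$, and with $c=2$ you get $\frac{2c\sqrt n}{n\delta^{\Delta B}}=\frac{C}{\sqrt n}$ in front of both terms.

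Be aware that this yields $\frac{C}{\sqrt n}q^t\bigl(\sum_i\|\bm{x}_i^{(0)}\|^2\bigr)^{1/2}$ for the initial-condition term, not the literal $\sqrt n\,q^t(\cdots)^{1/2}$. The $\frac{C}{\sqrt n}$ form is the one the paper actually relies on. Lemma~\ref{lem: max consensus} obtains $Cq^t\max_m\|\bm{x}_m^{(0)}\|$ by ``cancelling the factor $1/\sqrt n$'', which is consistent only with that prefactor. Target that form rather than trying to extract the $\sqrt n$ version from this argument.
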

\begin{lemma}[Max form of the consensus bound]\label{lem: max consensus} Under the same assumptions as Lemma~\ref{lem: consensus bound}, for \(t \ge \Delta B\),
\begin{align}
\norm{\xbart-\zit}
\le
Cq^{t }\maxx
+
\eta C \sum_{s=0}^t q^{t-s}\maxgs.
\end{align}
\end{lemma}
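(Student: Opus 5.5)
We derive Lemma~\ref{lem: max consensus} directly from Lemma~\ref{lem: consensus bound} by replacing both $\ell_2$-sums over nodes with maxima. The two terms on the right-hand side of Lemma~\ref{lem: consensus bound} are handled separately, and then the constants are reconciled with $C=4/\delta^{\Delta B}$.

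\textbf{Initial-condition term.} We use the elementary bound $\bigl(\sum_{i=1}^n \|\bm{x}_i^{(0)}\|^2\bigr)^{1/2}\le \sqrt{n}\,\maxx$. This gives
\[
\sqrt{n}\,q^t\Bigl(\sum_{i=1}^n\|\bm{x}_i^{(0)}\|^2\Bigr)^{1/2}\le n\,q^t\maxx .
\]
To turn the factor $n$ into $C$, we need $n\le C=4/\delta^{\Delta B}$. We argue this from Assumption~(4). The union graph over a period is strongly connected, so every node reaches every other node. Column-stochasticity forces each column sum to be $1$, so any nonzero weight satisfies $\delta\le 1/2$ as soon as some node has at least one out-neighbor besides itself, which strong connectivity with $n\ge2$ guarantees.

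A crude bound of this kind is not enough, so we use the diameter. Strong connectivity with diameter at most $\Delta$ means a single node can reach all $n$ nodes within $\Delta$ hops of the union graph. The out-degrees $d$ in each iteration satisfy $\delta\le 1/(d+1)$, so the number of nodes reachable in $\Delta B$ iterations is at most roughly $\prod(d+1)\le \delta^{-\Delta B}$. Hence $n\le \delta^{-\Delta B}\le C$.

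This counting step is the main obstacle. The bound $n \le (\max_{t,j} (d^+_t(j)+1))^{\Delta B}$ must be checked carefully: a walk of length $\Delta B$ through the time-varying graphs (with self-loops) covers every union-graph path of length $\le\Delta$. If this fails, I would try the cruder argument instead: note that $C\ge 4\cdot 2^{\Delta B}$ and $\Delta\ge \lceil\log\rceil$-type bounds, or, failing that, simply absorb $n$ by citing the analogous max-form statement in \cite{Zhou25TCNS}.

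\textbf{Gradient term.} Similarly, $\bigl(\sum_{i}\|\gis\|^2\bigr)^{1/2}\le\sqrt{n}\,\maxgs$, so
\[
\frac{\eta C}{\sqrt n}\sum_{s=0}^t q^{t-s}\Bigl(\sum_{i}\|\gis\|^2\Bigr)^{1/2}\le \eta C\sum_{s=0}^t q^{t-s}\maxgs .
\]
The $\sqrt n$ factors cancel exactly here, so this term needs no further work. Combining the two estimates yields the stated inequality for $t\ge\Delta B$.
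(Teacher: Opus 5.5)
Your treatment of the gradient term matches the paper's: bound each $\ell_2$-sum over nodes by $\sqrt{n}$ times the maximum, and the $\sqrt{n}$ cancels the $1/\sqrt{n}$. The paper's proof stops there and asserts that the same substitution ``yields the result'' for the whole inequality. You are right that, with Lemma~\ref{lem: consensus bound} as stated, the initial-condition term becomes $n\,q^t\max_m\|\bm{x}_m^{(0)}\|$, not $C\,q^t\max_m\|\bm{x}_m^{(0)}\|$. So the inequality $n\le C$ is genuinely needed, and the paper never proves it. Your proposal is the paper's route plus a missing step. That step is true, but your justification for it needs tightening.

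\textbf{The walk argument.} You trace a union-graph path of length at most $\Delta$ by a time-respecting walk, one hop per window. This requires the same union graph to recur in consecutive windows. That holds for the periodic designs of Section~\ref{sec:Design Objective}, but Assumption~(4) allows a different union graph in each window. One-hop expansions through $\Delta$ different graphs, each of diameter at most $\Delta$, need not cover $V$. For example, take $B=1$ with bidirected stars centred at $a$ and then at $b$, and start from a leaf.

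\textbf{A single-window count.} Counting inside one window avoids this problem. If $W^{(t)}_{jj}>0$, column-stochasticity and the definition of $\delta$ give $(d_t^+(j)+1)\,\delta\le 1$. Hence every node has out-degree (excluding self-loops) at most $B(\delta^{-1}-1)$ in that window's union graph. All nodes lie within distance $\Delta$ of any fixed node, so
\[
n\le\sum_{k=0}^{\Delta}\bigl(B(\delta^{-1}-1)\bigr)^k\le\bigl(1+B(\delta^{-1}-1)\bigr)^{\Delta}\le\delta^{-\Delta B}=C/4 .
\]
The middle step uses $\binom{\Delta}{k}\ge 1$, and the last uses Bernoulli's inequality.

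\textbf{Positive self-weights.} These are implicit in the cited consensus bound anyway. A cyclic permutation matrix is column-stochastic with $\delta=1$ and never reaches consensus, and it also violates $n\le C=4$ once $n\ge 5$. Positive self-weights also hold for the uniform weights of Lemma~\ref{lem: uniform weight}.

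With this count in place, drop your fallback options. The ``$\lceil\log\rceil$-type bounds'' and ``citing the analogous max-form statement'' are not arguments.
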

\begin{proof}
By the inequality
\[
\left(\sum_{i=1}^n a_i^2\right)^{1/2} \le \sqrt{n}\max_i a_i,
\]
we have
\[
\left(\sum_{i=1}^n\|\bm{x}_i^{(0)}\|^2\right)^{1/2}
\le
\sqrt{n}\maxx,
\]
and for each \(s\),
\[
\left(\sum_{i=1}^n \norm{\gis}^2\right)^{1/2}
\le
\sqrt{n}\maxgs.
\]
Substituting these into the previous lemma cancels the factor \(1/\sqrt{n}\) and yields the result.
\end{proof}

\begin{lemma}[Maximum local gradient bound]\label{lem: max debias}
Under Assumptions~(1) and~(3), for any \(t\ge 0\),

\begin{align}
\EE{\max_j \norm{\nabfjt}^2}
\le
&3L^2 \sum_{i=1}^{n} \EE{\norm{\zit-\xbart}^2}
+
3n\zeta^2\notag\\
&+
3\EE{\norm{\nabfxbar}^2}.
\end{align} 
\end{lemma}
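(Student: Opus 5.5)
The bound will come from a pointwise triangle-inequality decomposition followed by Young's inequality, with nothing applied beyond Assumptions~(1) and~(3).

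First fix $t$ and an index $j$. Write
\[
\nabfjt = \bigl(\nabfjt - \nabla f_j(\xbart)\bigr) + \bigl(\nabla f_j(\xbart) - \nabfxbar\bigr) + \nabfxbar.
\]
The elementary inequality $\norm{a+b+c}^2 \le 3\norm{a}^2+3\norm{b}^2+3\norm{c}^2$ then bounds $\norm{\nabfjt}^2$ by three terms. By $L$-smoothness (Assumption~(1)), the first term is at most $3L^2\norm{\zjt-\xbart}^2$. The second term is $3\norm{\nabla f_j(\xbart)-\nabfxbar}^2$. The third is $3\norm{\nabfxbar}^2$, which does not depend on $j$.

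Next, take the maximum over $j$ on both sides. The max of a sum is at most the sum of the maxes, and each max of nonnegative terms is bounded by the corresponding sum over all nodes:
\[
\max_j \norm{\zjt-\xbart}^2 \le \sum_{i=1}^n \norm{\zit-\xbart}^2,
\qquad
\max_j \norm{\nabla f_j(\xbart)-\nabfxbar}^2 \le \sum_{i=1}^n \norm{\nabla f_i(\xbart)-\nabfxbar}^2.
\]
By Assumption~(3) applied at the point $\xbart$, the latter sum is at most $n\zeta^2$. This holds deterministically for every realization of $\xbart$.

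Finally, take expectations and use linearity to obtain the stated inequality. No step is a real obstacle. The one point needing care is that Assumption~(3) bounds the \emph{average} heterogeneity, so the max over nodes must be relaxed to the full sum; that relaxation is what produces the factor $n$ in $3n\zeta^2$, and it is also why the first term carries a sum over $i$ rather than a single index.
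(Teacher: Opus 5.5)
Your proof is correct and follows the same route as the paper: split $\nabla f_j(\zjt)$ through $\nabla f_j(\xbart)$ and $\nabla f(\xbart)$, apply the factor-3 inequality and $L$-smoothness, relax each $\max_j$ to a sum over nodes, invoke Assumption~(3) to get $n\zeta^2$, and take expectations. Your version is actually cleaner than the paper's. The paper's displayed relaxation step mistakenly chains $\max_j \norm{\zjt-\xbart}^2 \le \sum_i \norm{\zit-\xbart}^2 \le n\zeta^2$, whereas you correctly apply the max-to-sum relaxation separately to the consensus term and the heterogeneity term.
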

\begin{proof}

\begin{align*}
\norm{\nabfit}
\leq
&\norm{\nabfit - \nabfixbar}
+
\norm{\nabfixbar - \nabfxbar}\\
&+
\norm{\nabfxbar}
\end{align*}
\begin{align*}
\norm{\nabfit}^2
\leq
&3\norm{\nabfit - \nabfixbar}^2
+
3\norm{\nabfixbar - \nabfxbar}^2\\
&+
3\norm{\nabfxbar}^2
\end{align*}
Applied assumption (1):
\begin{align*}
\norm{\nabfit}^2
\leq
&3L^2\norm{\zit-\xbart}^2
+
3\norm{\nabfixbar - \nabfxbar}^2\\
&+
3\norm{\nabfxbar}^2
\end{align*}
Taking the maximum over node \(j\) on both sides gives
\begin{align*}
\max_j \norm{\nabfjt}^2
\le
&3L^2 \max_j \norm{\zjt-\xbart}^2
+
3\max_j \norm{\nabla f_j(\xbart)-\nabla f(\xbart)}^2\\
&+
3\norm{\nabfxbar}^2.
\end{align*}
Apply Assumption (3)
\[
\begin{aligned}
\max_j \norm{\zjt-\xbart}^2
&\le
\sum_{i=1}^{N}\norm{\zit-\xbart}^2\le
n\zeta^2\\
\end{aligned}
\]
\[
\max_j \norm{\nabfjt}^2
\le
3L^2 \sum_{i=1}^{n}\norm{\zit-\xbart}^2
+
3n\zeta^2
+
3\norm{\nabfxbar}^2.
\]
Taking expectation on both sides yields the lemma:
\[
\EE{\max_j \norm{\nabfjt}^2}
\le
3L^2 \sum_{i=1}^{n} \EE{\norm{\zit-\xbart}^2}
+
3n\zeta^2
+
3\EE{\norm{\nabfxbar}^2}.
\]
\end{proof}
\begin{lemma}[Consensus error recursion]
\label{lem: big Q}
Let
\(
Q_i^{(t)}:= \EE{\norm{\xbart - \zit}^2}.
\)
Under Assumptions~(1)–(4), for all \(t\ge 0\),
\begin{align}
\qit
&\le
3C^2 q^{2t}(\maxx)^2
+
\frac{3\eta^2 C^2 n\sigma^2}{(1-q)^2} 
+
\frac{9\eta^2 C^2 n\zeta^2}{(1-q)^2}\notag\\
&+
\frac{9\eta^2 C^2 L^2}{1-q}\sum_{s=0}^t q^{t-s}\sum_{i=1}^{n}\qis +
\frac{9\eta^2 C^2}{1-q}
\sum_{s=0}^t q^{t-s}\EE{\norm{\nabla f(\xbars)}^2}.
\end{align}
\end{lemma}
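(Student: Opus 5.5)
Starting from the max-form consensus bound in Lemma~\ref{lem: max consensus}, I square it and take expectations to bound $Q_i^{(t)}$. Since $(a+b)^2\le 2a^2+2b^2$ would give constants other than the target's $3$ and $9$, I will split into three pieces. First I decompose the stochastic gradient as
\[
g_j(\zbf_j^{(s)};\xi_j^{(s)}) = \bigl(g_j(\zbf_j^{(s)};\xi_j^{(s)})-\nabla f_j(\zbf_j^{(s)})\bigr)+\nabla f_j(\zbf_j^{(s)}),
\]
and bound the max over $j$ of the norm by $\max_j\norm{\text{noise}_j}+\max_j\norm{\nabla f_j(\zbf_j^{(s)})}$. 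Then
\begin{align*}
\norm{\xbart-\zit}\le{}& Cq^t\maxx+\eta C\sum_{s=0}^t q^{t-s}\max_j\norm{\text{noise}_j^{(s)}}\\
&+\eta C\sum_{s=0}^t q^{t-s}\max_j\norm{\nabfjs},
\end{align*}
and $(a+b+c)^2\le 3(a^2+b^2+c^2)$ yields the leading term $3C^2q^{2t}(\maxx)^2$.

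Each of the two weighted sums is handled with Cauchy--Schwarz in the form
\[
\Bigl(\sum_s q^{t-s}a_s\Bigr)^2\le \Bigl(\sum_s q^{t-s}\Bigr)\sum_s q^{t-s}a_s^2\le \frac{1}{1-q}\sum_s q^{t-s}a_s^2 .
\]
For the noise term I bound $\max_j\norm{\text{noise}_j}^2\le\sum_j\norm{\text{noise}_j}^2$, whose expectation is at most $n\sigma^2$ by Assumption~(2). Summing the geometric series a second time gives $3\eta^2C^2n\sigma^2/(1-q)^2$.

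For the gradient term I apply Lemma~\ref{lem: max debias} at each time $s$. This gives
\[
\EE{\max_j\norm{\nabfjs}^2}\le 3L^2\sum_i Q_i^{(s)}+3n\zeta^2+3\EE{\norm{\nabla f(\xbars)}^2}.
\]
Multiplying by the prefactor $3\eta^2C^2/(1-q)$ produces the three remaining terms. The $\zeta^2$ part collapses via the geometric sum into $9\eta^2C^2n\zeta^2/(1-q)^2$, while the $Q$ and $\nabla f$ parts stay as $q$-weighted sums with coefficient $9\eta^2C^2/(1-q)$ (times $L^2$ for $Q$). This matches the claimed bound exactly.

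The main obstacle is the range $t<\Delta B$, because Lemma~\ref{lem: consensus bound} is stated only for $t\ge \Delta B$, while the claim is for all $t\ge0$. I would first try to argue that the consensus estimate of \cite{Zhou25TCNS} extends to small $t$ with the same $C$ and $q$. The reason is that $C=4/\delta^{\Delta B}\ge 4$ and $q^t\ge 1-\delta^{\Delta B}$ for $t<\Delta B$, so the generous constant should absorb the trivial bound on $\norm{\xbart-\zit}$. This requires a crude bound on $\zit$ that uses $w_i^{(t)}\ge\delta^{t}$ and the column-stochasticity of the products. If that fails, I would restrict the statement to $t\ge\Delta B$ and handle the first $\Delta B$ iterations separately. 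A secondary subtlety is that the expectation of the squared sum requires Cauchy--Schwarz before taking expectations, which is harmless. The noise terms need no martingale cross-term cancellation, because the crude bound via $\sum_j$ already suffices.
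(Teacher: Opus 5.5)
Your proposal matches the paper's proof step for step. You square the max-form bound of Lemma~\ref{lem: max consensus} and split each stochastic gradient into noise plus $\nabla f_j(\bm{z}_j^{(s)})$. You then apply $(a+b+c)^2\le 3(a^2+b^2+c^2)$ and the $q$-weighted Cauchy--Schwarz inequality with $\sum_s q^{t-s}\le 1/(1-q)$, bound the noise via $\max_j\le\sum_j$ and Assumption~(2), and handle the gradient term via Lemma~\ref{lem: max debias}; this reproduces every constant in the statement.

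The $t<\Delta B$ issue you flag is genuine, and the paper's proof does not address it: it invokes Lemma~\ref{lem: max consensus} for all $t\ge 0$, although that lemma is stated only for $t\ge\Delta B$. If you patch it, do not bound $\norm{\bm{x}_i^{(t)}}$ entrywise via column-stochasticity and then divide by $w_i^{(t)}\ge\delta^t$. That route costs a factor of order $n\delta^{-t}$, which $Cq^t$ does not absorb in general. Instead, use that $w_i^{(t)}$ is the $i$-th row sum of $\bm{W}^{(t-1)}\cdots\bm{W}^{(0)}$, so the initial-condition part of $\bm{z}_i^{(t)}$ is a convex combination of the $\bm{x}_j^{(0)}$.
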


\begin{proof}
\[
\qit=\EE{\norm{\xbart - \zit}^2}
\]
Apply lemma~\ref{lem: max consensus}
\[
\qit\leq\EE{\left(Cq^{t }\maxx
+
\eta C \sum_{s=0}^t q^{t-s}\maxgs\right)^2}
\]

Using
\[
\max_j \norm{\gjs}
\le
\max_j \norm{\gjs-\nabfjs}
+
\max_j \norm{\nabfjs},
\]
which follows from triangle inequality, we obtain
\[
\qit
\leq \EE{\left(\begin{aligned}
&\underbrace{Cq^{t }\maxx}_{a}
+
\underbrace{\eta C \sum_{s=0}^t q^{t-s}\max_j \norm{\gjs-\nabfjs}}_{b}\\
&+\underbrace{\eta C \sum_{s=0}^t q^{t-s}\max_j \norm{\nabfjs}}_{c}
\end{aligned}
\right)^2}
\]
We have
\begin{align*}
\qit \leq \EE{3a^2+3b^2+3c^2}
= 3\EE{a^2}+3\EE{b^2}+3\EE{c^2}.
\end{align*}
Where
\[
\begin{aligned}
a&:=Cq^t\maxx,\\
b&:=\eta C\sum_{s=0}^t q^{t-s}\max_j \norm{\gjs-\nabfjs},\\
c&:=\eta C\sum_{s=0}^t q^{t-s}\max_j \norm{\nabfjs}.
\end{aligned}
\]
We have
\[
\EE{a^2}=C^2q^{2t}(\maxx)^2.
\vspace{-1em}
\]
\begin{align*}
b^2
&=
\eta^2 C^2 \left(\sum_{s=0}^t q^{t-s}\max_j \norm{\gjs-\nabfjs}\right)^2 \\
&=
\eta^2 C^2 \left(\sum_{s=0}^t \sqrt{q^{t-s}}
\sqrt{q^{t-s}}\,\max_j \norm{\gjs-\nabfjs}\right)^2\\
\overset{\text{Cauchy}}&{\leq}
\eta^2 C^2
\left(\sum_{s=0}^t q^{t-s}\right)
\left(\sum_{s=0}^t q^{t-s}\max_j \norm{\gjs-\nabfjs}^2\right) \\
&\leq
\frac{\eta^2 C^2}{1-q}
\sum_{s=0}^t q^{t-s}\max_j \norm{\gjs-\nabfjs}^2.
\end{align*}
\begin{align*}
\EE{b^2}
&\leq
\frac{\eta^2 C^2}{1-q}
\sum_{s=0}^t q^{t-s}\EE{\max_j \norm{\gjs-\nabfjs}^2} \\
&\leq
\frac{\eta^2 C^2}{1-q}
\sum_{s=0}^t q^{t-s}\sum_{j=1}^{n} \EE{\norm{\gjs-\nabfjs}^2} \\
\overset{\text{Assumption } (2)}&{\leq}
\frac{\eta^2 C^2 n\sigma^2}{1-q}
\sum_{s=0}^t q^{t-s} \\
&\leq
\frac{\eta^2 C^2 n\sigma^2}{(1-q)^2}.
\end{align*}
Similarly,
\begin{align*}
c^2
&=
\eta^2 C^2 \left(\sum_{s=0}^t q^{t-s}\max_j \norm{\nabfjs}\right)^2 \\
&=
\eta^2 C^2 \left(\sum_{s=0}^t \sqrt{q^{t-s}}
\sqrt{q^{t-s}}\,\max_j \norm{\nabfjs}\right)^2 \\
\overset{\text{Cauchy}}&{\leq}
\eta^2 C^2
\left(\sum_{s=0}^t q^{t-s}\right)
\left(\sum_{s=0}^t q^{t-s}\max_j \norm{\nabfjs}^2\right) \\
&\leq
\frac{\eta^2 C^2}{1-q}
\sum_{s=0}^t q^{t-s}\max_j \norm{\nabfjs}^2.\\
\end{align*}
Then,
\begin{align*}
\EE{c^2}
&\leq
\frac{\eta^2 C^2}{1-q}
\sum_{s=0}^t q^{t-s}\EE{\max_j \norm{\nabfjs}^2}\\
\overset{\text{lemma~\ref{lem: max debias}}}&{\leq}
\frac{\eta^2 C^2}{1-q}
\sum_{s=0}^t q^{t-s}
\left(
3L^2\sum_{i=1}^{n}\EE{\norm{\zis-\xbars}^2}
+
3n\zeta^2\right.
\\
& \hspace{14em} \left.+
3\EE{\norm{\nabla f(\xbars)}^2}
\right)\\
&\leq
\frac{3\eta^2 C^2 L^2}{1-q}
\sum_{s=0}^t q^{t-s}\sum_{i=1}^{n}Q_i^{(s)}
+
\frac{3\eta^2 C^2 n\zeta^2}{1-q}\sum_{s=0}^t q^{t-s} \\
&\hspace{10em} +
\frac{3\eta^2 C^2}{1-q}
\sum_{s=0}^t q^{t-s}\EE{\norm{\nabla f(\xbars)}^2}\\
&\leq
\frac{3\eta^2 C^2 L^2}{1-q}
\sum_{s=0}^t q^{t-s}\sum_{i=1}^{n}Q_i^{(s)}
+
\frac{3\eta^2 C^2 n\zeta^2}{(1-q)^2} \\
&\hspace{10em} +
\frac{3\eta^2 C^2}{1-q}
\sum_{s=0}^t q^{t-s}\EE{\norm{\nabla f(\xbars)}^2}.
\end{align*}
Combine all the term, we have
\begin{align*}
\qit
&\leq
3C^2 q^{2t}(\maxx)^2
+
\frac{3\eta^2 C^2 n\sigma^2}{(1-q)^2} +
\frac{9\eta^2 C^2 n\zeta^2}{(1-q)^2} \\
&+ 
\frac{9\eta^2 C^2 L^2}{1-q}\sum_{s=0}^t q^{t-s}\sum_{i=1}^{n}\qis +
\frac{9\eta^2 C^2}{1-q}
\sum_{s=0}^t q^{t-s}\EE{\norm{\nabla f(\xbars)}^2}
\end{align*}
\end{proof}

\begin{lemma}[Average consensus error bound]
\label{lem: avg Q}
Define \(M^{(t)} := \frac{1}{n} \sum_{i=1}^{n} Q_i^{(t)}\). Let \(
P:=1-\frac{9\eta^2 C^2 L^2 n}{(1-q)^2}\). Under the same conditions as lemma~\ref{lem: big Q} and for $P > 0$, we have 
\begin{align}
\sum_{t=0}^{T-1} M^{(t)}
&\le
\frac{3C^2}{P(1-q)^2}(\maxx)^2
+
\frac{3\eta^2 C^2 n\sigma^2}{P(1-q)^2}T \notag\\ 
&\quad
+
\frac{9\eta^2 C^2 n\zeta^2}{P(1-q)^2}T
+
\frac{9\eta^2 C^2}{P(1-q)^2}
\sum_{t=0}^{T-1}\EE{\norm{\nabla f(\xbart)}^2}.
\end{align}
\end{lemma}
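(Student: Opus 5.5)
We average the per-node recursion of Lemma~\ref{lem: big Q} over $i$, sum it over $t=0,\dots,T-1$, and then solve the resulting self-bounding inequality for $\sum_t M^{(t)}$.

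\textbf{Averaging over nodes.} The right-hand side of Lemma~\ref{lem: big Q} does not depend on $i$. Averaging over $i\in V$, and writing $\sum_{i=1}^n Q_i^{(s)}=nM^{(s)}$, gives
\begin{align*}
M^{(t)}\le{}& 3C^2q^{2t}\Big(\max_m\norm{\bm{x}_m^{(0)}}\Big)^2+\frac{3\eta^2C^2n\sigma^2}{(1-q)^2}+\frac{9\eta^2C^2n\zeta^2}{(1-q)^2}\\
&+\frac{9\eta^2C^2L^2n}{1-q}\sum_{s=0}^t q^{t-s}M^{(s)}+\frac{9\eta^2C^2}{1-q}\sum_{s=0}^t q^{t-s}\EE{\norm{\nabla f(\xbars)}^2}.
\end{align*}

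\textbf{Summing over time.} Sum this bound over $t=0,\dots,T-1$ and treat each term separately.
\begin{itemize}
\item The initial-condition term uses $\sum_t q^{2t}\le 1/(1-q^2)\le 1/(1-q)$, which is at most $1/(1-q)^2$ because $0<q<1$. It contributes at most $\frac{3C^2}{(1-q)^2}(\max_m\norm{\bm{x}_m^{(0)}})^2$.
\item The two constant terms each pick up a factor $T$.
\item For both convolution terms, swap the order of summation:
\[
\sum_{t=0}^{T-1}\sum_{s=0}^t q^{t-s}a_s=\sum_{s=0}^{T-1}a_s\sum_{t=s}^{T-1}q^{t-s}\le\frac{1}{1-q}\sum_{s=0}^{T-1}a_s .
\]
The $M$-convolution therefore becomes $\frac{9\eta^2C^2L^2n}{(1-q)^2}\sum_{t=0}^{T-1}M^{(t)}$, and the gradient convolution becomes $\frac{9\eta^2C^2}{(1-q)^2}\sum_{t=0}^{T-1}\EE{\norm{\nabla f(\xbart)}^2}$.
\end{itemize}

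\textbf{Rearranging.} Move the $\sum_t M^{(t)}$ term to the left-hand side. Its coefficient there is exactly $P=1-\frac{9\eta^2C^2L^2n}{(1-q)^2}$. Because $P>0$ by hypothesis, we may divide by $P$, and this yields the stated bound with every term scaled by $1/P$.

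\textbf{Main obstacle.} The only delicate point is that Lemma~\ref{lem: big Q} rests on Lemma~\ref{lem: consensus bound}, which is stated only for $t\ge\Delta B$, while the present statement needs the recursion for all $t\ge 0$. I would take Lemma~\ref{lem: big Q} as given for all $t$, exactly as it is stated. If a gap appears at small $t$, I would handle $t<\Delta B$ by noting that the bound of Corollary~4 in the cited work covers all $t$, up to the constant $C$, which is already large enough to absorb those early iterations. The remaining steps are routine.
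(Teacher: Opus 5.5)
Your proposal is correct and follows the paper's proof step for step. You average Lemma~\ref{lem: big Q} over nodes to obtain the coefficient $\frac{9\eta^2C^2L^2n}{1-q}$ on $\sum_s q^{t-s}M^{(s)}$, sum over $t$ using $\sum_t q^{2t}\le\frac{1}{1-q^2}\le\frac{1}{(1-q)^2}$ together with the swapped-summation convolution bound, and then move the $M$-term to the left and divide by $P>0$, exactly as the paper does. Like you, the paper takes Lemma~\ref{lem: big Q} as given for all $t\ge 0$ and never handles $t<\Delta B$ separately, so your remark about early iterations is not needed for this lemma.
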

\begin{proof}
\begin{align*}
\mt
&:= \frac{1}{n} \sum_{i=1}^{n} \qit \\
&\overset{\text{Lemma~\ref{lem: big Q}}}{\le}
3C^2 q^{2t}(\maxx)^2
+
\frac{3\eta^2 C^2 n\sigma^2}{(1-q)^2}
+
\frac{9\eta^2 C^2 n\zeta^2}{(1-q)^2} \\
&+
\frac{9\eta^2 C^2 L^2}{1-q}\frac{1}{n}\sum_{i=1}^{n}\sum_{s=0}^t q^{t-s}\sum_{i=1}^{n}\qis
+
\frac{9\eta^2 C^2}{1-q}
\sum_{s=0}^t q^{t-s}\EE{\norm{\nabla f(\xbars)}^2}\\
&\leq
3C^2 q^{2t}(\maxx)^2
+
\frac{3\eta^2 C^2 n\sigma^2}{(1-q)^2}
+
\frac{9\eta^2 C^2 n\zeta^2}{(1-q)^2} \\
&+
\frac{9\eta^2 C^2 L^2 n}{1-q}\sum_{s=0}^t q^{t-s}\ms
+
\frac{9\eta^2 C^2}{1-q}
\sum_{s=0}^t q^{t-s}\EE{\norm{\nabla f(\xbars)}^2}.
\end{align*}
Now Summing from t=0 to T-1, we have:
\begin{align*}
\sum_{t=0}^{T-1} \mt
&\le
3C^2(\maxx)^2 \sum_{t=0}^{T-1} q^{2t}
+
\frac{3\eta^2 C^2 n\sigma^2}{(1-q)^2}T
+
\frac{9\eta^2 C^2 n\zeta^2}{(1-q)^2}T \\
&+
\frac{9\eta^2 C^2 L^2 n}{1-q}
\sum_{t=0}^{T-1}\sum_{s=0}^t q^{t-s}\ms
\\
&\hspace{0em} +
\frac{9\eta^2 C^2}{1-q}
\sum_{t=0}^{T-1}\sum_{s=0}^t q^{t-s}\EE{\norm{\nabla f(\xbars)}^2}.
\end{align*}
Using the geometric-series bounds:
\[
\sum_{t=0}^{T-1} q^{2t}\le \frac{1}{1-q^2}, \qquad 
\sum_{t=0}^{T}\sum_{s=0}^{t}q^{t-j}\beta^{(s)}
\le
\frac{1}{1-q}\sum_{s=0}^{T}\beta^{(s)},
\]
valid for \(0<q<1\) and \(\beta^{(s)}\ge 0\), we obtain
\begin{align*}
\sum_{t=0}^{T-1} \mt
&\le
\frac{3C^2}{1-q^2}(\maxx)^2
+
\frac{3\eta^2 C^2 n\sigma^2}{(1-q)^2}T
+
\frac{9\eta^2 C^2 n\zeta^2}{(1-q)^2}T \\
&\quad
+
\frac{9\eta^2 C^2 L^2 n}{(1-q)^2}\sum_{s=0}^{T-1} \ms
+
\frac{9\eta^2 C^2}{(1-q)^2}
\sum_{s=0}^{T-1}\EE{\norm{\nabla f(\xbars)}^2}.
\end{align*}
Now subtract
\(\frac{9\eta^2 C^2 L^2 n}{(1-q)^2}\sum_{t=0}^{T-1}\mt\) 
from both sides. Using
$\frac{1}{1-q^2}\le \frac{1}{(1-q)^2}$
we obtain
\begin{align*}
\left(1-\frac{9\eta^2 C^2 L^2 n}{(1-q)^2}\right)\sum_{t=0}^{T-1} \mt
&\leq
\frac{3C^2}{(1-q)^2}(\maxx)^2
+
\frac{3\eta^2 C^2 n\sigma^2}{(1-q)^2}T \\
&
+
\frac{9\eta^2 C^2 n\zeta^2}{(1-q)^2}T
+
\frac{9\eta^2 C^2}{(1-q)^2}
\sum_{t=0}^{T-1}\EE{\norm{\nabla f(\xbart)}^2}.
\end{align*}
Let \(P:=\left(1-\frac{9\eta^2 C^2 L^2 n}{(1-q)^2}\right)\). If \(\eta \le \frac{1-q}{3\sqrt{2}\,CL\sqrt n}\)
then \(P\ge \frac12\). Therefore,
\begin{align*}
\sum_{t=0}^{T-1} M^{(t)}
&\le
\frac{3C^2}{P(1-q)^2}(\maxx)^2
+
\frac{3\eta^2 C^2 n\sigma^2}{P(1-q)^2}T \\
&+
\frac{9\eta^2 C^2 n\zeta^2}{P(1-q)^2}T
+
\frac{9\eta^2 C^2}{P(1-q)^2}
\sum_{t=0}^{T-1}\EE{\norm{\nabla f(\xbart)}^2}.
\end{align*}
\end{proof}
\begin{lemma}(Cumulative inequality). 
\label{lem: orginal}
The same summation argument as in~\cite[Lemma~8, Eq.~(28)]{Assran19ICML} implies
\begin{align}\label{eq: cummulative inequality}
\frac{\eta}{2}\sum_{t=0}^{T-1}
\EE{\norm{\nabfxbar}^2}
+
\frac{\eta - L\eta^2}{2}
\sum_{t=0}^{T-1}
\EE{
\norm{\frac{1}{n}\sum_{i=1}^n\nabfit}^2}\notag\\
\le
f(\xbar^{(0)}) - f^*
+
\frac{L\eta^2\sigma^2}{2n}\,T
+
\frac{\eta L^2}{2}
\sum_{t=0}^{T-1} M^{(t)}.
\end{align}
where 
\(M^{(t)}\) is defined as in Lemma~\ref{lem: avg Q}. 
\end{lemma}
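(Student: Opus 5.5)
This inequality is the standard descent-lemma summation for SGP, as in \cite[Lemma~8]{Assran19ICML}, applied to the averaged iterate $\xbart$. The plan is to prove a one-step inequality, bound the inner-product term exactly, bound the second-moment term using the variance assumption, and then telescope.

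\textbf{Averaged dynamics.} Since each $\bm{W}^{(t)}$ is column-stochastic, summing \eqref{eq:SGP - x} over $i$ gives exact averaged dynamics:
\[
\overline{\bm{x}}^{(t+1)}=\xbart-\frac{\eta}{n}\sum_{i=1}^n g_i(\zit;\xi_i^{(t)}).
\]
The mixing matrix therefore disappears from the average. Applying $L$-smoothness of $f$ (which follows from assumption (1)) gives
\[
f(\overline{\bm{x}}^{(t+1)})\le f(\xbart)-\eta\Big\langle \nabfxbar,\tfrac1n\sum_i g_i\Big\rangle+\frac{L\eta^2}{2}\Big\|\tfrac1n\sum_i g_i\Big\|^2 .
\]

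\textbf{Expectations.} I take the conditional expectation given the history up to iteration $t$.
\begin{itemize}
\item For the inner product, unbiasedness gives $\langle\nabfxbar,\frac1n\sum_i\nabfit\rangle$. I then use the polarization identity $\langle a,b\rangle=\frac12(\|a\|^2+\|b\|^2-\|a-b\|^2)$ with $a=\nabfxbar$ and $b=\frac1n\sum_i\nabfit$.
\item The residual satisfies $\|a-b\|^2\le\frac1n\sum_i\|\nabfixbar-\nabfit\|^2\le\frac{L^2}{n}\sum_i\|\xbart-\zit\|^2$, by Jensen and assumption (1). In expectation this equals $L^2M^{(t)}$, where $M^{(t)}$ is as in Lemma~\ref{lem: avg Q}.
\item For the quadratic term, the noises are independent across nodes and have zero mean. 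Hence $\E\|\frac1n\sum_i g_i\|^2\le \frac{\sigma^2}{n}+\E\|\frac1n\sum_i\nabfit\|^2$ by assumption (2).
\end{itemize}
Combining these gives the one-step inequality
\[
\E f(\overline{\bm{x}}^{(t+1)})\le \E f(\xbart)-\frac\eta2\E\|\nabfxbar\|^2-\frac{\eta-L\eta^2}{2}\E\Big\|\tfrac1n\sum_i\nabfit\Big\|^2+\frac{L\eta^2\sigma^2}{2n}+\frac{\eta L^2}{2}M^{(t)}.
\]

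\textbf{Telescoping.} I sum this inequality over $t=0,\dots,T-1$. Then I use $\E f(\overline{\bm{x}}^{(T)})\ge f^*$ and rearrange, which yields \eqref{eq: cummulative inequality}.

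\textbf{Main obstacle.} The delicate point is the conditioning. The de-biased point $\zit$ must be measurable with respect to the past, so that the noise is independent of it. In addition, the cross terms between the noises of different nodes must vanish, so that the variance term is $\sigma^2/n$ rather than $\sigma^2$. I would check both properties carefully from the synchronous update and the independence of the minibatch sampling. No property of the mixing matrix is needed beyond column-stochasticity, which is what preserves the average.
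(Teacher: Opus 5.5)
Your proof is correct and follows the same route as the paper: the paper gives no argument of its own for this lemma and simply invokes the descent-lemma summation of \cite[Lemma~8]{Assran19ICML}, which is what you reconstruct (column-stochasticity gives the exact averaged recursion, then $L$-smoothness of $f$, the polarization identity with residual bounded by $L^2 M^{(t)}$, the $\sigma^2/n$ variance split, and telescoping against $f^*$). One remark on the point you flag: unbiasedness of $g_i$ and conditional independence of the minibatch noise across nodes are not among assumptions (1)--(4) as written, so you cannot check them from the synchronous update; they must be adopted as implicit modeling assumptions, as in Assran et al., and the $\sigma^2/n$ factor (rather than $\sigma^2$) depends on them.
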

\begin{lemma}(Cumulative  gradient bound).
\label{lem: final avg bound}
Under the same conditions as Lemma~\ref{lem: avg Q}, we get
\begin{align}
\left(1-\frac{9\eta^2 L^2 C^2}{P(1-q)^2}\right)
\sum_{t=0}^{T-1}\EE{\norm{\nabfxbar}^2}
\leq
\frac{2(f(\xbar^{(0)})-f^*)}{\eta}
+
\frac{L\eta\sigma^2T}{n}\notag\\
+
\frac{3L^2 C^2(\maxx)^2}{P(1-q)^2} 
+
\frac{3\eta^2 L^2 C^2 n\sigma^2T}{P(1-q)^2}
+
\frac{9\eta^2 L^2 C^2 n\zeta^2 T}{P(1-q)^2}.
\end{align}
\end{lemma}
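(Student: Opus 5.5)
The plan is to substitute Lemma~\ref{lem: avg Q} into the cumulative inequality of Lemma~\ref{lem: orginal} and then rearrange. This produces the cumulative gradient bound of Lemma~\ref{lem: final avg bound}.

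\textbf{Step 1 (drop a nonnegative term).} In \eqref{eq: cummulative inequality}, the second term on the left has coefficient $\frac{\eta-L\eta^2}{2}$. It is nonnegative whenever $\eta\le 1/L$, and we drop it. This is the one point needing care. The standing step-size assumption $\eta\le\frac{1-q}{6LC}$ used in the main proof implies $\eta\le 1/L$, because $C\ge 4$ and $1-q<1$. So I state the lemma under $\eta\le 1/L$, which the main theorem's step-size conditions already guarantee. This gives
\[
\frac{\eta}{2}\sum_{t=0}^{T-1}\EE{\norm{\nabfxbar}^2}\le f(\xbar^{(0)})-f^*+\frac{L\eta^2\sigma^2}{2n}T+\frac{\eta L^2}{2}\sum_{t=0}^{T-1}M^{(t)}.
\]

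\textbf{Step 2 (substitute and multiply).} Plug in the bound on $\sum_t M^{(t)}$ from Lemma~\ref{lem: avg Q}, which requires $P>0$. Then multiply the whole inequality by $2/\eta$. Each term of Lemma~\ref{lem: avg Q} gets multiplied by $L^2$:
\begin{itemize}
\item the initialization term becomes $\frac{3L^2C^2(\maxx)^2}{P(1-q)^2}$;
\item the noise term becomes $\frac{3\eta^2L^2C^2n\sigma^2T}{P(1-q)^2}$;
\item the heterogeneity term becomes $\frac{9\eta^2L^2C^2n\zeta^2T}{P(1-q)^2}$;
\item the gradient term becomes $\frac{9\eta^2L^2C^2}{P(1-q)^2}\sum_t\EE{\norm{\nabfxbar}^2}$.
\end{itemize}
The other right-hand terms become $\frac{2(f(\xbar^{(0)})-f^*)}{\eta}$ and $\frac{L\eta\sigma^2T}{n}$.

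\textbf{Step 3 (move the gradient term).} Move the gradient term to the left-hand side. This yields exactly the coefficient $1-\frac{9\eta^2L^2C^2}{P(1-q)^2}$ on $\sum_t\EE{\norm{\nabfxbar}^2}$, which completes the claimed inequality.

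The expected obstacle is bookkeeping rather than substance. The factors of $L^2$ and $2/\eta$ must be tracked correctly, and the requirement $\eta\le 1/L$ must be made explicit for the dropped term. No new estimates are needed beyond Lemmas~\ref{lem: orginal} and~\ref{lem: avg Q}.
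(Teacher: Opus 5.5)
Your proposal is correct and follows the paper's proof: substitute Lemma~\ref{lem: avg Q} into \eqref{eq: cummulative inequality}, drop the nonnegative $\frac{\eta-L\eta^2}{2}$ term using $\eta\le 1/L$, multiply by $2/\eta$, and move the gradient term to the left. The paper substitutes before dropping, which makes no difference. Your explicit justification of $\eta\le 1/L$ improves slightly on the paper, which only asserts it, but you do not need an extra hypothesis. The lemma's own condition $P>0$ already forces $\eta<\frac{1-q}{3CL\sqrt{n}}<\frac{1}{L}$, since $C\ge 4$ and $0<q<1$.
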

\begin{proof}

Substituting the bound on $\sum_{t=0}^{T-1} M^{(t)}$ from Lemma~\ref{lem: avg Q} into \eqref{eq: cummulative inequality} yields
\begin{align*}
&\frac{\eta}{2}\sum_{t=0}^{T-1}
\EE{\norm{\nabla f(\xbart)}^2}
+
\frac{\eta - L\eta^2}{2}
\sum_{t=0}^{T-1}
\EE{\left\|
\frac{1}{n}\sum_{i=1}^n \nabla f_i(\zbf_i^{(t)})
\right\|^2}\\
&\leq 
f(\xbar^{(0)}) - f^*
+
\frac{L\eta^2\sigma^2T}{2n}
+
\frac{3\eta L^2C^2\maxx^2}{2P(1-q)^2}
+
\frac{3\eta^3L^2 C^2 n\sigma^2 T}{2P(1-q)^2}\\
&\qquad \qquad \qquad +
\frac{9\eta^3 L^2 C^2 n\zeta^2T}{2P(1-q)^2}
+
\frac{9\eta^3 L^2 C^2}{2P(1-q)^2}
\sum_{t=0}^{T-1}\EE{\norm{\nabfxbar}^2}.
\end{align*}
For \(\eta \le 1/L\), the second term on the left-hand side is nonnegative. Hence, dropping it and dividing both sides by \(\frac{\eta}{2}\), we obtain
\begin{align*}
\sum_{t=0}^{T-1}\EE{\norm{\nabfxbar}^2}
\leq
&\frac{2(f(\xbar^{(0)})-f^*)}{\eta}
+
\frac{L\eta\sigma^2T}{n}
   \\
& +
\frac{3L^2 C^2(\maxx)^2}{P(1-q)^2}
+
\frac{3\eta^2 L^2 C^2 n\sigma^2T}{P(1-q)^2}
\\
& +
\frac{9\eta^2 L^2 C^2 n\zeta^2 T}{P(1-q)^2} \\
&+
\frac{9\eta^2 L^2 C^2}{P(1-q)^2}
\sum_{t=0}^{T-1}\EE{\norm{\nabfxbar}^2}.
\end{align*}
Thus,
\begin{align*}
\left(1-\frac{9\eta^2 L^2 C^2}{P(1-q)^2}\right)
\sum_{t=0}^{T-1}\EE{\norm{\nabfxbar}^2}
\leq
\frac{2(f(\xbar^{(0)})-f^*)}{\eta}
+
\frac{L\eta\sigma^2T}{n}\\
+
\frac{3L^2 C^2(\maxx)^2}{P(1-q)^2} 
+
\frac{3\eta^2 L^2 C^2 n\sigma^2T}{P(1-q)^2}
+
\frac{9\eta^2 L^2 C^2 n\zeta^2 T}{P(1-q)^2}.
\end{align*}
\end{proof}

\subsection{Other Supporting Proofs}\label{appendix:Proofs}
\begin{proof}[Proof of Collorary~\ref{cor:SGP convergence}]
We have
 \[
C=\frac{4}{\delta^{\Delta B}},
\qquad
q=(1-\delta^{\Delta B})^{1/(\Delta B)}.
\]
Let
\[
x:=\delta^{\Delta B}.
\]
When \(x\ll 1\), a first-order Taylor expansion gives
\[
(1-x)^{1/(\Delta B)}
=
1-\frac{x}{\Delta B}+O(x^2).
\]
Hence
\[
1-q
=
1-(1-x)^{1/(\Delta B)}
=
\frac{x}{\Delta B}+O(x^2),
\]
which implies
\[
\frac{1}{(1-q)^2}
=
O\left(\frac{\Delta^2 B^2}{x^2}\right)
=
O\left(\frac{\Delta^2 B^2}{\delta^{2\Delta B}}\right).
\]
On the other hand,
\[
C^2=\frac{16}{\delta^{2\Delta B}}.
\]
Therefore,
\[
\frac{C^2}{(1-q)^2}
=
O\!\left(
\frac{\Delta^2 B^2}{\delta^{4\Delta B}}
\right).
\]
Substituting this into the expression for \(\underline{T}\), we obtain
\[
\underline{T}
=
\frac{24L^2C^2}{(1-q)^2\epsilon}S
=
O\left(\frac{\Delta^2 B^2}{\delta^{4\Delta B}}\right),
\]
where the big-\(O\) notation hides parameters independent of the mixing
matrices.   
\end{proof}
\begin{proof}[Proof of lemma~\ref{lem: uniform weight}]
Fix an iteration $t$ and a node $j$. Let $U_t(j) := N_t^{+}(j)\cup\{j\}$ and
$\delta_t(j) := \min_{i\in U_t(j)} W^{(t)}_{ij}$. Under column-stochastic mixing, we have
\[
1=\sum_{i\in U_t(j)} W^{(t)}_{ij} \ge \sum_{i\in U_t(j)} \delta_t(j) = |U_t(j)| \delta_t(j),
\]
which implies 
\[
\min_{i\in U_t(j)} W^{(t)}_{ij} = \delta_t(j) \le \frac{1}{|U_t(j)|}=\frac{1}{d_t^{+}(j)+1}.
\]
Equality holds if and only if $W^{(t)}_{ij}=1/(d_t^{+}(j)+1) $ for all $i\in U_t(j)$. 
Then taking the minimum over all $t$ and all $j$ yields
\[
\delta
= \min_{t\in\{0,\dots,B-1\}} \! \min_{j\in V} \delta_t(j) \leq \min_{t\in\{0,\dots,B-1\}} \ \min_{j\in V} \frac{1}{d_t^{+}(j)+1},  
\]
achievable under uniform weight assignment. 
\end{proof}

\begin{proof}[Proof of lemma~\ref{lem: equally split}]
Within each $B$-iteration window with a fixed activated graph $G_a$, activating the same link  multiple times is suboptimal, because it does not change $B$ or $\Delta$, but may decrease $\delta$ or increase the required communication schedule length. Therefore, without loss of generality, we assume that no link is activated more than once within a $B$-iteration window, i.e., 
\[
E_s \cap E_t = \emptyset, \qquad \forall\, s \neq t,\; s,t \in \{0,\dots,B-1\}.
\]
Under this assumption, the neighbor sets are disjoint across iterations, i.e.,
$N_s^{+}(j)\cap N_t^{+}(j)=\emptyset$ for all $s\neq t$. Consequently,
$d_a^{+}(j) = \sum_{t=0}^{B-1} d_t^{+}(j)$.
It follows that
\[
\max_{t\in \{0,\ldots,B-1\}} d_t^{+}(j)
\;\ge\;
\frac{1}{B}\sum_{t=0}^{B-1} d_t^{+}(j)
=
\frac{d_a^{+}(j)}{B},
\]
and hence
\[
\max_{t\in \{0,\ldots,B-1\}} \max_{j\in V} d_t^{+}(j)
\;\ge\;
\left\lceil \frac{D^{+}_a}{B}\right\rceil,
\]
where the ceiling is because $d_t^+(j)$ is an integer. 
Therefore, 
\[
\max \delta \hspace{-.15em}
=
\hspace{-.15em}
\min_t \min_j \frac{1}{d_t^{+}(j)+1}
\hspace{-.15em}
=
\hspace{-.15em}
\frac{1}{\max_t \max_j d_t^{+}(j)+1}
\hspace{-.15em}\le\hspace{-.15em}
\frac{1}{\left\lceil D^+_a/B\right\rceil+1}.
\]
\end{proof}

\begin{proof}[Proof of lemma~\ref{lem: B=1}]
First, we prove that 
\begin{align}
\sum_{t=0}^{B-1}\tau(G^{(t)})\ \ge\ \tau(G_a). \label{eq:sum tau}
\end{align}
Our cost model in Section~\ref{subsec:Cost Models} measures the communication cost of an activated graph by the length of the optimal (i.e., shortest) communication schedule. Let $E_{t,s}$ denote the set of links of $G^{(t)}$ scheduled in the $s$-th transmission slot of iteration $t$ and $\{E_{t,s}\}_{s=1}^{\tau(G^{(t)})}$ denote an optimal communication schedule for $G^{(t)}$. Then concatenating these schedules for $t=0,\ldots,B-1$ provides a feasible communication schedule $\bigcup_{t=0}^{B-1}\{E_{t,s}\}_{s=1}^{\tau(G^{(t)})}$ of length $\sum_{t=0}^{B-1}\tau(G^{(t)})$ for $G_a$. Therefore, the length of the optimal communication schedule for $G_a$, i.e., $\tau(G_a)$, must be no greater than $\sum_{t=0}^{B-1}\tau(G^{(t)})$.

Next, we prove that 
\begin{align}
B
\left(\left\lceil \frac{D^+_a}{B}\right\rceil + 1\right)^{4\Delta B} \geq (1+D^+_a)^{4\Delta}. \label{eq:B=1 proof part2}
\end{align}
By Bernoulli's inequality, for any $u\ge 0$ and integer $B\ge 1$,
$(1+u)^B \ge 1+Bu$. Taking $u=D_a^+/B$ gives
\[
\left(1+\frac{D_a^+}{B}\right)^B \ge 1+D_a^+,
\]
and hence
\[
\left(\left\lceil \frac{D^+_a}{B}\right\rceil + 1\right)^{4\Delta B}
\ \ge\
\left(1+\frac{D_a^+}{B}\right)^{4\Delta B} \ge (1+D_a^+)^{4\Delta}.
\]
Combining the above with $B\ge 1$ yields \eqref{eq:B=1 proof part2}.  

Combining \eqref{eq:sum tau} with \eqref{eq:B=1 proof part2} yields
\[
F(B)\ \ge\ \tau(G_a)\Delta^2 (1+D_a^+)^{4\Delta},
\]
which is exactly $F(1)$. 
\end{proof}

\begin{proof}[Proof of Lemma~\ref{lem:Dc bound - broadcast}]
Consider any vertex $e:=(i,j)\in E_a$ in the conflict graph $G_c$. 
Under the half-duplex constraint, $(i,j)$ conflicts with all links whose receiver
is $i$ and all links whose transmitter is $j$. Hence, the number of conflicts
from this condition is
\(
d_a^-(i) + d_a^+(j).
\)
Under the interference constraint, $(i,j)$ further conflicts with
any link $(k,l)\in E_a$ with $k\neq i$ such that $\{k,j\}\in E$ or $\{i,l\}\in E$. Let $N_G(i)$ denote the neighborhood of node $i$ in $G$ (excluding $i$). 
The number of such conflicts is bounded by
\[
\sum_{k\in N_G(j)\setminus\{i\}} d_a^{+}(k) + \sum_{l\in N_G(i)\setminus\{j\}} d_a^{-}(l) + (d_a^-(j)-1),
\]
where the last term counts other activated incoming links to $j$. 
Combining the two conditions, the degree of $e$ in $G_c$ satisfies
\begin{align}
d_{c}(e)
\;\le\;
&d_a^-(i) + d_a^+(j) + \sum_{k\in N_G(j)\setminus\{i\}} d_a^{+}(k) + \sum_{l\in N_G(i)\setminus\{j\}} d_a^{-}(l) \nonumber\\
&\hspace{14em} + d_a^-(j)-1.
\end{align}
We can bound each term as follows:
\[
d_a^-(i) \le D_a^-, \qquad d_a^+(j) \le D_a^+, \qquad d_a^-(j)-1 \leq D_a^-,
\]
\[
\sum_{k\in N_G(j)\setminus\{i\}} d_a^{+}(k)
\le |N_G(j)\setminus\{i\}|\, D_a^+ \le (D-1)\, D_a^+,
\]
\[
\sum_{l\in N_G(i)\setminus\{j\}} d_a^{-}(l)
\le |N_G(i)\setminus\{j\}|\, D_a^- \le (D-1)\, D_a^-.
\]
Therefore,
\begin{align}
d_{c}(e)
&\le
D_a^- + D_a^+ + (D-1)D_a^+ + (D-1) D_a^- + D_a^- \notag\\
&=
(D+1)\bigl(D_a^+ + D_a^-\bigr).
\end{align}
Taking the maximum over all \(e\in E_a\) yields the result. 
\end{proof}

\begin{proof}[Proof of Theorem~\ref{thm:graph design guarantee}]
We first prove the strong connectivity of the constructed $G_a$. After Step~2, $T$ must be a connected undirected graph. We will show that the way Step~3 orients the edges must turn this graph into a strongly connected directed graph. To this end, we will show that the orientation of edges in each bridge-connected component $C$ will turn it into a strongly connected subgraph. Let $r$ denote the root of the DFS tree for $C$. We observe the following:
\begin{enumerate}
\item There must be a directed path from $r$ to any vertex $v$ in $C$ along the DFS tree since all the tree edges are oriented away from $r$. 
\item For any tree edge $(u,v)$ with $pre(u)<pre(v)$, there must be a directed path from $v$ to $u$, as $u$ and $v$ must lie on a cycle in $C$ (since $C$ is 2-edge-connected) which must contain a back edge from some descendant of $v$ to some ancestor of $u$. 
\item Repeatedly applying the argument in (2) implies that there is a directed path from any vertex $v$ in $C$ back to $r$. 
\end{enumerate}
Combining observations (1) and (3) proves that our edge orientation turns each bridge-connected component into a strongly connected subgraph. Then connecting these subgraphs with bidirected links as in line~\ref{Graph Design:17} must yield a strongly connected graph. 

We now prove the bound in \eqref{eq:obj bound}. When $K=0$, the graph $T$ in line~\ref{Graph Design:8} is a tree (i.e., the minimum-degree spanning tree), for which every edge is a bridge and each bridge-connected component is just a single vertex. In this case, the directed graph $G_a$ produced by Step~3 is just the bidirected version of $T$. This implies that its maximum in/out-degree $D^-_a$/$D^+_a$ and diameter $\Delta$ satisfy 
\begin{align}
D^-_a = D^+_a = D^*,\qquad \Delta = \Delta^*. \label{eq:graph design proof - 1}
\end{align}
Plugging \eqref{eq:graph design proof - 1} into \eqref{eq: G_a obj} yields the right-hand side of \eqref{eq:obj bound}. 
The proof completes by noting that optimizing $K$ can further reduce the achieved value of \eqref{eq: G_a obj}. 
\end{proof}

\subsection{Additional Evaluation Results}\label{appendix:Additional Evaluation}
\begin{figure}[ht]
\vspace{-1em}
    \centering
    \begin{subfigure}[b]{0.23\textwidth}
        \centering
        \includegraphics[height = 1.3in,width=1\textwidth]{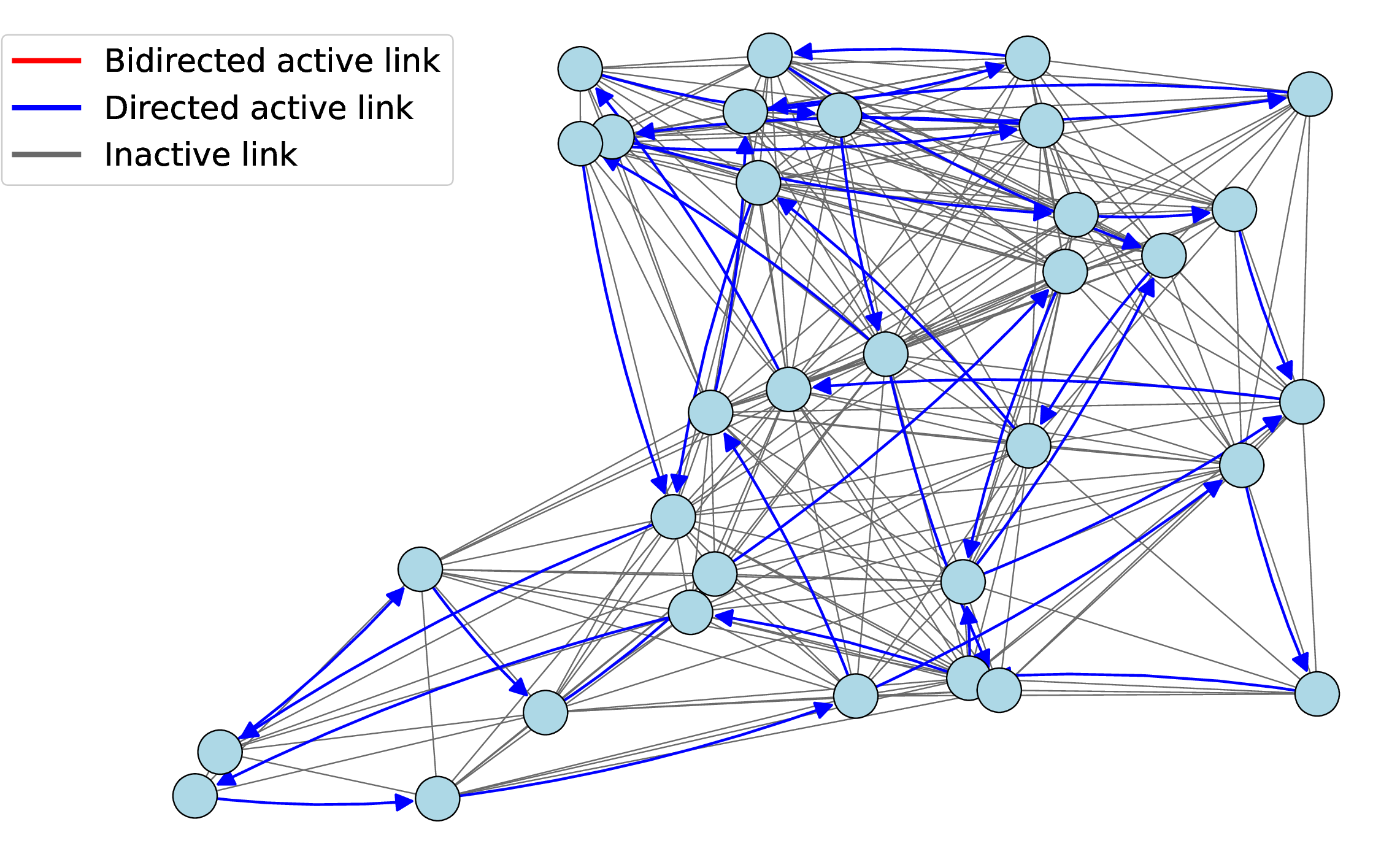}
        \vspace{-2em}
        \caption{RG}
        \label{fig:RG-a}
    \end{subfigure}
    \hfill
    \begin{subfigure}[b]{0.23\textwidth}
        \centering
        \includegraphics[height = 1.3in, width=1.1\textwidth]{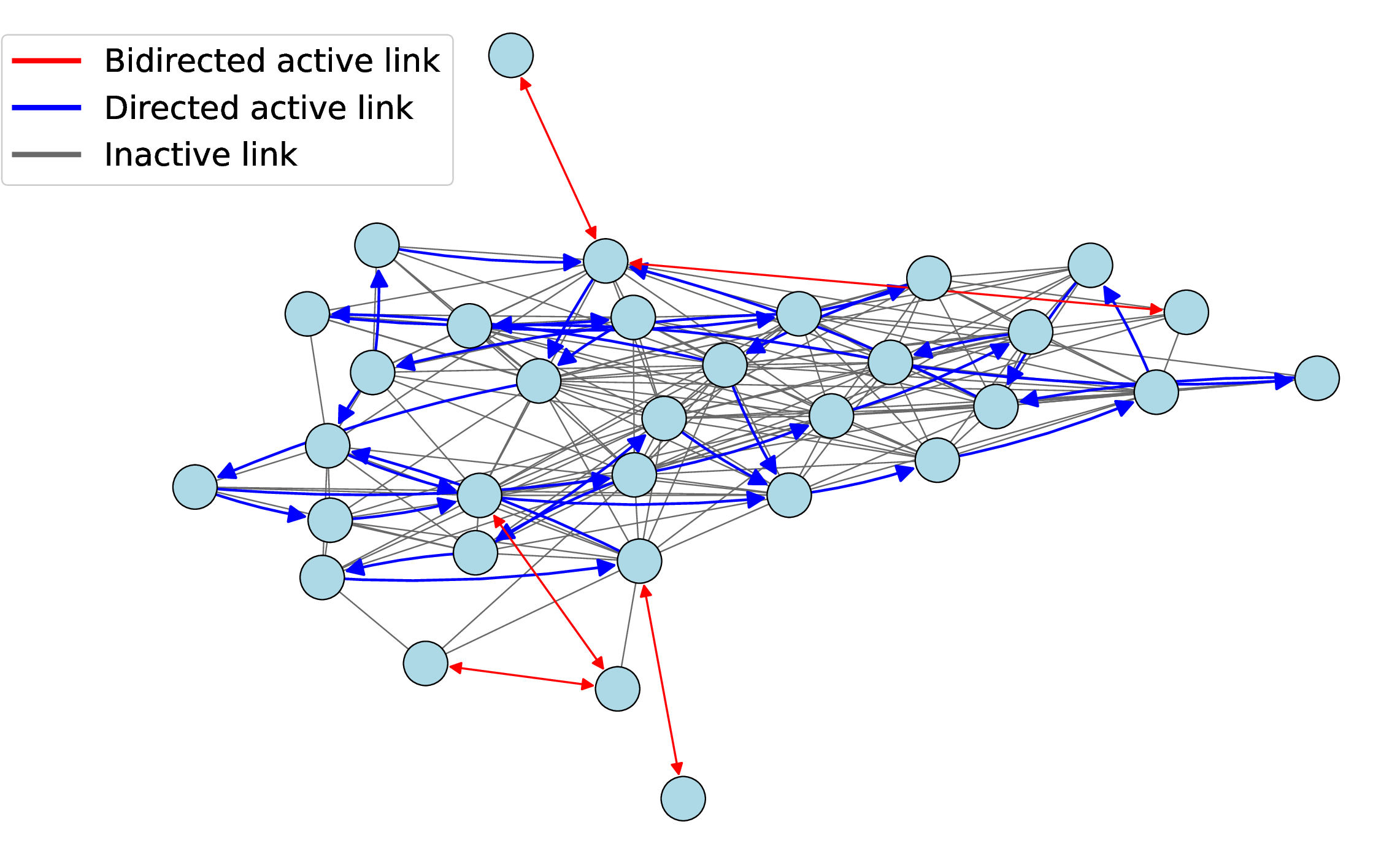}
         \vspace{-2em}
        \caption{Roofnet}
        \label{fig:Roofnet-b}
    \end{subfigure}
    \vspace{-0.75em}
    \caption{Designed communication graph without Step 4.}
    \label{fig:without step 4}
    \vspace{-1em}
\end{figure}            
As an ablation study to evaluate the value of Step~4 in Alg.~\ref{Alg:Graph Design}, we evaluate the learning performance of a variation of the proposed solution without this step (i.e., directly using $G_a$ obtained after Step~3 of Alg.~\ref{Alg:Graph Design} as the communication graph). Fig.~\ref{fig:without step 4} shows the resulting designs, which are much sparser than the communication graphs designed with Step~4 (Fig.~\ref{fig:graph_random} and \ref{fig:graph_roofnet}). The training results in Fig.~\ref{fig:results_cifar10_random_error_1}--\ref{fig:results_cifar10_roofnet_error_1} show that omitting the cost-preserving link augmentation in Step~4 causes a clear performance degradation in terms of both the logical convergence rate (in epochs) and the physical convergence rate (in slots). This observation validates the value of Step~4 in improving the actual learning performance, even though it is not guaranteed to improve the abstract design objective \eqref{eq: G_a obj}. 

\begin{figure}[ht]
    \centering    \centerline{\mbox{\includegraphics[height = 2.35in,width=1\linewidth]{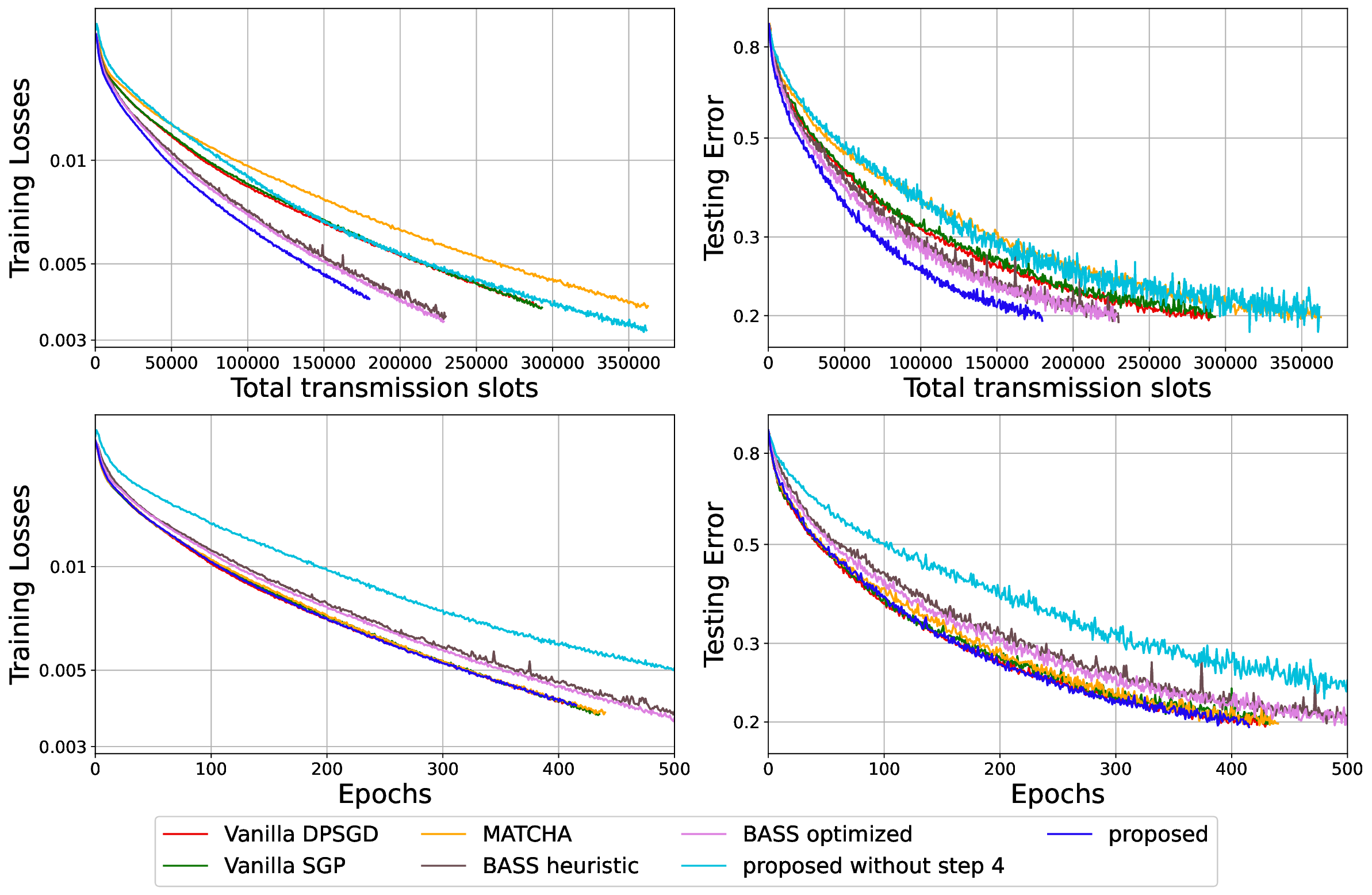}}}
    \vspace{-1em}
    \caption{Ablation study on RG.
        }

    \label{fig:results_cifar10_random_error_1}
    \vspace{-1em}
\end{figure}
\begin{figure}[ht]
    \centering    \centerline{\mbox{\includegraphics[height = 2.35in,width=1\linewidth]{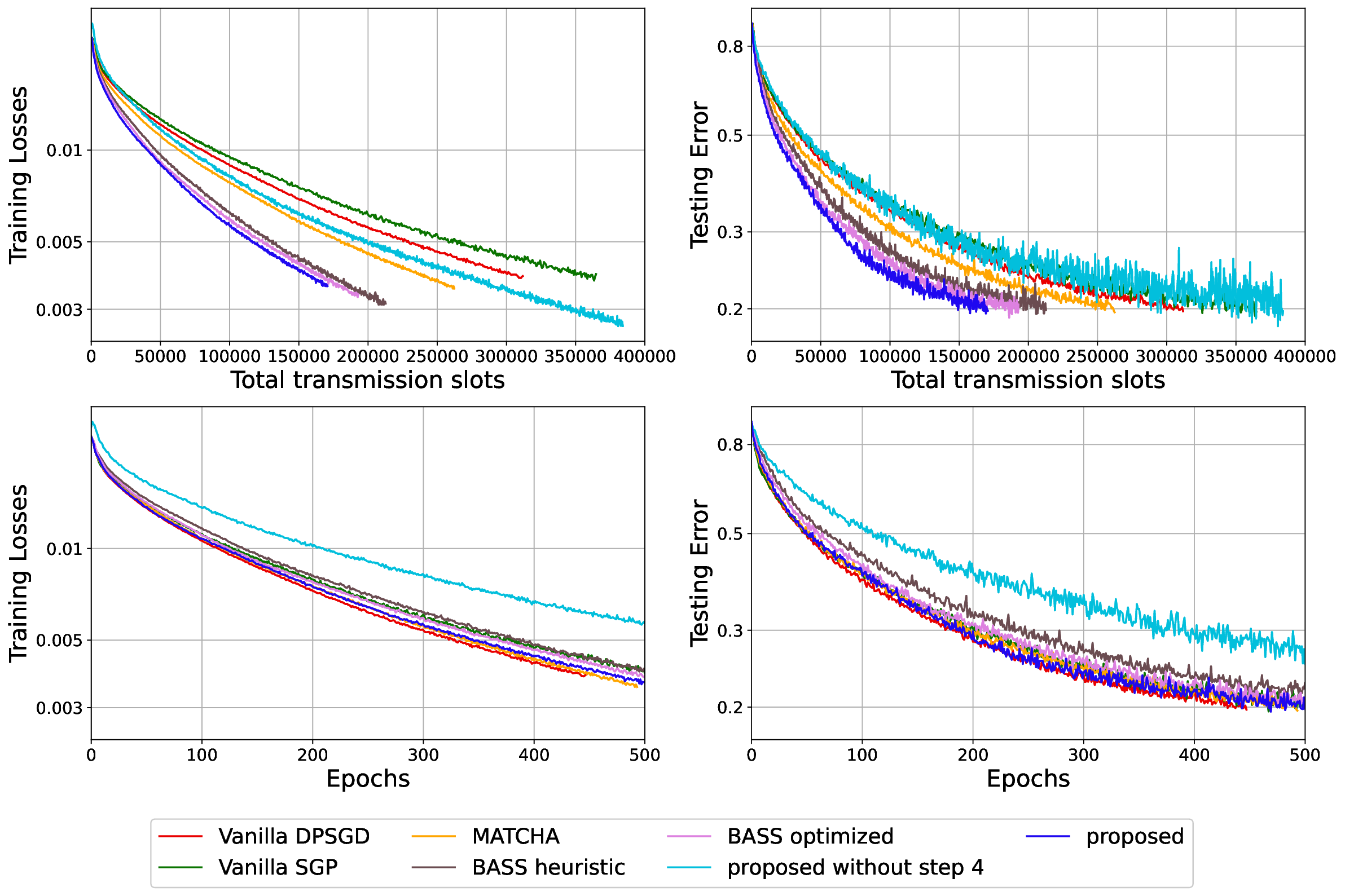}}}
    \vspace{-1em}
    \caption{Ablation study on Roofnet. 
        }

    \label{fig:results_cifar10_roofnet_error_1}
    \vspace{-.5em}
\end{figure}
\fi

\end{document}